\documentclass{article}

\usepackage{graphicx}
\usepackage{float}
\usepackage{caption}
\usepackage{subcaption}
\captionsetup[subfigure]{font=scriptsize}

\DeclareCaptionType{copyrightbox}
\usepackage{wrapfig}

\usepackage{natbib}
\usepackage{multibib} %
\newcites{sup}{Supplementary References}

\usepackage{algorithm}
\usepackage[noend]{algorithmic} %

\usepackage{hyperref}

\usepackage{array,multirow}

\usepackage{amsmath, amssymb, amsthm, bm, color}
\usepackage{notation}

\ifdefined \pgftikzinstalled
    \usepackage[accepted]{icml2013}
    \usepackage{tikz}
    \usetikzlibrary{external}
    \usepackage{pgfplots}
    \usepackage{pgfplotstable}
    \pgfplotsset{compat=newest}
\else
    \usepackage[accepted]{icml2013}
    \usepackage{tikzexternal}
\fi

\tikzexternalize[mode=list and make]
\tikzsetexternalprefix{externalized/}

\icmltitlerunning{Block-Coordinate Frank-Wolfe Optimization for Structural SVMs}

\begin{document}

\twocolumn[
\icmltitle{Block-Coordinate Frank-Wolfe Optimization for Structural SVMs}

\icmlauthor{Simon Lacoste-Julien$^*\!$}{INRIA - SIERRA project-team, {\'E}cole Normale Sup{\'e}rieure, Paris, France\!\!}%
\icmladdress{}
\vspace*{-0.6em}

\icmlauthor{Martin Jaggi$^*\!$}{CMAP, {\'E}cole Polytechnique, Palaiseau, France}%
\icmladdress{}
\vspace*{-0.6em}

\icmlauthor{Mark Schmidt}{INRIA - SIERRA project-team, {\'E}cole Normale Sup{\'e}rieure, Paris, France}%
\icmladdress{}
\vspace*{-0.6em}

\icmlauthor{Patrick Pletscher}{Machine Learning Laboratory, ETH Zurich, Switzerland}%
\icmladdress{{\small $^*$ Both authors contributed equally.}}

\icmlkeywords{structured output prediction, Frank-Wolfe, block-coordinate, structured SVM, machine learning, ICML}

\vspace*{1em}
]

\begin{abstract}
We propose a randomized block-coordinate variant of the classic Frank-Wolfe algorithm for convex optimization with block-separable constraints. Despite its lower iteration cost, we show that it achieves a similar convergence rate in duality gap as the full Frank-Wolfe algorithm.
We also show that, when applied to the dual structural support vector machine (SVM) objective, this yields an online algorithm that has the same low iteration complexity as primal stochastic subgradient methods.
However, unlike stochastic subgradient methods, the block-coordinate Frank-Wolfe algorithm allows us to compute the \emph{optimal} step-size and yields a computable duality gap guarantee.
Our experiments indicate that this simple algorithm outperforms competing structural SVM solvers.
\end{abstract}
\section{Introduction}

Binary SVMs are amongst the most popular classification methods, and this
has motivated substantial interest in optimization solvers that are tailored
to their specific problem structure.  However, despite their wider applicability,
there has been much less work on solving the optimization problem associated with
\emph{structural} SVMs, which are the generalization of SVMs to structured outputs
like graphs and other combinatorial objects~\citep{Taskar2003, Tsochantaridis2005}.
This seems to be due to the difficulty of dealing with the exponential number of constraints
in the primal problem, or the exponential number of variables in the dual problem.
Indeed, because they achieve an $\tilde{O}(1/\varepsilon)$ convergence rate while only requiring a single call to the so-called \emph{maximization oracle}
on each iteration, basic stochastic subgradient methods are still widely used for training
structural SVMs~\citep{Ratliff:2007subgradient,ShalevShwartz:2010cg}. However, these methods are often frustrating to use for practitioners,
because their performance is very sensitive to the sequence of step sizes, and because
it is difficult to decide when to terminate the iterations.

To solve the dual structural SVM problem, in this paper we consider the
Frank-Wolfe~\yrcite{Frank:1956vp} algorithm,
which has seen a recent surge
of interest in machine learning and signal processing~\citep{Mangasarian:1995wa,Clarkson:2010hv,Jaggi:2011ux,Jaggi:2013wg,bach12herding},
including in the context of binary SVMs~\citep{GartnerJaggi:2009,Ouyang:2010vc}.
A key advantage of this algorithm is that the iterates
are \emph{sparse}, and we show that this allows us to efficiently apply it
to the dual structural SVM objective even though there are an exponential number of variables.
A second key advantage of this algorithm is that the iterations only require optimizing
linear functions over the constrained domain, and we show that \emph{this is equivalent to
the maximization oracle} used by subgradient and cutting-plane methods~\citep{Joachims:2009ex, Teo:2010bundle}. Thus, the Frank-Wolfe algorithm
has the same wide applicability as subgradient methods,
and can be applied to problems such as
low-treewidth graphical models~\citep{Taskar2003}, graph matchings~\citep{Caetano:09graphMatching},
and associative Markov networks~\citep{taskar04thesis}. In contrast, other approaches must
use more expensive (and potentially intractable) oracles such as computing marginals over labels~\citep{Collins2008,Zhang:2011:ATM3net} or doing a Bregman projection onto the space of structures~\citep{Taskar06extrag}.
Interestingly, for structural SVMs we also show
that existing batch subgradient and cutting-plane methods are \emph{special cases} of Frank-Wolfe algorithms, and this leads to
stronger and simpler $O(1/\varepsilon)$ convergence rate guarantees for these existing algorithms. %

As in other batch structural SVM solvers like cutting-plane methods~\citep{Joachims:2009ex, Teo:2010bundle} and the excessive gap technique~\citep{Zhang:2011:ATM3net} (see Table~\ref{tab:rates} at the end for an overview), each Frank-Wolfe iteration unfortunately requires calling the appropriate oracle once for \emph{all} training examples, unlike the single oracle call needed by stochastic subgradient methods. This can be prohibitive for data sets with a large number of training examples.
To reduce this cost, we propose a novel randomized block-coordinate version of the Frank-Wolfe algorithm for problems with block-separable constraints. We show that this algorithm still achieves the $O(1/\varepsilon)$ convergence rate of the full Frank-Wolfe algorithm, and in the context of structural SVMs, it only requires a single call to the maximization oracle. Although the stochastic subgradient and the novel block-coordinate Frank-Wolfe algorithms have a similar iteration cost and theoretical convergence rate for solving the structural SVM problem, the new algorithm has several important advantages for practitioners:\vspace{-3mm}
\begin{itemize}
\item The \emph{optimal} step-size can be efficiently computed in closed-form, hence no step-size needs to be selected.\vspace{-2mm}
\item The algorithm yields a \emph{duality gap} guarantee, and (at the cost of computing the primal objective) we can compute the duality gap as a proper stopping criterion.\vspace{-2mm}
\item The convergence rate holds even when using \emph{approximate} maximization oracles.\vspace{-2mm}
\end{itemize}
Further, our experimental results show that the optimal step-size leads to a significant advantage during the first few passes through the data, and a systematic (but smaller) advantage in later passes.

\vspace{-1mm}
\section{Structural Support Vector Machines} \label{sec:setup}

We first briefly review the standard convex optimization setup for structural SVMs~\citep{Taskar2003,Tsochantaridis2005}. In structured prediction, the goal is to predict a structured object $\outputvarv \in \outputdomain(\inputvarv)$ (such as a sequence of tags) for a given input $\inputvarv \in \inputdomain$. In the standard approach, a structured feature map $\featuremapv: \inputdomain \times \outputdomain \rightarrow \R^d$ encodes the relevant information for input/output pairs, and a linear classifier with parameter $\weightv$ is defined by $h_{\weightv}(\inputvarv) = \argmax_{\outputvarv\in\outputdomain(\inputvarv)}\langle\weightv, \featuremapv(\inputvarv,\outputvarv) \rangle$.
Given a labeled training set $\data = \{(\inputvarv_i,\outputvarv_i)\}_{i=1}^n$, $\weightv$ is estimated by solving\vspace{-1mm}
\begin{align}
    \label{eq:svmstruct_nslack_primal}
    \min_{\weightv,\, \bm{\xi}} \quad &   \frac{\regularizerweight}{2}\norm{\weightv}^2 +
    \frac1n \sum_{i=1}^n \xi_i\\[-4mm]
    \text{s.t.} \quad &
    \langle \weightv, \featuremapdiffv_i(\outputvarv) \rangle
    \geq \errorterm(\outputvarv_i,\outputvarv) - \xi_i \quad  \forall i, \, \forall \outputvarv \in \overbrace{
    \outputdomain(\inputvarv_i)}^{=: \outputdomain_i}, 
      \notag
\end{align}
where $\featuremapdiffv_i(\outputvarv):= \featuremapv(\inputvarv_i,\outputvarv_i) - \featuremapv(\inputvarv_i,\outputvarv)$, and $\errorterm_i(\outputvarv) := \errorterm(\outputvarv_i,\outputvarv)$ denotes the task-dependent structured error of predicting output $\outputvarv$ instead of the observed output $\outputvarv_i$ (typically a Hamming distance between the two labels). The slack variable~$\xi_i$ measures the surrogate loss for the $i$-th datapoint
 and $\regularizerweight$ is the regularization parameter.
The convex problem~\eqref{eq:svmstruct_nslack_primal} is what \citet[Optimization Problem 2]{Joachims:2009ex} call the $n$-slack structural SVM with margin-rescaling. A variant with \emph{slack-rescaling} was proposed by \citet{Tsochantaridis2005}, which is equivalent to our setting if we replace all vectors $\featuremapdiffv_i(\outputvarv)$ by $\errorterm_i(\outputvarv) \featuremapdiffv_i(\outputvarv)$.

\paragraph{Loss-Augmented Decoding.}
Unfortunately, the above problem can have an exponential number of constraints due to the combinatorial nature of $\outputdomain$. We can replace
the $\sum_i|\outputdomain_i|$ linear constraints with $n$ \emph{piecewise-linear} ones by defining the structured hinge-loss:\vspace{-1mm}
\begin{equation}\label{eq:subproblem_loss_augm}
\text{\parbox[t]{3em}{`max \\oracle'}} \quad %
  \tilde{H}_i(\weightv) := \max_{\outputvarv\in\outputdomain_i} \
    \underbrace{%
    \errorterm_i(\outputvarv)
    - \langle \weightv,
    \featuremapdiffv_i(\outputvarv)
    \rangle
    }_{=:\, H_i(\outputvarv;\weightv)}.\vspace{-1mm}
\end{equation}
The constraints in~\eqref{eq:svmstruct_nslack_primal} can thus be replaced with the non-linear ones $\xi_i \geq \tilde{H}_i(\weightv)$. The computation of the structured hinge-loss for each $i$ amounts to finding the most `violating' output $\outputvarv$ for a given input~$\inputvarv_i$, a task which can be carried out efficiently in many structured prediction settings (see the introduction). %
This problem is called the \emph{loss-augmented decoding} subproblem. In this paper, we only assume access to an efficient solver for this subproblem, and we call such a solver a \emph{maximization oracle}.
The equivalent non-smooth unconstrained formulation of \eqref{eq:svmstruct_nslack_primal} is:
\vspace{-2mm}
\begin{equation}
    \label{eq:svmstruct_nslack_primal_nonsmooth}
    \min_{\weightv} \quad \frac{\regularizerweight}{2}\norm{\weightv}^2 +
    \frac1n \sum_{i=1}^n \tilde{H}_i(\weightv) .\vspace{-2mm}
\end{equation}
Having a maximization oracle allows us to apply subgradient methods to this problem~\citep{Ratliff:2007subgradient}, as a subgradient of $\tilde{H}_i(\weightv)$ with respect to $\weightv$ is $-\featuremapdiffv_i(\outputvarv_i^*)$, where~$\outputvarv_i^*$ is any maximizer of the loss-augmented decoding subproblem \eqref{eq:subproblem_loss_augm}.

\paragraph{The Dual.}%
The Lagrange dual of the above $n$-slack-formulation (\ref{eq:svmstruct_nslack_primal}) has $m := \sum_i |\outputdomain_i|$ variables or potential `support vectors'.
Writing $\dualvar_i(\outputvarv)$ for the dual variable associated with the training example $i$ and potential output $\outputvarv \in \outputdomain_i$, the dual problem is given by %
\begin{align}
    \label{eq:svmstruct_nslack_dual} %
    \min_{\substack{ \dualvarv\in\R^{m} \\  \dualvarv \geq 0}} \quad  f(\dualvarv) \;:=&  \;\;
    \frac{\lambda}{2}
    \big\| A\dualvarv \big\|^2
    - \bv^T\dualvarv
    \\[-2mm]
    \text{s.t.} \quad &  \;
      \textstyle\sum_{\outputvarv \in \outputdomain_i}  \dualvar_i(\outputvarv) = 1 ~~~\forall i\in[n] \ , \notag 
\end{align}
where the matrix $A\in\R^{d\times m}$ consists of the $m$ columns $A := \SetOf{\frac1{\lambda n} \featuremapdiffv_i(\outputvarv) \in\R^d}{i\in[n],\outputvarv \in \outputdomain_i}$, and the vector $\bv \in \R^m$ is given by 
$\bv:= \left(\frac1n \errorterm_i(\outputvarv) \right)$$_{i\in[n],\outputvarv\in\outputdomain_i}$. %
Given a dual variable vector $\dualvarv$, we can use the Karush-Kuhn-Tucker optimality conditions  to obtain the corresponding primal variables~$
\weightv = A\dualvarv  = \sum_{i,\,\outputvarv \in \outputdomain_i} \dualvar_i(\outputvarv)  \frac{\featuremapdiffv_i(\outputvarv)}{\lambda n}
$, see Appendix~\ref{sec:app_duals}.
The gradient of $f$ then takes the simple form $\nabla f(\dualvarv) = \lambda A^TA\dualvarv - \bv = \lambda A^T\weightv - \bv$; its \mbox{$(i,\outputvarv)$-th} component is $-\frac{1}{n} H_i(\outputvarv; \weightv)$, cf.~\eqref{eq:subproblem_loss_augm}. 
Finally, note that the domain $\domain \subset \R^m$ of~\eqref{eq:svmstruct_nslack_dual} is the product of $n$ probability simplices, $\domain := \simplex_{|\outputdomain_1|}\times\mathellipsis\times\simplex_{|\outputdomain_n|}$.
\section{The Frank-Wolfe Algorithm}%
\label{sec:FW}

\setlength{\textfloatsep}{10pt}%

\begin{algorithm}[b!]
  \caption{Frank-Wolfe on a Compact Domain}
  \label{alg:FW}
\begin{algorithmic}
  \STATE Let $\dualvarv^{(0)} \in \domain$
  \FOR{$k=0\dots K$}
  \STATE Compute $\sv := \displaystyle\argmin_{\sv'\in \domain} \left\langle \sv', \nabla f(\dualvarv^{(k)}) \right\rangle$
  \STATE Let $\stepsize := \frac2{k+2}$,\  {\small or optimize $\stepsize$ by line-search}
  \STATE Update $\dualvarv^{(k+1)}:= (1-\stepsize)\dualvarv^{(k)}+\stepsize\sv$
  \ENDFOR
\end{algorithmic}
\end{algorithm}

We consider the convex optimization problem $\min_{\dualvarv \in \domain} \, f(\dualvarv)$, where the convex feasible set $\domain$ is \emph{compact} and the convex objective $f$ is \emph{continuously differentiable}.
The Frank-Wolfe algorithm~\yrcite{Frank:1956vp} (shown in Algorithm~\ref{alg:FW}) is an iterative optimization algorithm for such problems that only requires optimizing \emph{linear} functions over $\domain$, and thus has wider applicability than projected gradient algorithms, which require optimizing a quadratic function over~$\domain$.
At every iteration, a feasible search corner $\sv$ is first found by minimizing over $\domain$ the \emph{linearization} of~$f$ at the current iterate $\dualvarv$ (see picture in inset).
\begin{wrapfigure}{r}{4.7cm}\vspace{-1em}
\includegraphics[width=0.95\linewidth]{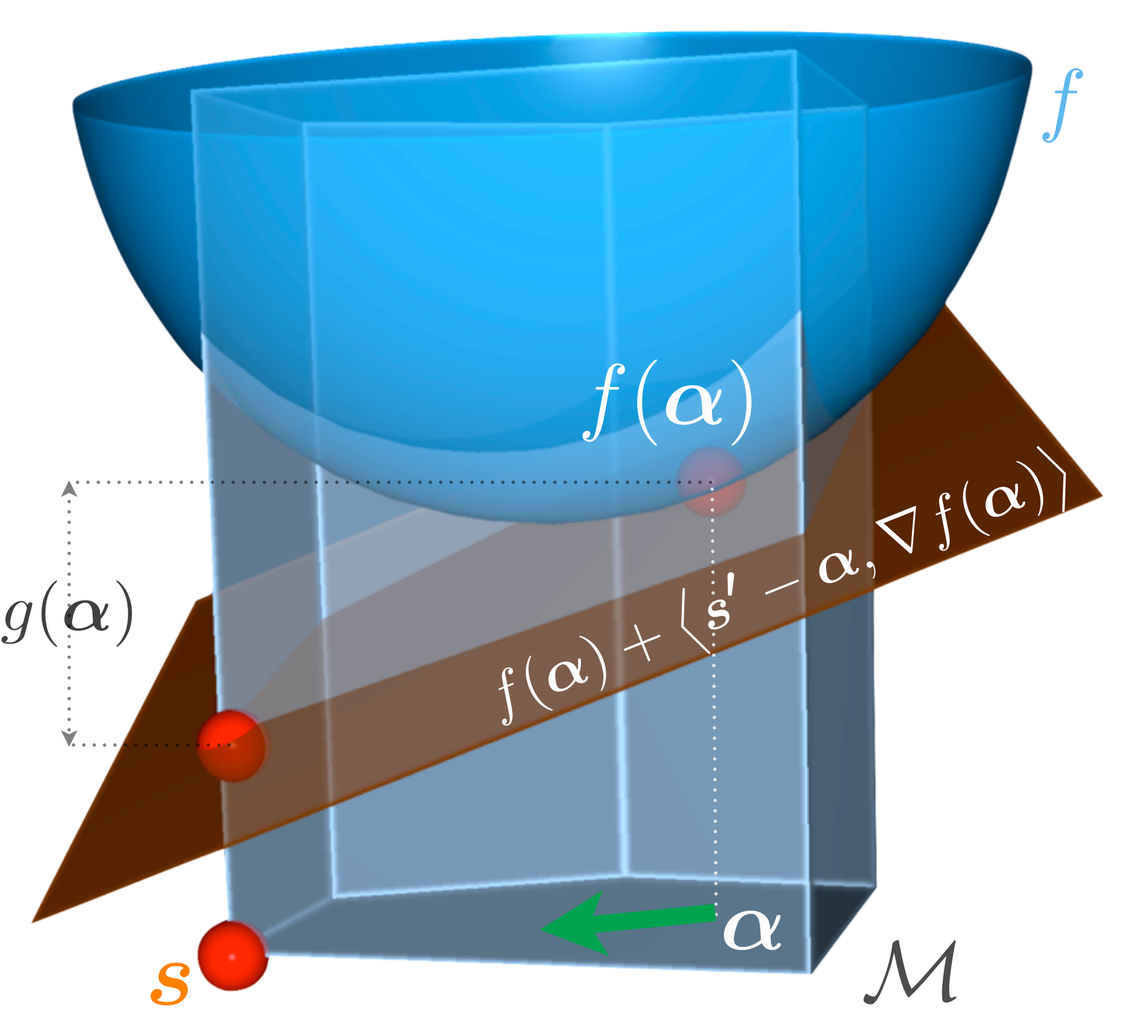}\vspace{-1em}
\end{wrapfigure}
The next iterate is then obtained as a convex combination of $\sv$ and the previous iterate, with step-size~$\stepsize$.
These simple updates yield two interesting properties. First, every iterate $\dualvarv^{(k)}$ can be written as a convex combination of the starting point~$\dualvarv^{(0)}$ and the search corners~$\sv$ found previously. The parameter $\dualvarv^{(k)}$ thus has a sparse representation, which makes the algorithm suitable even for cases where the dimensionality of $\dualvarv$ is exponential. Second, since~$f$ is convex, the minimum of the linearization of $f$ over~$\domain$ immediately gives a lower bound on the value of the yet unknown optimal solution $f(\dualvarv^*)$.
Every step of the algorithm thus computes for free the following `linearization duality gap' defined for any feasible point $\dualvarv \in \domain$ (which is in fact a special case of the Fenchel duality gap as explained in Appendix~\ref{sec:Fenchel_gap_equivalence}):\vspace{-1mm}
\begin{equation}\label{eq:duality_gap}
  g(\dualvarv) := \max_{\sv' \in \domain} \,\langle \dualvarv - \sv', \nabla f(\dualvarv) \rangle
  = \langle \dualvarv - \sv, \nabla f(\dualvarv) \rangle .\vspace{-1mm}
\end{equation}
As $g(\dualvarv)\ge f(\dualvarv)-f(\dualvarv^*)$ by the above argument, $\sv$
thus readily gives at each iteration the current duality gap as a
\emph{certificate} for the current approximation quality \citep{Jaggi:2011ux,Jaggi:2013wg}, allowing us to monitor the convergence, and more importantly to choose the theoretically sound stopping criterion $g(\dualvarv^{(k)}) \le \varepsilon$.%

In terms of convergence, it is known that after $O(1/\varepsilon)$ iterations, Algorithm~\ref{alg:FW} obtains an $\varepsilon$-approximate solution \citep{Frank:1956vp,Dunn:1978di} as well as a guaranteed $\varepsilon$-small duality gap \citep{Clarkson:2010hv,Jaggi:2013wg}, along with a certificate to~\eqref{eq:duality_gap}.
For the convergence results to hold, the internal linear subproblem does not need to be solved exactly, but only to some  error.
We review and generalize the convergence proof in Appendix \ref{sec:app_convergence_proof}.
The constant hidden in the~$O(1/\varepsilon)$ notation is the \emph{curvature constant} $\Cf$, an affine invariant quantity 
measuring the maximum deviation of $f$ from its linear approximation over $\domain$ (it yields a weaker form of Lipschitz assumption on the gradient, %
see e.g. Appendix~\ref{sec:app_curvature} for a formal definition).

\section{Frank-Wolfe for Structural SVMs}
\label{sec:FW_SVM}

\begin{algorithm}[b!]
    \caption{Batch Primal-Dual Frank-Wolfe Algorithm for the Structural SVM} %
    \label{alg:FW_SVM}
\begin{algorithmic}
        \STATE Let $\weightv^{(0)}:= \0,~~\ell^{(0)}:=0$
       \FOR{$k=0\dots K$}
          \FOR{$i=1\dots n$}
                \STATE Solve $\outputvarv_i^* := \displaystyle\argmax_{\outputvarv\in\outputdomain_i} \ H_i(\outputvarv;\weightv^{(k)})$ cf.~\eqref{eq:subproblem_loss_augm} %
           \ENDFOR %
           \STATE Let $\weightv_{\sv} := {\displaystyle\sum_{i=1}^n} \frac1{\lambda n} \featuremapdiffv_i(\outputvarv_i^*)$
                 {\small ~and $\ell_{\sv} := \frac1n \displaystyle\sum_{i=1}^n \errorterm_i(\outputvarv_i^*)$}
           \STATE {\small Let $\stepsize := \frac{\lambda (\weightv^{(k)}-\weightv_{\sv})^T\weightv^{(k)} - \ell^{(k)} + \ell_{\sv} }{ \lambda \|\weightv^{(k)}-\weightv_{\sv}\|^2 }$ ~and clip  to $[0,1]$}
           \STATE Update $\weightv^{(k+1)}:= (1-\stepsize)\weightv^{(k)}+\stepsize\, \weightv_{\sv}$
           \STATE {\small~~~~~~~and~ $\ell^{(k+1)}:= (1-\stepsize)\ell^{(k)}+\stepsize\, \ell_{\sv}$}
        \ENDFOR
\end{algorithmic}
\end{algorithm}

Note that classical algorithms like the projected gradient method cannot be tractably applied to the dual of the structural SVM problem~\eqref{eq:svmstruct_nslack_dual}, due to the large number of dual variables.
In this section, we explain how the Frank-Wolfe method (Algorithm~\ref{alg:FW}) can be efficiently applied to this dual problem, and discuss its relationship to other algorithms.
The main insight here is to notice that the linear subproblem employed by Frank-Wolfe is actually directly equivalent to the loss-augmented decoding subproblem \eqref{eq:subproblem_loss_augm} for each datapoint, which can be solved efficiently
(see Appendix~\ref{ssec:FW_equivalent_decoding} for details).
Recall that the optimization domain for the dual variables $\dualvarv$ is the product of~$n$~simplices, $\domain = \simplex_{|\outputdomain_1|}\times\mathellipsis\times\simplex_{|\outputdomain_n|}$. Since each simplex consists of a potentially exponential number $|\outputdomain_i|$ of dual variables, we cannot maintain a dense vector~$\dualvarv$ during the algorithm. %
However, as
mentioned in Section~\ref{sec:FW}, each iterate $\dualvarv^{(k)}$ of the
Frank-Wolfe algorithm is a sparse convex combination of the previously visited
corners $\sv$ and the starting point $\dualvarv^{(0)}$, and so we only need to
maintain the list of previously seen solutions to the loss-augmented decoding
subproblems to keep track of the non-zero coordinates of $\dualvarv$, avoiding
the problem of its exponential size. Alternately, if we do not use kernels, we can avoid the quadratic explosion of the number of operations needed in the dual
by \emph{not} explicitly maintaining $\dualvarv^{(k)}$, but instead
maintaining the corresponding \emph{primal} variable $\weightv^{(k)}$.

\paragraph{A Primal-Dual Frank-Wolfe Algorithm for the Structural SVM Dual.}%
Applying %
Algorithm~\ref{alg:FW} with line search to the dual of the structural SVM~\eqref{eq:svmstruct_nslack_dual}, but only maintaining the corresponding \emph{primal} primal iterates $\weightv^{(k)}:= A\dualvarv^{(k)}$, we obtain Algorithm~\ref{alg:FW_SVM}. Note that the Frank-Wolfe search corner $\sv = (\unit^{\outputvarv_1^*},\dots,\unit^{\outputvarv_n^*})$, which is obtained by solving the loss-augmented subproblems, yields the update $\weightv_{\sv} = A\sv$. We use the natural starting point $\dualvarv^{(0)} := (\unit^{\outputvarv_1},\dots,\unit^{\outputvarv_n})$ which yields $\weightv^{(0)} = \0$ as $\featuremapdiffv_i(\outputvarv_i) = \0$ $\forall i$.

\vspace{-1mm}
\paragraph{The Duality Gap.}
The duality gap \eqref{eq:duality_gap} for our structural SVM dual formulation \eqref{eq:svmstruct_nslack_dual} is given by\vspace{-1mm}
\[
\begin{array}{rl}
  g(\dualvarv) :=& \displaystyle\max_{\sv' \in \domain} \,\langle \dualvarv - \sv', \nabla f(\dualvarv) \rangle \\
    =& (\dualvarv - \sv)^T(\lambda A^TA\dualvarv - \bv) \\
    =& \lambda (\weightv-A\sv)^T\weightv - \bv^T\dualvarv + \bv^T\sv
   \ ,\vspace{-1mm}
\end{array}
\]
where~$\sv$ is an \emph{exact} minimizer of the linearized problem given at
the point $\dualvarv$. This (Fenchel) duality gap turns out to be the same as the Lagrangian duality gap here (see Appendix~\ref{sec:gap_comparison}), and gives a direct handle on the suboptimality of $\weightv^{(k)}$ for the primal problem~\eqref{eq:svmstruct_nslack_primal_nonsmooth}. Using $\weightv_{\sv} :=
A\sv$ and $\ell_{\sv} := \bv^T\sv$, we observe that the gap is efficient to compute given the \emph{primal}
variables $\weightv := A\dualvarv$ and $\ell := \bv^T\dualvarv$, which are maintained during the run of Algorithm~\ref{alg:FW_SVM}. Therefore, we can use the duality gap $g(\dualvarv^{(k)}) \le \varepsilon$ as a proper stopping criterion.

\paragraph{Implementing the Line-Search.}
Because the objective of the structural SVM dual~\eqref{eq:svmstruct_nslack_dual} is simply a quadratic function in $\dualvarv$, the optimal step-size for any given candidate search point $\sv\in\domain$ can be obtained \emph{analytically}. Namely, $\stepsize_{LS} := \argmin_{\stepsize\in[0,1]} \, f\left(\dualvarv+\stepsize\big(\sv - \dualvarv\big)\right)$ is obtained by setting the derivative of this univariate quadratic function in~$\stepsize$ to zero, which here (before restricting to $[0,1]$) gives 
$
\stepsize_{opt} := \frac{\langle \dualvarv - \sv, \nabla f(\dualvarv) \rangle}{\lambda\norm{A(\dualvarv-\sv)}^2}
= \frac{g(\dualvarv)}{\lambda\norm{\weightv-\weightv_{\sv}}^2}
$ %
 (used in Algorithms~\ref{alg:FW_SVM} and~\ref{alg:FW_product_SVM}).

\vspace{-1mm}
\paragraph{Convergence Proof and Running Time.}
In the following, we write $R$ for the maximal length of a difference feature
vector, i.e. $R \! := \! \max_{i\in[n],\outputvarv \in \outputdomain_i} \! \norm{\featuremapdiffv_i(\outputvarv)}_2$, and we write the maximum error as $\Lmax:= \max_{i,\outputvarv } \errorterm_i(\outputvarv)$.
By bounding the curvature constant $\Cf$ for the dual SVM objective~\eqref{eq:svmstruct_nslack_dual}, we can now directly apply the known convergence results for the standard Frank-Wolfe algorithm to obtain the following \emph{primal-dual} rate (proof in Appendix~\ref{ssec:app_SVM_proofs}): 
\begin{theorem}%
\label{thm:convergence_FW_SVM}
Algorithm~\ref{alg:FW_SVM} obtains an $\varepsilon$-approximate solution to the structural SVM dual problem~\eqref{eq:svmstruct_nslack_dual} and duality gap $g(\dualvarv^{(k)}) \le\varepsilon$ after at most $O\left(\frac{R^2}{\lambda\varepsilon}\right)$ iterations, where each iteration costs $n$ oracle calls.
\end{theorem}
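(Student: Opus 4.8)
The plan is to reduce the statement to the known general Frank–Wolfe guarantee (the $O(\Cf/\varepsilon)$ rate reviewed in the convergence appendix) by exhibiting a concrete bound on the curvature constant $\Cf$ of the dual objective $f$ in terms of $R$ and $\lambda$. First I would argue that Algorithm~\ref{alg:FW_SVM} is exactly Algorithm~\ref{alg:FW} with line-search applied to~\eqref{eq:svmstruct_nslack_dual}, merely re-expressed through the maintained primal quantities $\weightv^{(k)} = A\dualvarv^{(k)}$ and $\ell^{(k)} = \bv^T\dualvarv^{(k)}$. Indeed the linear subproblem $\argmin_{\sv'\in\domain}\langle \sv', \nabla f(\dualvarv^{(k)})\rangle$ separates over the $n$ simplices, and since the $(i,\outputvarv)$-th component of $\nabla f$ is $-\frac1n H_i(\outputvarv;\weightv^{(k)})$, each block puts all its mass on a maximizer of $H_i(\,\cdot\,;\weightv^{(k)})$; the returned corner $\sv = (\unit^{\outputvarv_1^*},\dots,\unit^{\outputvarv_n^*})$ is thus precisely the output of the $n$ loss-augmented decoding calls, and the displayed step size is the clipped closed-form minimizer $\stepsize_{opt}$. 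Hence the generic theorem applies verbatim and it only remains to supply its constant.

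The core step is to bound $\Cf$. Because $f(\dualvarv)=\frac{\lambda}{2}\norm{A\dualvarv}^2-\bv^T\dualvarv$ is quadratic with constant Hessian $\lambda A^TA$, its deviation from any linearization is \emph{exact}: for $\dualvarv' = \dualvarv + \stepsize(\sv-\dualvarv)$ we have $f(\dualvarv') - f(\dualvarv) - \stepsize\langle \sv-\dualvarv,\nabla f(\dualvarv)\rangle = \frac{\lambda\stepsize^2}{2}\norm{A(\sv-\dualvarv)}^2$. Substituting into the definition of the curvature constant cancels the $\stepsize$ dependence and gives $\Cf = \sup_{\dualvarv,\sv\in\domain}\lambda\norm{A(\sv-\dualvarv)}^2$. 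I would then bound $\norm{A(\sv-\dualvarv)}$ using the product-of-simplices structure $\domain = \simplex_{|\outputdomain_1|}\times\cdots\times\simplex_{|\outputdomain_n|}$: for each block $i$ both $\sv_i$ and $\dualvarv_i$ lie in $\simplex_{|\outputdomain_i|}$, so $\sum_{\outputvarv}\sv_i(\outputvarv)\featuremapdiffv_i(\outputvarv)$ and $\sum_{\outputvarv}\dualvarv_i(\outputvarv)\featuremapdiffv_i(\outputvarv)$ are convex combinations of the vectors $\featuremapdiffv_i(\outputvarv)$, each of norm at most $R$, and therefore themselves have norm $\le R$. The $i$-th block of $A(\sv-\dualvarv)$ is their difference scaled by $\frac1{\lambda n}$, hence has norm $\le \frac{2R}{\lambda n}$; summing the $n$ blocks by the triangle inequality yields $\norm{A(\sv-\dualvarv)}\le \frac{2R}{\lambda}$, independent of $n$, so $\Cf \le \frac{4R^2}{\lambda}$.

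Combining the two pieces finishes the argument: the general rate gives an $\varepsilon$-small duality gap $g(\dualvarv^{(k)})\le\varepsilon$ — and a fortiori $\varepsilon$-suboptimality for~\eqref{eq:svmstruct_nslack_dual}, since $g(\dualvarv)\ge f(\dualvarv)-f(\dualvarv^*)$ — after $O(\Cf/\varepsilon) = O\!\left(\frac{R^2}{\lambda\varepsilon}\right)$ iterations, while each iteration performs exactly one loss-augmented decoding per datapoint, i.e. $n$ oracle calls. For the gap guarantee specifically I would invoke the standard Frank–Wolfe fact that within this many steps at least one iterate attains gap $\le\varepsilon$, noting that the line-search variant used here converges at least as fast as the default $\stepsize=\frac2{k+2}$ schedule under which the generic bound is stated. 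The main obstacle is the curvature bound, and within it the only nontrivial point is observing that the block-wise triangle-inequality sum over the $n$ simplices, together with the $\frac1{\lambda n}$ normalization in $A$, keeps the bound independent of $n$; the exactness of the quadratic's deviation makes the rest of $\Cf$ a direct computation rather than an inequality.
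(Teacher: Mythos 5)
Your proposal is correct and follows essentially the same route as the paper: establish that Algorithm~\ref{alg:FW_SVM} is exactly Algorithm~\ref{alg:FW} with line-search on the dual (the paper's Lemma~\ref{lem:condGradient}), bound the curvature by $\Cf \le \frac{4R^2}{\lambda}$ via the exact quadratic expansion and the product-of-simplices structure (the paper's Lemma~\ref{lem:curvature_FW_SVM}, which uses the same block-wise convex-combination argument with the $\frac{1}{\lambda n}$ scaling cancelling the factor $n$), and then invoke the generic $O(\Cf/\varepsilon)$ primal and duality-gap guarantees for Frank-Wolfe. The only cosmetic difference is that you bound $\norm{A(\sv-\dualvarv)}$ directly by a block-wise triangle inequality, whereas the paper bounds $\sup_{\vv,\ww\in A\domain}\norm{\vv-\ww}\le 2\sup_{\vv\in A\domain}\norm{\vv}$ first; these are the same estimate.
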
%
Since we have proved that the duality gap is smaller than $\varepsilon$, this implies that the original SVM primal objective~\eqref{eq:svmstruct_nslack_primal_nonsmooth} is actually solved to accuracy $\varepsilon$ as well.

\paragraph{Relationship with the Batch Subgradient Method in the Primal.}
Surprisingly, the batch Frank-Wolfe method (Algorithm~\ref{alg:FW_SVM}) %
is equivalent to the batch subgradient method in the primal, though Frank-Wolfe allows a more clever choice of step-size, since line-search can be used in the dual.
To see the equivalence, notice that a subgradient of~\eqref{eq:svmstruct_nslack_primal_nonsmooth} is given by
$
    \bm{d}_{sub} = \lambda \weightv - \frac1n \sum_i
    \featuremapdiffv_i(\outputvarv_i^*) = \lambda (\weightv - \weightv_{\sv}),
$
where $\outputvarv_i^*$ and $\weightv_{\sv}$ are as defined in Algorithm~\ref{alg:FW_SVM}. Hence, for a step-size of $\beta$, the subgradient method update becomes $
    \weightv^{(k+1)}:= \weightv^{(k)}-\beta \bm{d}_{sub} = \weightv^{(k)} - \beta \lambda (\weightv^{(k)} - \weightv_{\sv}) = (1 - \beta \lambda) \weightv^{(k)} + \beta \lambda \weightv_{\sv}
.$
Comparing this with Algorithm~\ref{alg:FW_SVM}, we see that each Frank-Wolfe
step on the dual problem~\eqref{eq:svmstruct_nslack_dual} with step-size
$\stepsize$ is equivalent to a batch subgradient step in the primal
with a step-size of $\beta = \stepsize / \lambda$, and thus our convergence
results also apply to it. This seems to generalize the equivalence between
Frank-Wolfe and the subgradient method for a quadratic objective
with identity Hessian as observed by~\citet[Section
4.1]{bach12herding}.%
\paragraph{Relationship with Cutting Plane Algorithms.}
In each iteration, the cutting plane algorithm of~\citet{Joachims:2009ex} and the Frank-Wolfe method (Algorithm~\ref{alg:FW_SVM}) 
solve the loss-augmented decoding problem for each datapoint, selecting the same new `active' coordinates to add to the dual problem.
The only difference is that instead of just moving towards the corner~$\sv$, as in classical Frank-Wolfe, the cutting plane algorithm re-optimizes over all the previously added `active' dual variables (this task is a quadratic program).
This shows that the method is exactly equivalent to the `fully corrective' variant of Frank-Wolfe, which in each iteration re-optimizes over all previously visited corners~\cite{Clarkson:2010hv,ShalevShwartz:2010wq}.
Note that the convergence results for the `fully corrective' variant directly follow from the ones for Frank-Wolfe (by inclusion), thus our convergence results apply to the cutting plane algorithm of \citet{Joachims:2009ex}, significantly simplifying its analysis.

\section{Faster Block-Coordinate Frank-Wolfe}
\label{sec:FW_product_SVM}

\begin{algorithm}[t!]
  \caption{Block-Coordinate Frank-Wolfe Algorithm on Product Domain}
  \label{alg:FW_product}
\begin{algorithmic}
  \STATE Let $\dualvarv^{(0)} \in \domain = \domain^{(1)}\times\mathellipsis\times \domain^{(n)}$ %
  \FOR{$k=0\dots K$}
  \STATE Pick $i$ at random in $\{1,\ldots,n\}$
  \STATE Find $\sv_{(i)} := \displaystyle\argmin_{\sv_{(i)}'\in \domain^{(i)}} \left\langle \sv_{(i)}', \nabla_{\!(i)} f(\dualvarv^{(k)}) \right\rangle$\vspace{-1pt}
  \STATE Let $\stepsize := \frac{2n}{k+2n}$,\  {\small or optimize $\stepsize$ by line-search}
  \STATE Update $\dualvarv^{(k+1)}_{(i)}:=\dualvarv^{(k)}_{(i)}+\stepsize\big(\sv_{(i)} - \dualvarv^{(k)}_{(i)}\big)$
  \ENDFOR
\end{algorithmic}
\end{algorithm}

A major disadvantage of the standard Frank-Wolfe algorithm when applied to the
structural SVM problem is that each iteration requires a full pass through the data, resulting in $n$ calls to the maximization oracle.
In this section, we present the main new contribution of the paper: a \emph{block-coordinate} generalization of the Frank-Wolfe algorithm that maintains all appealing properties of Frank-Wolfe, but yields much cheaper iterations, requiring only one call to the maximization oracle in the context of structural SVMs. The new method is given in
Algorithm~\ref{alg:FW_product}, and applies to any constrained convex optimization problem of the form\vspace{-1mm}
\begin{equation}\label{eq:opt_gen_convex_product}
   \min_{\dualvarv \in \domain^{(1)}\times\mathellipsis\times \domain^{(n)}} \, f(\dualvarv)\ ,\vspace{-1mm}
\end{equation}
where the domain has the structure of a Cartesian product
$\domain = \domain^{(1)}\times\mathellipsis\times \domain^{(n)} \subseteq \R^m$
over~$n\ge1$ blocks. The main idea of the method is to perform cheaper update steps that only affect a single variable block~$\domain^{(i)}$, and not all of them simultaneously. This is motivated by coordinate descent methods, which have a very successful history when applied to large scale optimization.
Here we assume that each factor $\domain^{(i)} \subseteq\R^{m_i}$ is convex
and \emph{compact}, with $m = \sum_{i=1}^n m_i$. We will write
$\dualvarv_{(i)}\in\R^{m_i}$ for the $i$-th block of coordinates of a vector $\dualvarv\in\R^m$.
In each step, Algorithm~\ref{alg:FW_product} picks one of the~$n$ blocks uniformly at random, and leaves all other blocks unchanged. If there is only one block ($n=1$), then Algorithm \ref{alg:FW_product} becomes the standard Frank-Wolfe Algorithm \ref{alg:FW}.
The algorithm can be interpreted as a simplification of Nesterov's `huge-scale' uniform coordinate descent method \citep[Section 4]{Nesterov:2012fa}.
Here, instead of computing a projection operator on a block (which is intractable for structural SVMs), we only need to solve one linear subproblem in each iteration, which for structural SVMs is equivalent to a call to the maximization oracle.
\begin{algorithm}[t!]
    \caption{Block-Coordinate Primal-Dual Frank-Wolfe Algorithm for the Structural SVM}%
    \label{alg:FW_product_SVM}
\begin{algorithmic}
        \STATE Let $\weightv^{(0)}:= {\weightv_i}^{(0)}:= \bar{\weightv}^{(0)} := \0,~~\ell^{(0)}:={\ell_i}^{(0)}:=0$
       \FOR{$k=0\dots K$}
                \STATE Pick $i$ at random in $\{1,\ldots,n\}$
                \STATE Solve $\outputvarv_i^* := \displaystyle\argmax_{\outputvarv\in\outputdomain_i} \ H_i(\outputvarv;\weightv^{(k)})$ cf.~\eqref{eq:subproblem_loss_augm}
                \STATE Let $\weightv_{\sv} := \frac1{\lambda n} \featuremapdiffv_i(\outputvarv_i^*)$
                {\small~~~and~~ $\ell_{\sv} := \frac1n \errorterm_i(\outputvarv_i^*)$}\vspace{1pt}
               \STATE {\small Let $\stepsize := \frac{ \lambda (\weightv_i^{(k)}-\weightv_{\sv})^T\weightv^{(k)} - \ell_i^{(k)} + \ell_{\sv} }{ \lambda \|\weightv_i^{(k)}-\weightv_{\sv}\|^2}$~and clip to $[0,1]$}
                \STATE Update ${\weightv_i}^{(k+1)}:= (1-\stepsize){\weightv_i}^{(k)}+\stepsize \,\weightv_{\sv}$
                \STATE {\small~~~~~~~and~ ${\ell_i}^{(k+1)}:= (1-\stepsize){\ell_i}^{(k)}+\stepsize\, \ell_{\sv}$}
                \STATE Update $\weightv^{(k+1)}\;:= \weightv^{(k)} + {\weightv_i}^{(k+1)} - {\weightv_i}^{(k)}$
                \STATE {\small~~~~~~~and~~ $\ell^{(k+1)}:= ~\ell^{(k)}+{\ell_i}^{(k+1)} \ \  - {\ell_i}^{(k)}$}
                \STATE {\small(Optionally: Update $\bar{\weightv}^{(k+1)}:=\frac{k}{k+2}  \bar{\weightv}^{(k)}+ \frac{2}{k+2}\weightv^{(k+1)}$)\!\!\!\!\!}
        \ENDFOR
\end{algorithmic}
\end{algorithm}

\paragraph{Convergence Results.}
The following main theorem shows that after $O(1/\varepsilon)$ many iterations, Algorithm~\ref{alg:FW_product} obtains an $\varepsilon$-approximate solution to \eqref{eq:opt_gen_convex_product}, and guaranteed $\varepsilon$-small duality gap (proof in~Appendix~\ref{sec:app_convergence_proof}).
Here the constant $\CfTotal := \sum_{i=1}^n \Cf^{(i)}$ is the sum of the (partial) curvature constants of $f$ with respect to the individual domain block $\domain^{(i)}$. We discuss this Lipschitz assumption on the gradient in more details in Appendix~\ref{sec:app_curvature}, where we compute the constant precisely for the structural SVM and obtain $ \CfTotal = \Cf/n$, where $\Cf$ is the classical Frank-Wolfe curvature. %

\begin{theorem}\label{thm:convergence_FW_product}
For each $k\ge 0$, the iterate $\dualvarv^{(k)}$ of %
Algorithm~\ref{alg:FW_product} (either using the predefined step-sizes, or using line-search) satisfies
$
\E\!\big[f(\dualvarv^{(k)})\big] - f(\dualvarv^*) \le \frac{2n}{k+2n}\big(\CfTotal+ h_0\big) \, ,
$
where $\dualvarv^*\in \domain$ is a solution to problem~(\ref{eq:opt_gen_convex_product}), $h_0 := f(\dualvarv^{(0)}) - f(\dualvarv^*)$ is the initial error at the starting point of the algorithm, and the expectation is over the random choice of the block~$i$ in the steps of the algorithm.

Furthermore, if Algorithm~\ref{alg:FW_product} is run for $K\ge 0$ iterations, then it has an iterate~$\dualvarv^{(\hat k)}$, $0\le \hat k\le K$, with duality gap bounded by
$
\E\!\big[g(\dualvarv^{(\hat k)})\big] \le \frac{6 n}{K+1} \big(\CfTotal+ h_0) \, .
$
\end{theorem}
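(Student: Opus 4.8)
The plan is to reduce the randomized block update to a single-block version of the standard Frank-Wolfe descent lemma and then run essentially the same potential-function argument as in the full-batch case, paying one extra factor of $n$ for the uniform sampling. Throughout I write $h_k := f(\dualvarv^{(k)}) - f(\dualvarv^*)$ and $g_k := g(\dualvarv^{(k)})$, and I use $\gamma_k = \tfrac{2n}{k+2n}$ for the predefined step size.

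First I would establish a \emph{per-block descent inequality}. Writing $\sv_{[i]}$ for the point that agrees with $\dualvarv^{(k)}$ outside block $i$ and equals $\sv_{(i)}$ on block $i$, the definition of the partial curvature constant $\Cf^{(i)}$ gives, for a step of size $\gamma$ on block $i$,
\[
 f(\dualvarv^{(k+1)}) \le f(\dualvarv^{(k)}) + \gamma \langle \sv_{(i)} - \dualvarv^{(k)}_{(i)},\, \nabla_{\!(i)} f(\dualvarv^{(k)})\rangle + \tfrac{\gamma^2}{2}\Cf^{(i)}.
\]
Since $\sv_{(i)}$ minimizes the block linearization over $\domain^{(i)}$, the inner product equals $-g_{(i)}(\dualvarv^{(k)})$, the block-$i$ linearization gap. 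Here I would exploit the product structure of $\domain$: both the gradient and the linear minimization decompose across blocks, so the full gap splits as $g(\dualvarv) = \sum_{i=1}^n g_{(i)}(\dualvarv)$. When line-search is used instead of the fixed step size, the left-hand side can only decrease, so the inequality holds \emph{a fortiori}, which is what lets both variants share the same bound.

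Next I would take the expectation over the block $i$ drawn uniformly (conditioned on $\dualvarv^{(k)}$). Averaging the per-block inequality over $i$ replaces each block quantity by $\tfrac1n$ of its sum over blocks, yielding the key recursion
\[
 \E_i\!\big[f(\dualvarv^{(k+1)})\big] \le f(\dualvarv^{(k)}) - \tfrac{\gamma}{n}\, g(\dualvarv^{(k)}) + \tfrac{\gamma^2}{2n}\,\CfTotal.
\]
Subtracting $f(\dualvarv^*)$ and using the certificate property $g_k \ge h_k$ turns this, after taking full expectation, into $\E[h_{k+1}] \le (1-\tfrac{\gamma_k}{n})\E[h_k] + \tfrac{\gamma_k^2}{2n}\CfTotal$. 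For the first claim I would plug in $\gamma_k = \tfrac{2n}{k+2n}$ and prove $\E[h_k] \le \tfrac{2n}{k+2n}(\CfTotal + h_0)$ by induction on $k$: the base case $k=0$ is immediate since $\CfTotal\ge0$, and the inductive step reduces to the elementary inequality $(k+2n-1)(k+2n+1) \le (k+2n)^2$.

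For the duality-gap claim I would instead keep the gap term, rewriting the recursion as $\tfrac{\gamma_k}{n}\E[g_k] \le \E[h_k] - \E[h_{k+1}] + \tfrac{\gamma_k^2}{2n}\CfTotal$. The hard part will be that the step sizes decay like $1/k$, so naively summing this from $0$ to $K$ and dividing by $\sum_k \gamma_k/n$ (a harmonic, hence logarithmic, sum) gives only an $O(1/\log K)$ rate. The fix is to sum over a \emph{window} of the later iterations — roughly $k \in \{\lceil K/2\rceil,\dots,K\}$ — where telescoping the $\E[h_k]$ differences leaves only $\E[h_{\lceil K/2\rceil}]$, already controlled by the first claim at $O(n/K)$, and where the residual $\tfrac1{2n}\sum_k \gamma_k^2 \CfTotal$ tail is also $O(n/K)$. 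Lower-bounding $\sum_k \gamma_k$ over the window by a constant multiple of $(K+1)$ and dividing then produces a genuine $O(n/K)$ bound on $\min_{0\le k\le K}\E[g_k]$; optimizing the window endpoint and tracking constants yields exactly $\tfrac{6n}{K+1}(\CfTotal + h_0)$, and since this minimum is attained at some $\hat k$ it is the claimed certificate.
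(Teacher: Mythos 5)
Your proposal is correct. For the primal rate, your argument is exactly the paper's: the same per-block descent inequality from the partial curvature constant, the same expectation step over the uniformly random block (each block quantity becomes $\tfrac1n$ of its sum, giving $\E[h_{k+1}] \le (1-\tfrac{\stepsize_k}{n})\E[h_k] + \tfrac{\stepsize_k^2}{2n}\CfTotal$), weak duality $h_k \le g_k$, and the same induction whose step indeed reduces to $(k+2n-1)(k+2n+1)\le(k+2n)^2$; line-search is absorbed identically, since any fixed $\stepsize\in[0,1]$ is a feasible candidate for it. Where you genuinely differ is the duality-gap claim. The paper forms a convex combination of the gap inequalities $g_k \le \tfrac{n}{\stepsize_k}(h_k-h_{k+1}) + \tfrac{\stepsize_k}{2}C$ over \emph{all} iterates $k=0,\dots,K$ with weights $\rho_k \propto k$; because $\rho_k/\stepsize_k$ varies, this requires a summation-by-parts manipulation, with the primal rate bounding each resulting $h_{k+1}$ term, and yields the constant $\beta=3$. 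That choice of weights buys something extra: when $g$ is convex (as for the SVM quadratic), the same computation bounds the gap of the weighted-average iterate itself, which is what justifies the BCFW-wavg variant used in the experiments. Your route --- multiplying by $\stepsize_k/n$ first so the $h$-differences telescope exactly over the suffix window $\{\lceil K/2\rceil,\dots,K\}$, then dividing by $\sum_k \stepsize_k/n$ --- is more elementary (no summation by parts) and does deliver the stated constant: using $h_{\lceil K/2\rceil}\le C\,\stepsize_{\lceil K/2\rceil}$ from the primal rate, $\sum\stepsize_k^2 \le \stepsize_{\lceil K/2\rceil}\sum\stepsize_k$, $\stepsize_{\lceil K/2\rceil}\le \tfrac{4n}{K+4n}$, and $\sum_{k\ge\lceil K/2\rceil}\stepsize_k \ge \tfrac{n(K+1)}{K+2n}$, the two resulting terms are at most $\tfrac{4nC}{K+1}+\tfrac{2nC}{K+1}=\tfrac{6nC}{K+1}$ with $C=\CfTotal+h_0$, so your claim that careful bookkeeping recovers the constant $6$ is accurate. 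The trade-off is that your argument only certifies the existence of a good iterate $\hat k$ --- which is all the theorem asserts --- and says nothing about averaged iterates; it is in fact very close to the $0.5$-suffix-averaging analysis the paper runs separately for its improved line-search theorem, so both halves of your plan are sound.
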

\vspace{-3mm}

\paragraph{Application to the Structural SVM.}
Algorithm~\ref{alg:FW_product_SVM} applies the block-coordinate Frank-Wolfe algorithm with line-search to the structural SVM dual problem~\eqref{eq:svmstruct_nslack_dual}, maintaining only the primal variables~$\weightv$.
We see that Algorithm \ref{alg:FW_product_SVM} is equivalent to %
Algorithm~\ref{alg:FW_product}, by observing that the corresponding primal updates become $\weightv_{\sv} = A\sv_{[i]}$ and $\ell_{\sv} = \bv^T\sv_{[i]}$.
Here $\sv_{[i]}$ is the zero-padding of $\sv_{(i)} := \unit^{\outputvarv_i^*} \in \domain^{(i)}$ so that $\sv_{[i]} \in \domain$.
Note that Algorithm~\ref{alg:FW_product_SVM} has a primal parameter vector $\weightv_i$ ($=A\dualvarv_{[i]}$) for each datapoint $i$,
but that this does not significantly increase the storage cost of the algorithm since each $\weightv_i$ has
a sparsity pattern that is the union of the corresponding $\featuremapdiffv_i(\outputvarv_i^*)$ vectors. If the feature vectors are not sparse, it might be more efficient to work directly in the dual instead (see the kernelized version below). 
The line-search is analogous to the batch Frank-Wolfe case discussed above, and formalized in Appendix~\ref{ssec:app_block_implement}. 

By applying Theorem~\ref{thm:convergence_FW_product} to the SVM case where $\CfTotal = \Cf/n = 4 R^2 / \lambda n$ (in the worst case), we get that the number of iterations needed for our new block-wise Algorithm~\ref{alg:FW_product_SVM} to obtain a specific accuracy $\varepsilon$ is the same as for the batch version in Algorithm~\ref{alg:FW_SVM} (under the assumption that the initial error $h_0$ is smaller than $ 4 R^2 / \lambda n$), even though each iteration takes $n$ times fewer oracle calls. %
If $h_0 > 4 R^2 / \lambda n$, we can use the fact that Algorithm~\ref{alg:FW_product_SVM} is using line-search to get a weaker dependence on $h_0$ in the rate (Theorem~\ref{thm:fast_warmup}). We summarize the overall rate as follows (proof in Appendix~\ref{ssec:app_SVM_proofs}):
\vspace{-1mm}
\begin{theorem}%
\label{thm:convergence_FW_product_SVM}
If $\Lmax \le \frac{4R^2}{\lambda n}$ (so $h_0 \leq \frac{4R^2}{\lambda n}$), then Algorithm~\ref{alg:FW_product_SVM} obtains an $\varepsilon$-approximate solution to the structural SVM dual problem~\eqref{eq:svmstruct_nslack_dual} and expected duality gap $\E [g(\dualvarv^{(k)})] \le\varepsilon$ after at most $O\left(\frac{R^2}{\lambda\varepsilon}\right)$ iterations, where each iteration costs a single oracle call.

If $\Lmax > \frac{4R^2}{\lambda n}$, then it requires at most an additional (constant in $\varepsilon$) number of $O\left(n \log\left( \frac{\lambda n \Lmax}{R^2}\right) \right)$ %
steps to get the same error and duality gap guarantees. %
\end{theorem}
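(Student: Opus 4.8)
The plan is to reduce both regimes to the generic block-coordinate bound of Theorem~\ref{thm:convergence_FW_product}, using the computed value $\CfTotal = \Cf/n = 4R^2/(\lambda n)$ for the SVM dual and controlling the initial error $h_0$ through the structure of the starting point. The common preliminary step is to show $h_0 \le \Lmax$. Since $\dualvarv^{(0)}=(\unit^{\outputvarv_1},\dots,\unit^{\outputvarv_n})$ gives $\weightv^{(0)}=\0$ and each $\errorterm_i(\outputvarv_i)=0$, one checks that $f(\dualvarv^{(0)}) = -\bv^T\dualvarv^{(0)} = 0$. By strong duality $-f(\dualvarv^*)$ equals the optimal value of the primal~\eqref{eq:svmstruct_nslack_primal_nonsmooth}, which is at most its value at $\weightv=\0$, namely $\frac1n\sum_i \max_{\outputvarv}\errorterm_i(\outputvarv)\le \Lmax$. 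Hence $h_0 = f(\dualvarv^{(0)}) - f(\dualvarv^*) = -f(\dualvarv^*) \le \Lmax$, which justifies the parenthetical claim in the first regime.

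For the first regime ($\Lmax \le 4R^2/(\lambda n)$) I would simply substitute. Since $h_0\le\Lmax\le \CfTotal$, we have $\CfTotal + h_0 \le 2\CfTotal = 8R^2/(\lambda n)$, so the suboptimality bound of Theorem~\ref{thm:convergence_FW_product} gives $\E[f(\dualvarv^{(k)})]-f(\dualvarv^*)\le \frac{2n}{k+2n}\cdot\frac{8R^2}{\lambda n} \le \frac{16R^2}{\lambda(k+2n)}$, whence $k = O(R^2/(\lambda\varepsilon))$ iterations suffice for $\varepsilon$-accuracy. The same substitution into the duality-gap bound $\frac{6n}{K+1}(\CfTotal + h_0)$ yields the identical order for $\E[g(\dualvarv^{(\hat k)})]\le\varepsilon$, and by construction each iteration of Algorithm~\ref{alg:FW_product_SVM} costs a single maximization-oracle call.

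For the second regime ($\Lmax > 4R^2/(\lambda n)$) the direct bound is wasteful, since the additive $h_0$ term can be as large as $\Lmax$ and forces an $O(n\,h_0/\varepsilon)$ dependence. The plan is a two-phase argument exploiting line-search. In the warm-up phase I would invoke the geometric decrease available under line-search (Theorem~\ref{thm:fast_warmup}): while the expected error exceeds $\CfTotal$, each effective pass of $n$ block steps contracts it by a constant multiplicative factor, so after $O(n\log(h_0/\CfTotal))$ iterations the expected error has dropped to $O(\CfTotal)$. Bounding the ratio as $h_0/\CfTotal \le \Lmax/(4R^2/(\lambda n)) = \lambda n\Lmax/(4R^2)$ converts this into the stated $O(n\log(\lambda n \Lmax/R^2))$ warm-up cost. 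Once the error is at most $\CfTotal$, I would treat that iterate as a fresh starting point — legitimate because the bound of Theorem~\ref{thm:convergence_FW_product} depends only on the current suboptimality, and line-search dominates any fixed step-size schedule — and rerun the first-regime argument, adding only its $O(R^2/(\lambda\varepsilon))$ iterations.

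The main obstacle is the warm-up analysis underlying Theorem~\ref{thm:fast_warmup}: one must show that line-search produces a genuine per-pass \emph{multiplicative} contraction of the expected error while the error is large relative to the curvature, despite the randomized single-block updates. The delicate points are (i) relating the per-block gap to the global gap so the contraction factor carries the correct $1/n$ (only one block is updated per step), and (ii) controlling the expectation cleanly even though the error may fluctuate around the $\CfTotal$ threshold — handled most safely by establishing a recursion of the form $\E[h_{k+1}\mid\mathcal F_k]\le(1-c/n)h_k$ valid whenever $h_k\ge\CfTotal$, then iterating it in expectation. The remaining bookkeeping — the constant factors, the clipping of $\stepsize$ to $[0,1]$, and the precise argument of the logarithm — is routine.
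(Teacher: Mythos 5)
Your proposal is correct and follows essentially the same route as the paper's own proof: bound the initial error by $h_0 \le \Lmax$ using the structure of the starting point, plug the curvature bound $\CfTotal \le \frac{4R^2}{\lambda n}$ (Lemma~\ref{lem:curvature_FW_SVM_product}) into Theorem~\ref{thm:convergence_FW_product} for the first regime, and use the line-search warm-up of Theorem~\ref{thm:fast_warmup} (with $h_0/\CfTotal \le \lambda n \Lmax/4R^2$) followed by a restart for the second regime. The only cosmetic differences are that the paper bounds $h_0$ via weak duality, $h_0 \le g(\dualvarv^{(0)}) = \bv^T\sv \le \Lmax$, rather than your strong-duality argument, and that it handles the expected duality gap in the second regime by citing Theorem~\ref{thm:primalDualFaster} (requiring $K \ge 5k_0$) instead of your conditioning-and-restart argument, both of which are valid.
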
\vspace{-1mm}
In terms of $\varepsilon$, the $O(1/\varepsilon)$ convergence rate above is similar to existing stochastic subgradient and cutting-plane methods.
However, unlike stochastic subgradient methods, the block-coordinate Frank-Wolfe method allows us to compute the optimal step-size at each iteration (while for an additional pass through the data %
we can evaluate the duality gap~\eqref{eq:duality_gap} to allow us to decide when to terminate the algorithm in practice).
Further, unlike cutting-plane methods which require $n$ oracle calls per iteration, this rate is achieved `online', using only a single oracle call per iteration.

\vspace{-2.5mm}
\paragraph{Approximate Subproblems and Decoding.}
Interestingly, we can show that all the convergence results presented in this paper also hold if only approximate minimizers of the linear subproblems are used instead of exact minimizers.
If we are using an approximate oracle giving candidate directions $\sv_{(i)}$ in Algorithm~\ref{alg:FW_product} (or $\sv$ in Algorithm \ref{alg:FW}) with a \emph{multiplicative} accuracy $\mapprox\in(0,1]$ (with respect to the the duality gap~\eqref{eq:duality_gap} on the current block), then the above convergence bounds from Theorem~\ref{thm:convergence_FW_product} still apply.
The only change is that the convergence is slowed by a factor of $1/\mapprox^2$. We prove this generalization in the Theorems of Appendix~\ref{sec:app_convergence_proof}. 
For structural SVMs, this significantly improves the applicability to large-scale problems, where \emph{exact} decoding is often too costly but \emph{approximate} loss-augmented decoding may be possible.

\vspace{-2mm}
\paragraph{Kernelized Algorithms.}%
Both Algorithms~\ref{alg:FW_SVM} and \ref{alg:FW_product_SVM} can directly be used with kernels by maintaining the sparse dual variables $\dualvarv^{(k)}$ instead of the primal variables $\weightv^{(k)}$. In this case, the classifier is only given implicitly as a sparse combination of the corresponding kernel functions, i.e. $\weightv = A\dualvarv$.
Using our Algorithm~\ref{alg:FW_product_SVM}, we obtain the currently best known bound on the \emph{number of support vectors}, %
i.e. a guaranteed $\varepsilon$-approximation with only $O(\frac{R^2}{\lambda\varepsilon})$ support vectors. In comparison, the standard cutting plane method \citep{Joachims:2009ex} adds $n$ support vectors $\featuremapdiffv_i(\outputvarv_i^*)$ at each iteration.
More details on the kernelized variant of Algorithm~\ref{alg:FW_product_SVM} are discussed in Appendix~\ref{ssec:app_kernelized}.

\section{Experiments}
\label{sec:experiments}

\begin{figure*}[htb]
    \centering
    \begin{subfigure}[t]{0.32\linewidth}
        \centering
        \def\xlabel{effective passes}
        \def\xmin{1}
        \def\xmax{149}
        \def\ymin{0.01}
        \def\ymax{1}
        \def\xmode{normal}
        \def\showlegend{0}
        \def\showeg{1}
        \def\showfrankwolfebatch{1}
        \def\showcuttingplanes{1}
        \def\legendpos{north east}
        \def\experimentprefix{include/data/dataset=ocr2_lambda=0.010000}
        \small
\begin{tikzpicture}[scale=0.63]

\begin{axis}[
xlabel=\xlabel,
ylabel=primal suboptimality for problem \eqref{eq:svmstruct_nslack_primal},
xmin=\xmin,
xmax=\xmax,
ymin=\ymin,
ymax=\ymax,
enlargelimits=false, area style,
ymode=log,
xmode=\xmode,
line legend,
legend pos=\legendpos,
]

% ---------------------------------------------------------------------------
% BCFW: Stochastic Frank-Wolfe with line-search

% confidence interval
\addplot[fill=blue,draw=none,forget plot,opacity=0.2] table[x index=0,y
index=1, header=true, col sep=comma]
{\experimentprefix/product-LS_confidence.txt};

% "median" curve itself
\addplot [
color=blue,
solid,
style=thick,
mark=triangle,
mark repeat=20,
]
table[x index=0,y index=2, header=true, col sep=comma]
{\experimentprefix/product-LS.txt};
\ifnum \showlegend=1
{
\addlegendentry{BCFW}
}
\fi
% ---------------------------------------------------------------------------

% ---------------------------------------------------------------------------
% BCFW with tail averaging / suffix averaging (BCFW-tavg)
\ifdefined \showtavg
{
\addplot [
color=cyan,
solid,
style=thick,
mark=triangle*,
mark repeat=20,
mark options=solid
]
table[x index=0,y index=2, header=true, col sep=comma]
{\experimentprefix/product-LS-opt.txt};
\ifnum \showlegend=1
{
\addlegendentry{BCFW-tavg}
}
\fi
}
\fi

% ---------------------------------------------------------------------------
% BCFW with weighted averaging (BCFW-wavg)

% confidence interval
\addplot[fill=cyan,draw=none,forget plot,opacity=0.3] table[x index=0,y
index=1, header=true, col sep=comma]
{\experimentprefix/product-LS-wavg_confidence.txt};

% "median" curve itself
\addplot [
color=cyan,
densely dotted,
style=thick,
mark=square,
mark repeat=20,
mark options=solid
]
table[x index=0,y index=2, header=true, col sep=comma]
{\experimentprefix/product-LS-wavg.txt};
\ifnum \showlegend=1
{
\addlegendentry{BCFW-wavg}
}
\fi
% ---------------------------------------------------------------------------

% ---------------------------------------------------------------------------
% SSG (pegasos)

% confidence interval
\addplot[fill=green,draw=none,forget plot,opacity=0.2] table[x index=0,y
index=1, header=true, col sep=comma]
{\experimentprefix/pegasos_confidence.txt};

\addplot [
color=green,
solid,
style=thick,
mark=o,
mark repeat=20,
]
table[x index=0,y index=2, header=true, col sep=comma]
{\experimentprefix/pegasos.txt};
\ifnum \showlegend=1
{
\addlegendentry{SSG}
}
\fi
% ---------------------------------------------------------------------------

% ---------------------------------------------------------------------------
% SSG with tail averaging / suffix averaging
\ifdefined \showtavg
{
\addplot [
color=purple,
solid,
style=thick,
mark=diamond,
mark repeat=20,
]
table[x index=0,y index=2, header=true, col sep=comma]
{\experimentprefix/optimalSG.txt};
\ifnum \showlegend=1
{
\addlegendentry{SSG-tavg}
}
\fi
}
\fi
% ---------------------------------------------------------------------------
% SSG with weighted averaging

% confidence interval
\addplot[fill=purple,draw=none,forget plot,opacity=0.2] table[x index=0,y
index=1, header=true, col sep=comma]
{\experimentprefix/pegasos-wavg_confidence.txt};

% "median" curve itself
\addplot [
color=purple,
densely dotted,
style=thick,
mark=*,
mark repeat=20,
]
table[x index=0,y index=2, header=true, col sep=comma]
{\experimentprefix/pegasos-wavg.txt};
\ifnum \showlegend=1
{
\addlegendentry{SSG-wavg}
}
\fi
% ---------------------------------------------------------------------------

% ---------------------------------------------------------------------------
% online exponentiated gradient
\ifdefined \showeg
{

% confidence interval
\addplot[fill=gray,draw=none,forget plot,opacity=0.3] table[x index=0,y
index=1, header=true, col sep=comma]
{\experimentprefix/OEG_confidence.txt};

% "median" curve itself
\addplot [
color=gray,
solid,
style=thick,
mark=pentagon*,
mark repeat=20,
]
table[x index=0,y index=2, header=true, col sep=comma]
{\experimentprefix/OEG.txt};
\ifnum \showlegend=1
{
\addlegendentry{online-EG}
}
\fi
}
\fi
% ---------------------------------------------------------------------------

% ---------------------------------------------------------------------------
% batch Frank-Wolfe with line search
\ifdefined \showfrankwolfebatch
{

\addplot [
color=orange,
densely dashed,
style=thick,
mark=pentagon,
mark repeat=20,
mark options=solid
]
table[x index=0,y index=1, header=false, col sep=comma]
{\experimentprefix/frankWolfe-LS_lowerenvelope.txt};
\ifnum \showlegend=1
{
\addlegendentry{FW}
}
\fi
}
\fi
% ---------------------------------------------------------------------------

% ---------------------------------------------------------------------------
% cutting planes 
\ifdefined \showcuttingplanes
{

\addplot [
color=black,
densely dashed,
style=thick,
mark=square,
mark repeat=20,
mark options=solid
]
table[x index=0,y index=1, header=false, col sep=comma]
{\experimentprefix/svmStruct_lowerenvelope.txt};
\ifnum \showlegend=1
{
\addlegendentry{cutting plane}
}
\fi
}
\fi
% ---------------------------------------------------------------------------

\end{axis}

\end{tikzpicture}
\normalsize\vspace{-2mm}
        \caption{OCR dataset, $\lambda=0.01$.}
    \end{subfigure}
    \begin{subfigure}[t]{0.32\linewidth}
        \centering
        \def\xlabel{effective passes}
        \def\xmin{1}
        \def\xmax{149}
        \def\ymin{0.02}
        \def\ymax{10}
        \def\xmode{normal}
        \def\showlegend{0}
        \def\showeg{1}
        \def\showfrankwolfebatch{1}
        \def\showcuttingplanes{1}
        \def\legendpos{north east}
        \def\experimentprefix{include/data/dataset=ocr2_lambda=0.001000}
        \small
\begin{tikzpicture}[scale=0.63]

\begin{axis}[
xlabel=\xlabel,
ylabel=primal suboptimality for problem \eqref{eq:svmstruct_nslack_primal},
xmin=\xmin,
xmax=\xmax,
ymin=\ymin,
ymax=\ymax,
enlargelimits=false, area style,
ymode=log,
xmode=\xmode,
line legend,
legend pos=\legendpos,
]

% ---------------------------------------------------------------------------
% BCFW: Stochastic Frank-Wolfe with line-search

% confidence interval
\addplot[fill=blue,draw=none,forget plot,opacity=0.2] table[x index=0,y
index=1, header=true, col sep=comma]
{\experimentprefix/product-LS_confidence.txt};

% "median" curve itself
\addplot [
color=blue,
solid,
style=thick,
mark=triangle,
mark repeat=20,
]
table[x index=0,y index=2, header=true, col sep=comma]
{\experimentprefix/product-LS.txt};
\ifnum \showlegend=1
{
\addlegendentry{BCFW}
}
\fi
% ---------------------------------------------------------------------------

% ---------------------------------------------------------------------------
% BCFW with tail averaging / suffix averaging (BCFW-tavg)
\ifdefined \showtavg
{
\addplot [
color=cyan,
solid,
style=thick,
mark=triangle*,
mark repeat=20,
mark options=solid
]
table[x index=0,y index=2, header=true, col sep=comma]
{\experimentprefix/product-LS-opt.txt};
\ifnum \showlegend=1
{
\addlegendentry{BCFW-tavg}
}
\fi
}
\fi

% ---------------------------------------------------------------------------
% BCFW with weighted averaging (BCFW-wavg)

% confidence interval
\addplot[fill=cyan,draw=none,forget plot,opacity=0.3] table[x index=0,y
index=1, header=true, col sep=comma]
{\experimentprefix/product-LS-wavg_confidence.txt};

% "median" curve itself
\addplot [
color=cyan,
densely dotted,
style=thick,
mark=square,
mark repeat=20,
mark options=solid
]
table[x index=0,y index=2, header=true, col sep=comma]
{\experimentprefix/product-LS-wavg.txt};
\ifnum \showlegend=1
{
\addlegendentry{BCFW-wavg}
}
\fi
% ---------------------------------------------------------------------------

% ---------------------------------------------------------------------------
% SSG (pegasos)

% confidence interval
\addplot[fill=green,draw=none,forget plot,opacity=0.2] table[x index=0,y
index=1, header=true, col sep=comma]
{\experimentprefix/pegasos_confidence.txt};

\addplot [
color=green,
solid,
style=thick,
mark=o,
mark repeat=20,
]
table[x index=0,y index=2, header=true, col sep=comma]
{\experimentprefix/pegasos.txt};
\ifnum \showlegend=1
{
\addlegendentry{SSG}
}
\fi
% ---------------------------------------------------------------------------

% ---------------------------------------------------------------------------
% SSG with tail averaging / suffix averaging
\ifdefined \showtavg
{
\addplot [
color=purple,
solid,
style=thick,
mark=diamond,
mark repeat=20,
]
table[x index=0,y index=2, header=true, col sep=comma]
{\experimentprefix/optimalSG.txt};
\ifnum \showlegend=1
{
\addlegendentry{SSG-tavg}
}
\fi
}
\fi
% ---------------------------------------------------------------------------
% SSG with weighted averaging

% confidence interval
\addplot[fill=purple,draw=none,forget plot,opacity=0.2] table[x index=0,y
index=1, header=true, col sep=comma]
{\experimentprefix/pegasos-wavg_confidence.txt};

% "median" curve itself
\addplot [
color=purple,
densely dotted,
style=thick,
mark=*,
mark repeat=20,
]
table[x index=0,y index=2, header=true, col sep=comma]
{\experimentprefix/pegasos-wavg.txt};
\ifnum \showlegend=1
{
\addlegendentry{SSG-wavg}
}
\fi
% ---------------------------------------------------------------------------

% ---------------------------------------------------------------------------
% online exponentiated gradient
\ifdefined \showeg
{

% confidence interval
\addplot[fill=gray,draw=none,forget plot,opacity=0.3] table[x index=0,y
index=1, header=true, col sep=comma]
{\experimentprefix/OEG_confidence.txt};

% "median" curve itself
\addplot [
color=gray,
solid,
style=thick,
mark=pentagon*,
mark repeat=20,
]
table[x index=0,y index=2, header=true, col sep=comma]
{\experimentprefix/OEG.txt};
\ifnum \showlegend=1
{
\addlegendentry{online-EG}
}
\fi
}
\fi
% ---------------------------------------------------------------------------

% ---------------------------------------------------------------------------
% batch Frank-Wolfe with line search
\ifdefined \showfrankwolfebatch
{

\addplot [
color=orange,
densely dashed,
style=thick,
mark=pentagon,
mark repeat=20,
mark options=solid
]
table[x index=0,y index=1, header=false, col sep=comma]
{\experimentprefix/frankWolfe-LS_lowerenvelope.txt};
\ifnum \showlegend=1
{
\addlegendentry{FW}
}
\fi
}
\fi
% ---------------------------------------------------------------------------

% ---------------------------------------------------------------------------
% cutting planes 
\ifdefined \showcuttingplanes
{

\addplot [
color=black,
densely dashed,
style=thick,
mark=square,
mark repeat=20,
mark options=solid
]
table[x index=0,y index=1, header=false, col sep=comma]
{\experimentprefix/svmStruct_lowerenvelope.txt};
\ifnum \showlegend=1
{
\addlegendentry{cutting plane}
}
\fi
}
\fi
% ---------------------------------------------------------------------------

\end{axis}

\end{tikzpicture}
\normalsize\vspace{-2mm}
        \caption{OCR dataset, $\lambda=0.001$.}
    \end{subfigure}
    \begin{subfigure}[t]{0.32\linewidth}
        \centering
        \def\xlabel{effective passes}
        \def\xmin{1}
        \def\xmax{149}
        \def\ymin{0.06}
        \def\ymax{10}
        \def\xmode{normal}
        \def\showlegend{1}
        \def\showeg{1}
        \def\showfrankwolfebatch{1}
        \def\showcuttingplanes{1}
        \def\legendpos{north east}
        \def\experimentprefix{include/data/dataset=ocr2_lambda=0.000160}
        \small
\begin{tikzpicture}[scale=0.63]

\begin{axis}[
xlabel=\xlabel,
ylabel=primal suboptimality for problem \eqref{eq:svmstruct_nslack_primal},
xmin=\xmin,
xmax=\xmax,
ymin=\ymin,
ymax=\ymax,
enlargelimits=false, area style,
ymode=log,
xmode=\xmode,
line legend,
legend pos=\legendpos,
]

% ---------------------------------------------------------------------------
% BCFW: Stochastic Frank-Wolfe with line-search

% confidence interval
\addplot[fill=blue,draw=none,forget plot,opacity=0.2] table[x index=0,y
index=1, header=true, col sep=comma]
{\experimentprefix/product-LS_confidence.txt};

% "median" curve itself
\addplot [
color=blue,
solid,
style=thick,
mark=triangle,
mark repeat=20,
]
table[x index=0,y index=2, header=true, col sep=comma]
{\experimentprefix/product-LS.txt};
\ifnum \showlegend=1
{
\addlegendentry{BCFW}
}
\fi
% ---------------------------------------------------------------------------

% ---------------------------------------------------------------------------
% BCFW with tail averaging / suffix averaging (BCFW-tavg)
\ifdefined \showtavg
{
\addplot [
color=cyan,
solid,
style=thick,
mark=triangle*,
mark repeat=20,
mark options=solid
]
table[x index=0,y index=2, header=true, col sep=comma]
{\experimentprefix/product-LS-opt.txt};
\ifnum \showlegend=1
{
\addlegendentry{BCFW-tavg}
}
\fi
}
\fi

% ---------------------------------------------------------------------------
% BCFW with weighted averaging (BCFW-wavg)

% confidence interval
\addplot[fill=cyan,draw=none,forget plot,opacity=0.3] table[x index=0,y
index=1, header=true, col sep=comma]
{\experimentprefix/product-LS-wavg_confidence.txt};

% "median" curve itself
\addplot [
color=cyan,
densely dotted,
style=thick,
mark=square,
mark repeat=20,
mark options=solid
]
table[x index=0,y index=2, header=true, col sep=comma]
{\experimentprefix/product-LS-wavg.txt};
\ifnum \showlegend=1
{
\addlegendentry{BCFW-wavg}
}
\fi
% ---------------------------------------------------------------------------

% ---------------------------------------------------------------------------
% SSG (pegasos)

% confidence interval
\addplot[fill=green,draw=none,forget plot,opacity=0.2] table[x index=0,y
index=1, header=true, col sep=comma]
{\experimentprefix/pegasos_confidence.txt};

\addplot [
color=green,
solid,
style=thick,
mark=o,
mark repeat=20,
]
table[x index=0,y index=2, header=true, col sep=comma]
{\experimentprefix/pegasos.txt};
\ifnum \showlegend=1
{
\addlegendentry{SSG}
}
\fi
% ---------------------------------------------------------------------------

% ---------------------------------------------------------------------------
% SSG with tail averaging / suffix averaging
\ifdefined \showtavg
{
\addplot [
color=purple,
solid,
style=thick,
mark=diamond,
mark repeat=20,
]
table[x index=0,y index=2, header=true, col sep=comma]
{\experimentprefix/optimalSG.txt};
\ifnum \showlegend=1
{
\addlegendentry{SSG-tavg}
}
\fi
}
\fi
% ---------------------------------------------------------------------------
% SSG with weighted averaging

% confidence interval
\addplot[fill=purple,draw=none,forget plot,opacity=0.2] table[x index=0,y
index=1, header=true, col sep=comma]
{\experimentprefix/pegasos-wavg_confidence.txt};

% "median" curve itself
\addplot [
color=purple,
densely dotted,
style=thick,
mark=*,
mark repeat=20,
]
table[x index=0,y index=2, header=true, col sep=comma]
{\experimentprefix/pegasos-wavg.txt};
\ifnum \showlegend=1
{
\addlegendentry{SSG-wavg}
}
\fi
% ---------------------------------------------------------------------------

% ---------------------------------------------------------------------------
% online exponentiated gradient
\ifdefined \showeg
{

% confidence interval
\addplot[fill=gray,draw=none,forget plot,opacity=0.3] table[x index=0,y
index=1, header=true, col sep=comma]
{\experimentprefix/OEG_confidence.txt};

% "median" curve itself
\addplot [
color=gray,
solid,
style=thick,
mark=pentagon*,
mark repeat=20,
]
table[x index=0,y index=2, header=true, col sep=comma]
{\experimentprefix/OEG.txt};
\ifnum \showlegend=1
{
\addlegendentry{online-EG}
}
\fi
}
\fi
% ---------------------------------------------------------------------------

% ---------------------------------------------------------------------------
% batch Frank-Wolfe with line search
\ifdefined \showfrankwolfebatch
{

\addplot [
color=orange,
densely dashed,
style=thick,
mark=pentagon,
mark repeat=20,
mark options=solid
]
table[x index=0,y index=1, header=false, col sep=comma]
{\experimentprefix/frankWolfe-LS_lowerenvelope.txt};
\ifnum \showlegend=1
{
\addlegendentry{FW}
}
\fi
}
\fi
% ---------------------------------------------------------------------------

% ---------------------------------------------------------------------------
% cutting planes 
\ifdefined \showcuttingplanes
{

\addplot [
color=black,
densely dashed,
style=thick,
mark=square,
mark repeat=20,
mark options=solid
]
table[x index=0,y index=1, header=false, col sep=comma]
{\experimentprefix/svmStruct_lowerenvelope.txt};
\ifnum \showlegend=1
{
\addlegendentry{cutting plane}
}
\fi
}
\fi
% ---------------------------------------------------------------------------

\end{axis}

\end{tikzpicture}
\normalsize\vspace{-2mm}
        \caption{OCR dataset, $\lambda=1/n$.}
    \end{subfigure}
    \begin{subfigure}[t]{0.32\linewidth}
        \centering
        \def\xlabel{effective passes}
        \def\xmin{1e-1}
        \def\xmax{48}
        \def\ymin{0.002}
        \def\ymax{5}
        \def\xmode{log}
        \def\showlegend{1}
        \def\legendpos{south west}
        \def\showfrankwolfebatch{1}
        \def\showcuttingplanes{1}
        \def\experimentprefix{include/data/dataset=conll_lambda=0.000112}
        \small
\begin{tikzpicture}[scale=0.63]

\begin{axis}[
xlabel=\xlabel,
ylabel=primal suboptimality for problem \eqref{eq:svmstruct_nslack_primal},
xmin=\xmin,
xmax=\xmax,
ymin=\ymin,
ymax=\ymax,
enlargelimits=false, area style,
ymode=log,
xmode=\xmode,
line legend,
legend pos=\legendpos,
]

% ---------------------------------------------------------------------------
% BCFW: Stochastic Frank-Wolfe with line-search

% confidence interval
\addplot[fill=blue,draw=none,forget plot,opacity=0.2] table[x index=0,y
index=1, header=true, col sep=comma]
{\experimentprefix/product-LS_confidence.txt};

% "median" curve itself
\addplot [
color=blue,
solid,
style=thick,
mark=triangle,
mark repeat=20,
]
table[x index=0,y index=2, header=true, col sep=comma]
{\experimentprefix/product-LS.txt};
\ifnum \showlegend=1
{
\addlegendentry{BCFW}
}
\fi
% ---------------------------------------------------------------------------

% ---------------------------------------------------------------------------
% BCFW with tail averaging / suffix averaging (BCFW-tavg)
\ifdefined \showtavg
{
\addplot [
color=cyan,
solid,
style=thick,
mark=triangle*,
mark repeat=20,
mark options=solid
]
table[x index=0,y index=2, header=true, col sep=comma]
{\experimentprefix/product-LS-opt.txt};
\ifnum \showlegend=1
{
\addlegendentry{BCFW-tavg}
}
\fi
}
\fi

% ---------------------------------------------------------------------------
% BCFW with weighted averaging (BCFW-wavg)

% confidence interval
\addplot[fill=cyan,draw=none,forget plot,opacity=0.3] table[x index=0,y
index=1, header=true, col sep=comma]
{\experimentprefix/product-LS-wavg_confidence.txt};

% "median" curve itself
\addplot [
color=cyan,
densely dotted,
style=thick,
mark=square,
mark repeat=20,
mark options=solid
]
table[x index=0,y index=2, header=true, col sep=comma]
{\experimentprefix/product-LS-wavg.txt};
\ifnum \showlegend=1
{
\addlegendentry{BCFW-wavg}
}
\fi
% ---------------------------------------------------------------------------

% ---------------------------------------------------------------------------
% SSG (pegasos)

% confidence interval
\addplot[fill=green,draw=none,forget plot,opacity=0.2] table[x index=0,y
index=1, header=true, col sep=comma]
{\experimentprefix/pegasos_confidence.txt};

\addplot [
color=green,
solid,
style=thick,
mark=o,
mark repeat=20,
]
table[x index=0,y index=2, header=true, col sep=comma]
{\experimentprefix/pegasos.txt};
\ifnum \showlegend=1
{
\addlegendentry{SSG}
}
\fi
% ---------------------------------------------------------------------------

% ---------------------------------------------------------------------------
% SSG with tail averaging / suffix averaging
\ifdefined \showtavg
{
\addplot [
color=purple,
solid,
style=thick,
mark=diamond,
mark repeat=20,
]
table[x index=0,y index=2, header=true, col sep=comma]
{\experimentprefix/optimalSG.txt};
\ifnum \showlegend=1
{
\addlegendentry{SSG-tavg}
}
\fi
}
\fi
% ---------------------------------------------------------------------------
% SSG with weighted averaging

% confidence interval
\addplot[fill=purple,draw=none,forget plot,opacity=0.2] table[x index=0,y
index=1, header=true, col sep=comma]
{\experimentprefix/pegasos-wavg_confidence.txt};

% "median" curve itself
\addplot [
color=purple,
densely dotted,
style=thick,
mark=*,
mark repeat=20,
]
table[x index=0,y index=2, header=true, col sep=comma]
{\experimentprefix/pegasos-wavg.txt};
\ifnum \showlegend=1
{
\addlegendentry{SSG-wavg}
}
\fi
% ---------------------------------------------------------------------------

% ---------------------------------------------------------------------------
% online exponentiated gradient
\ifdefined \showeg
{

% confidence interval
\addplot[fill=gray,draw=none,forget plot,opacity=0.3] table[x index=0,y
index=1, header=true, col sep=comma]
{\experimentprefix/OEG_confidence.txt};

% "median" curve itself
\addplot [
color=gray,
solid,
style=thick,
mark=pentagon*,
mark repeat=20,
]
table[x index=0,y index=2, header=true, col sep=comma]
{\experimentprefix/OEG.txt};
\ifnum \showlegend=1
{
\addlegendentry{online-EG}
}
\fi
}
\fi
% ---------------------------------------------------------------------------

% ---------------------------------------------------------------------------
% batch Frank-Wolfe with line search
\ifdefined \showfrankwolfebatch
{

\addplot [
color=orange,
densely dashed,
style=thick,
mark=pentagon,
mark repeat=20,
mark options=solid
]
table[x index=0,y index=1, header=false, col sep=comma]
{\experimentprefix/frankWolfe-LS_lowerenvelope.txt};
\ifnum \showlegend=1
{
\addlegendentry{FW}
}
\fi
}
\fi
% ---------------------------------------------------------------------------

% ---------------------------------------------------------------------------
% cutting planes 
\ifdefined \showcuttingplanes
{

\addplot [
color=black,
densely dashed,
style=thick,
mark=square,
mark repeat=20,
mark options=solid
]
table[x index=0,y index=1, header=false, col sep=comma]
{\experimentprefix/svmStruct_lowerenvelope.txt};
\ifnum \showlegend=1
{
\addlegendentry{cutting plane}
}
\fi
}
\fi
% ---------------------------------------------------------------------------

\end{axis}

\end{tikzpicture}
\normalsize\vspace{-2mm}
        \caption{CoNLL dataset, $\lambda=1/n$.}
        \label{fig:results_conll_stochastic}
    \end{subfigure}
    \begin{subfigure}[t]{0.32\linewidth}
        \centering
        \def\xlabel{effective passes}
        \def\showlegend{0}
        \def\xmin{1e-1}
        \def\xmax{48}
        \def\ymin{0.04}
        \def\ymax{0.10}
        \def\xmode{log}
        \def\showconfidence{1}
        \def\showfrankwolfebatch{1}
        \def\showcuttingplanes{1}
        \def\legendpos{north west}
        \def\experimentprefix{include/data/dataset=conll_lambda=0.000112}
        \small
\begin{tikzpicture}[scale=0.63]

\pgfplotsset{y tick label style={ 
         scaled ticks=false, 
         /pgf/number format/fixed zerofill, 
         /pgf/number format/fixed, 
         /pgf/number format/precision=3, 
     }
}

\begin{axis}[
xlabel=\xlabel,
ylabel=test error,
xmin=\xmin,
xmax=\xmax,
ymin=\ymin,
ymax=\ymax,
enlargelimits=false, area style,
ymode=normal,
xmode=\xmode,
line legend,
legend pos=\legendpos,
]

% ---------------------------------------------------------------------------
% Stochastic Frank-Wolfe with line-search

% confidence interval
\ifdefined \showconfidence
{
\addplot[fill=blue,draw=none,forget plot,opacity=0.2] table[x index=0,y
index=2, header=true, col sep=comma]
{\experimentprefix/product-LS_confidence.txt};
}
\fi

% "median" curve itself
\addplot [
color=blue,
solid,
style=thick,
mark=triangle,
mark repeat=20,
]
table[x index=0,y index=3, header=true, col sep=comma]
{\experimentprefix/product-LS.txt};
\ifnum \showlegend=1
{
\addlegendentry{BCFW}
}
\fi
% ---------------------------------------------------------------------------

% ---------------------------------------------------------------------------
% BCFW with tail averaging / suffix averaging (BCFW-tavg)
\ifdefined \showtavg
{

\addplot [
color=cyan,
solid,
style=thick,
mark=triangle*,
mark repeat=20,
mark options=solid
]
table[x index=0,y index=3, header=true, col sep=comma]
{\experimentprefix/product-LS-opt.txt};
\ifnum \showlegend=1
{
\addlegendentry{BCFW-tavg}
}
\fi
}
\fi
% ---------------------------------------------------------------------------
% BCFW with weighted averaging (BCFW-wavg)

% confidence interval
\addplot[fill=cyan,draw=none,forget plot,opacity=0.3] table[x index=0,y
index=2, header=true, col sep=comma]
{\experimentprefix/product-LS-wavg_confidence.txt};

% "median" curve itself
\addplot [
color=cyan,
densely dotted,
style=thick,
mark=square,
mark repeat=20,
mark options=solid
]
table[x index=0,y index=3, header=true, col sep=comma]
{\experimentprefix/product-LS-wavg.txt};
\ifnum \showlegend=1
{
\addlegendentry{BCFW-wavg}
}
\fi
% ---------------------------------------------------------------------------

% ---------------------------------------------------------------------------
% SSG / pegasos

\ifdefined \showconfidence
{
% confidence interval
\addplot[fill=green,draw=none,forget plot,opacity=0.3] table[x index=0,y
index=2, header=true, col sep=comma]
{\experimentprefix/pegasos_confidence.txt};
}
\fi

\addplot [
color=green,
solid,
style=thick,
mark=o,
mark repeat=20,
]
table[x index=0,y index=3, header=true, col sep=comma]
{\experimentprefix/pegasos.txt};
\ifnum \showlegend=1
{
\addlegendentry{SSG}
}
\fi
% ---------------------------------------------------------------------------

% ---------------------------------------------------------------------------
% SSG with tail averaging / suffix averaging
\ifdefined \showtavg
{
\addplot [
color=purple,
densely dotted,
style=thick,
mark=diamond,
mark repeat=20,
]
table[x index=0,y index=3, header=true, col sep=comma]
{\experimentprefix/optimalSG.txt};
\ifnum \showlegend=1
{
\addlegendentry{SSG-tavg}
}
\fi
}
\fi

% ---------------------------------------------------------------------------
% SSG with weighted averaging

% confidence interval
\addplot[fill=purple,draw=none,forget plot,opacity=0.2] table[x index=0,y
index=2, header=true, col sep=comma]
{\experimentprefix/pegasos-wavg_confidence.txt};

% "median" curve itself
\addplot [
color=purple,
densely dotted,
style=thick,
mark=*,
mark repeat=20,
]
table[x index=0,y index=3, header=true, col sep=comma]
{\experimentprefix/pegasos-wavg.txt};
\ifnum \showlegend=1
{
\addlegendentry{SSG-wavg}
}
\fi
% ---------------------------------------------------------------------------

% ---------------------------------------------------------------------------
% online exponentiated gradient
\ifdefined \showeg
{

\ifdefined \showconfidence
{
% confidence interval
\addplot[fill=gray,draw=none,forget plot,opacity=0.3] table[x index=0,y
index=2, header=true, col sep=comma]
{\experimentprefix/OEG_confidence.txt};
}
\fi

% "median" curve itself
\addplot [
color=gray,
solid,
style=thick,
mark=pentagon*,
mark repeat=20,
]
table[x index=0,y index=3, header=true, col sep=comma]
{\experimentprefix/OEG.txt};
\ifnum \showlegend=1
{
\addlegendentry{online-EG}
}
\fi
}
\fi
% ---------------------------------------------------------------------------

% ---------------------------------------------------------------------------
% batch Frank-Wolfe with line search
\ifdefined \showfrankwolfebatch
{

\addplot [
color=orange,
densely dashed,
style=thick,
mark=pentagon,
mark repeat=20,
mark options=solid
]
table[x index=0,y index=2, header=false, col sep=comma]
{\experimentprefix/frankWolfe-LS_lowerenvelope.txt};
\ifnum \showlegend=1
{
\addlegendentry{FW}
}
\fi
}
\fi
% ---------------------------------------------------------------------------

% ---------------------------------------------------------------------------
% cutting planes 
\ifdefined \showcuttingplanes
{

\addplot [
color=black,
densely dashed,
style=thick,
mark=square,
mark repeat=20,
mark options=solid
]
table[x index=0,y index=2, header=false, col sep=comma]
{\experimentprefix/svmStruct_lowerenvelope.txt};
\ifnum \showlegend=1
{
\addlegendentry{cutting plane}
}
\fi
}
\fi
% ---------------------------------------------------------------------------

\end{axis}

\end{tikzpicture}
\normalsize\vspace{-2mm}
        \caption{Test error for $\lambda=1/n$ on CoNLL.}
        \label{fig:results_conll_stochastic_testerror}
    \end{subfigure}
    \begin{subfigure}[t]{0.32\linewidth}
        \centering
        \def\xlabel{effective passes}
        \def\xmin{1e-2}
        \def\xmax{30}
        \def\ymin{0.0001}
        \def\ymax{100}
        \def\xmode{log}
        \def\showlegend{1}
        \def\showfrankwolfebatch{1}
        \def\showcuttingplanes{1}
        \def\legendpos{south west}
        \def\experimentprefix{include/data/dataset=matching2_lambda=0.001000}
        \small
\begin{tikzpicture}[scale=0.63]

\begin{axis}[
xlabel=\xlabel,
ylabel=primal suboptimality for problem \eqref{eq:svmstruct_nslack_primal},
xmin=\xmin,
xmax=\xmax,
ymin=\ymin,
ymax=\ymax,
enlargelimits=false, area style,
ymode=log,
xmode=\xmode,
line legend,
legend pos=\legendpos,
]

% ---------------------------------------------------------------------------
% BCFW: Stochastic Frank-Wolfe with line-search

% confidence interval
\addplot[fill=blue,draw=none,forget plot,opacity=0.2] table[x index=0,y
index=1, header=true, col sep=comma]
{\experimentprefix/product-LS_confidence.txt};

% "median" curve itself
\addplot [
color=blue,
solid,
style=thick,
mark=triangle,
mark repeat=20,
]
table[x index=0,y index=2, header=true, col sep=comma]
{\experimentprefix/product-LS.txt};
\ifnum \showlegend=1
{
\addlegendentry{BCFW}
}
\fi
% ---------------------------------------------------------------------------

% ---------------------------------------------------------------------------
% BCFW with tail averaging / suffix averaging (BCFW-tavg)
\ifdefined \showtavg
{
\addplot [
color=cyan,
solid,
style=thick,
mark=triangle*,
mark repeat=20,
mark options=solid
]
table[x index=0,y index=2, header=true, col sep=comma]
{\experimentprefix/product-LS-opt.txt};
\ifnum \showlegend=1
{
\addlegendentry{BCFW-tavg}
}
\fi
}
\fi

% ---------------------------------------------------------------------------
% BCFW with weighted averaging (BCFW-wavg)

% confidence interval
\addplot[fill=cyan,draw=none,forget plot,opacity=0.3] table[x index=0,y
index=1, header=true, col sep=comma]
{\experimentprefix/product-LS-wavg_confidence.txt};

% "median" curve itself
\addplot [
color=cyan,
densely dotted,
style=thick,
mark=square,
mark repeat=20,
mark options=solid
]
table[x index=0,y index=2, header=true, col sep=comma]
{\experimentprefix/product-LS-wavg.txt};
\ifnum \showlegend=1
{
\addlegendentry{BCFW-wavg}
}
\fi
% ---------------------------------------------------------------------------

% ---------------------------------------------------------------------------
% SSG (pegasos)

% confidence interval
\addplot[fill=green,draw=none,forget plot,opacity=0.2] table[x index=0,y
index=1, header=true, col sep=comma]
{\experimentprefix/pegasos_confidence.txt};

\addplot [
color=green,
solid,
style=thick,
mark=o,
mark repeat=20,
]
table[x index=0,y index=2, header=true, col sep=comma]
{\experimentprefix/pegasos.txt};
\ifnum \showlegend=1
{
\addlegendentry{SSG}
}
\fi
% ---------------------------------------------------------------------------

% ---------------------------------------------------------------------------
% SSG with tail averaging / suffix averaging
\ifdefined \showtavg
{
\addplot [
color=purple,
solid,
style=thick,
mark=diamond,
mark repeat=20,
]
table[x index=0,y index=2, header=true, col sep=comma]
{\experimentprefix/optimalSG.txt};
\ifnum \showlegend=1
{
\addlegendentry{SSG-tavg}
}
\fi
}
\fi
% ---------------------------------------------------------------------------
% SSG with weighted averaging

% confidence interval
\addplot[fill=purple,draw=none,forget plot,opacity=0.2] table[x index=0,y
index=1, header=true, col sep=comma]
{\experimentprefix/pegasos-wavg_confidence.txt};

% "median" curve itself
\addplot [
color=purple,
densely dotted,
style=thick,
mark=*,
mark repeat=20,
]
table[x index=0,y index=2, header=true, col sep=comma]
{\experimentprefix/pegasos-wavg.txt};
\ifnum \showlegend=1
{
\addlegendentry{SSG-wavg}
}
\fi
% ---------------------------------------------------------------------------

% ---------------------------------------------------------------------------
% online exponentiated gradient
\ifdefined \showeg
{

% confidence interval
\addplot[fill=gray,draw=none,forget plot,opacity=0.3] table[x index=0,y
index=1, header=true, col sep=comma]
{\experimentprefix/OEG_confidence.txt};

% "median" curve itself
\addplot [
color=gray,
solid,
style=thick,
mark=pentagon*,
mark repeat=20,
]
table[x index=0,y index=2, header=true, col sep=comma]
{\experimentprefix/OEG.txt};
\ifnum \showlegend=1
{
\addlegendentry{online-EG}
}
\fi
}
\fi
% ---------------------------------------------------------------------------

% ---------------------------------------------------------------------------
% batch Frank-Wolfe with line search
\ifdefined \showfrankwolfebatch
{

\addplot [
color=orange,
densely dashed,
style=thick,
mark=pentagon,
mark repeat=20,
mark options=solid
]
table[x index=0,y index=1, header=false, col sep=comma]
{\experimentprefix/frankWolfe-LS_lowerenvelope.txt};
\ifnum \showlegend=1
{
\addlegendentry{FW}
}
\fi
}
\fi
% ---------------------------------------------------------------------------

% ---------------------------------------------------------------------------
% cutting planes 
\ifdefined \showcuttingplanes
{

\addplot [
color=black,
densely dashed,
style=thick,
mark=square,
mark repeat=20,
mark options=solid
]
table[x index=0,y index=1, header=false, col sep=comma]
{\experimentprefix/svmStruct_lowerenvelope.txt};
\ifnum \showlegend=1
{
\addlegendentry{cutting plane}
}
\fi
}
\fi
% ---------------------------------------------------------------------------

\end{axis}

\end{tikzpicture}
\normalsize\vspace{-2mm}
        \caption{Matching dataset, $\lambda=0.001$.}
    \end{subfigure}
    \caption{
    The shaded areas for the stochastic methods (\emph{BCFW}, \emph{SSG} and \emph{online-EG}) indicate the
    worst and best objective achieved in 10 randomized runs.
    The top row compares the suboptimality achieved by different solvers for different regularization parameters $\lambda$. For large $\lambda$ (a), the stochastic algorithms (\emph{BCFW} and \emph{SSG})
    perform considerably better than the batch solvers (\emph{cutting plane} and
    \emph{FW}).
    For a small $\lambda$ (c), even the batch solvers achieve a
    lower objective earlier on than \emph{SSG}. %
    Our proposed
    \emph{BCFW} algorithm achieves a low objective
    in both settings.
    (d) shows the
    convergence for CoNLL with the first passes in more details.
    Here \emph{BCFW} already results
    in a low objective even after seeing only few datapoints. 
	The advantage is less clear for the test error in (e) though, where \emph{SSG-wavg} does surprisingly well. 
    Finally, (f) compares the methods for the matching prediction task.
    }
    \label{fig:results}
    \vspace{-3mm}
\end{figure*}
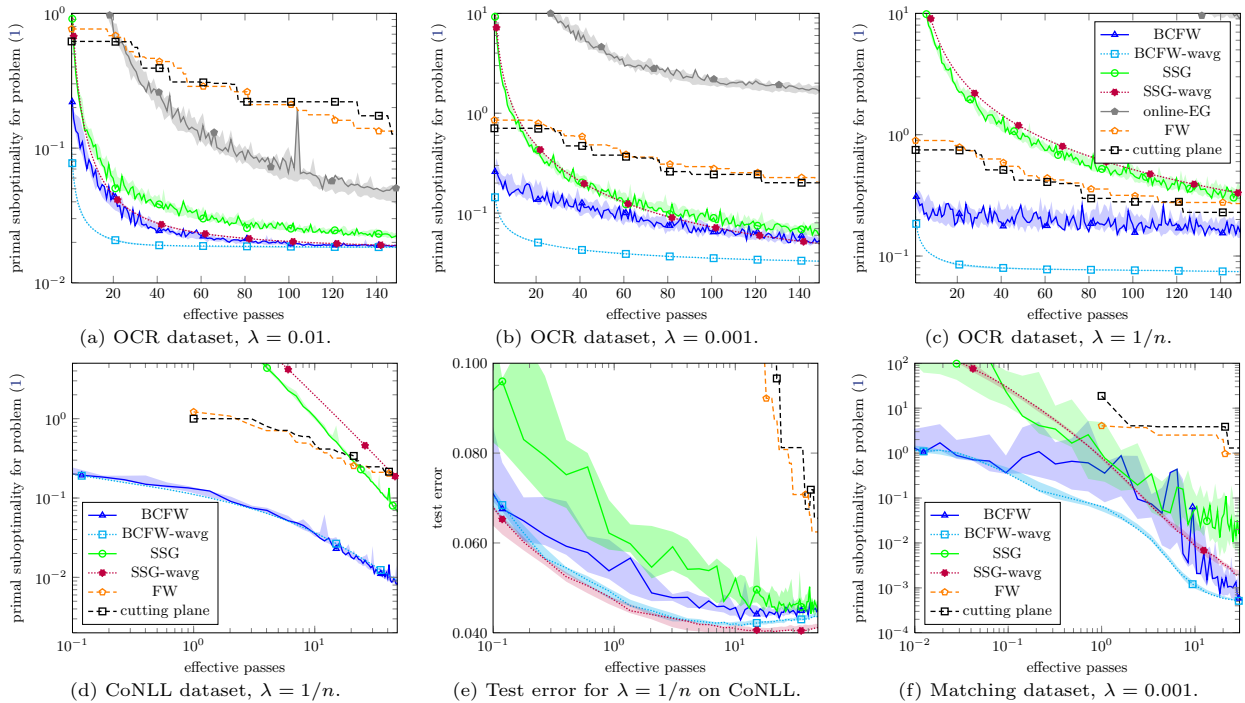

We compare our novel Frank-Wolfe approach to existing algorithms
for training structural SVMs on the OCR dataset {\small($n=6251,d=4028$)} from~\citet{Taskar2003} and the CoNLL
dataset {\small($n=8936,d=1643026$)} from~\citet{Sang2000}. Both datasets are sequence
labeling tasks, where the loss-augmented decoding problem can be solved
exactly by the Viterbi algorithm.
Our third application is a word alignment problem between sentences in different languages in the setting of~\citet{Taskar06extrag} {\small($n=5000,d=82$)}. Here, the structured labels are bipartite matchings, for which computing marginals over labels as required by the methods of~\citet{Collins2008,Zhang:2011:ATM3net} is intractable, but loss-augmented decoding can be done efficiently by solving a min-cost flow problem.

We compare Algorithms~\ref{alg:FW_SVM} and~\ref{alg:FW_product_SVM}, the batch Frank-Wolfe method (\emph{FW})\footnote{This is equivalent to the batch subgradient method with an adaptive step-size, as mentioned in Section~\ref{sec:FW_SVM}.} and our novel block-coordinate Frank-Wolfe method (\emph{BCFW}), to 
the \emph{cutting plane} algorithm implemented in SVMstruct~\citep{Joachims:2009ex} with its default options,
the online exponentiated gradient (\emph{online-EG}) method of~\citet{Collins2008}, and
the stochastic subgradient method (\emph{SSG}) with step-size chosen as in the `Pegasos' version of~\citet{ShalevShwartz:2010cg}. We also include the weighted average $\bar{\weightv}^{(k)}:=\frac{2}{k(k+1)} \sum_{t=1}^k t \weightv^{(t)}$ of the iterates from \emph{SSG} (called \emph{SSG-wavg}) which was recently shown to converge at the faster rate of $O(1/k)$ instead of $O\left((\log{k})/k\right)$ \citep{LacosteJulien:2012uo, Shamir:2013vw}. 
Analogously, we average the iterates from \emph{BCFW} the same way to obtain the \emph{BCFW-wavg} method (implemented efficiently with the optional line in Algorithm~\ref{alg:FW_product_SVM}), which also has a provable $O(1/k)$ convergence rate (Theorem~\ref{thm:primalDualGreedy1Regime}).
The performance of the different algorithms according to several criteria is
visualized in Figure~\ref{fig:results}. The results are discussed in the
caption, while additional experiments can be found in
Appendix~\ref{sec:additional_experiments}. In most of the experiments, %
the \emph{BCFW-wavg} method dominates all competitors. The
superiority is especially striking for the first few iterations, and when
using a small regularization strength $\lambda$, which is often needed in practice. In term of test error, a peculiar observation is that the weighted average of the iterates seems to help both methods significantly: $\emph{SSG-wavg}$ sometimes slightly outperforms $\emph{BCFW-wavg}$ despite having the worst objective value amongst all methods. This phenomenon is worth further investigation.

\section{Related Work}
\label{sec:related}

\begin{table*}[t]
\caption{\small{Convergence rates given in the \emph{number of calls to the oracles} for different optimization algorithms for the structural SVM objective~\eqref{eq:svmstruct_nslack_primal} in the case of a Markov random field structure, to reach a specific accuracy $\varepsilon$ measured for different types of gaps, in term of the number of training examples $n$, regularization parameter $\regularizerweight$, size of the label space~$|\outputdomain|$, maximum feature norm $R:= \max_{i,\outputvarv } \norm{\featuremapdiffv_i(\outputvarv)}_2$ (some minor terms were ignored for succinctness). Table inspired from~\citep{Zhang:2011:ATM3net}.
Notice that only stochastic subgradient and our proposed algorithm have rates independent of $n$.
\vspace{2mm}
}}
\label{tab:rates}
\centering
\resizebox{0.95\textwidth}{!}{
\begin{tabular}{m{4.65cm}|cccc|c}
\hline
\textbf{Optimization algorithm} & \textbf{\!Online} & \textbf{\!Primal/Dual} & \textbf{Type of guarantee} & \textbf{Oracle type} & \textbf{\# Oracle calls\!\!} \\ \hline\hline
dual extragradient \citep{Taskar06extrag} & no & primal-`dual' & saddle point gap & \!\!\!\!Bregman projection & $O\left(\frac{n R \log |\outputdomain|}{\regularizerweight \varepsilon}\right)$ \\ \hline
online exponentiated gradient \citep{Collins2008}  & yes & dual & expected dual error & expectation &  $O\left(\frac{(n + \log |\outputdomain|) R^2 }{\regularizerweight \varepsilon}\right)$ \\ \hline
excessive gap reduction \citep{Zhang:2011:ATM3net}  & no & primal-dual & duality gap & expectation &  $O\left(n R \sqrt{\frac{\log |\outputdomain|}{\regularizerweight \varepsilon}}\right)$ \\ \hline
BMRM \citep{Teo:2010bundle} & no & primal & $\geq$primal error & maximization & $O\left(\frac{n R^2}{\regularizerweight \varepsilon}\right)$ \\ \hline
1-slack SVM-Struct \citep{Joachims:2009ex} & no & primal-dual & duality gap & maximization & $O\left(\frac{n R^2}{\regularizerweight \varepsilon}\right)$ \\ \hline
stochastic \! subgradient \citep{ShalevShwartz:2010cg} & yes & primal & primal error w.h.p. & maximization & $\tilde{O}\left(\frac{R^2}{\regularizerweight \varepsilon}\right)$ \\ \hline
this paper: block-coordinate Frank-Wolfe & yes & primal-dual & expected duality gap &  maximization & $O\left(\frac{R^2}{\regularizerweight \varepsilon}\right)$ Thm.~\ref{thm:convergence_FW_product_SVM} \\ \hline
\end{tabular}
}
\vspace{-1em}
\end{table*}

There has been substantial work on dual coordinate descent for SVMs,
including the original sequential minimal optimization (SMO) algorithm. The SMO algorithm
was generalized to structural SVMs~\citep[Chapter 6]{taskar04thesis},
but its convergence
rate scales badly with the size of the output space: it was estimated as
$O\left(n |\outputdomain|/\regularizerweight \varepsilon\right)$ in~\citet{Zhang:2011:ATM3net}.
Further, this method requires an expectation oracle to work with its factored dual parameterization.
As in our algorithm,~\citet{Rousu:2006vv} propose updating one training example at a time, but using multiple Frank-Wolfe updates to optimize along the subspace. However,
they do not obtain any rate guarantees and their algorithm is less general because it again requires an expectation oracle.
In
the degenerate \emph{binary} SVM case, our block-coordinate Frank-Wolfe algorithm is actually equivalent to the method of~\citet{Hsieh:2008bd},
where because each datapoint has a unique dual variable, exact coordinate optimization can be accomplished by the line-search step of our algorithm.
\citet{Hsieh:2008bd} show a local linear convergence rate
in the dual, and our results complement theirs by providing a global \emph{primal} convergence guarantee for their
algorithm of $O\left(1/ \varepsilon\right)$. 
After our paper had appeared on arXiv, \citet{ShalevShwartz:2012tn} have proposed a generalization of dual coordinate descent applicable to several regularized losses, including the structural SVM objective. Despite being motivated from a different perspective, a version of their algorithm (Option II of Figure 1) gives the exact same step-size and update direction as \emph{BCFW} with line-search, and their Corollary 3 gives a similar convergence rate as our Theorem~\ref{thm:convergence_FW_product_SVM}. \citet{balamurugan11SDM} %
propose to approximately solve a quadratic problem on each example using \emph{SMO}, but they do not provide any rate guarantees.
The \emph{online-EG} method implements a variant of dual coordinate descent, but it requires an expectation oracle
and~\citet{Collins2008} estimate its primal convergence at only~$O\left(1/\varepsilon^2\right)$.

Besides coordinate descent methods, a variety of other algorithms have been proposed for structural SVMs. We summarize a few of the most popular in Table~\ref{tab:rates}, with their convergence rates quoted in number of oracle calls to reach an accuracy of $\varepsilon$. However, we note that almost no guarantees are given for the optimization of
structural SVMs with approximate oracles.
A regret analysis in the context of online optimization was considered by~\citet{Ratliff:2007subgradient}, but they do not analyze the effect of this on solving the optimization problem.
The cutting plane algorithm of~\citet{Tsochantaridis2005} was considered with approximate maximization by~\citet{finley08svmstruct-app}, though the dependence of the running time on the the approximation error was left unclear. In contrast, we provide guarantees for batch subgradient, cutting plane, and block-coordinate Frank-Wolfe, for achieving an $\varepsilon$-approximate solution as long as the error of the oracle is appropriately bounded. %

\vspace{-2mm}
\section{Discussion}

This work proposes a novel randomized block-coordinate generalization of the classic Frank-Wolfe algorithm for optimization with block-separable constraints. Despite its potentially much lower iteration cost, the new algorithm achieves a similar convergence rate in the duality gap as the full Frank-Wolfe method. For the dual structural SVM optimization problem, it leads to a simple online algorithm that  yields a solution to an issue that is notoriously difficult to address for stochastic algorithms: no step-size sequence needs to be tuned since the optimal step-size can be efficiently computed in closed-form. Further, at the cost of an additional pass through the data %
(which could be done alongside a full Frank-Wolfe iteration), it allows us to compute a duality gap guarantee that can be used to decide when to terminate the algorithm. Our experiments indicate that empirically it converges faster than other stochastic algorithms for the structural SVM problem, especially in the realistic setting where only a few passes through the data are possible.

Although our structural SVM experiments use an exact maximization oracle, the duality gap guarantees, the optimal step-size, 
and a computable bound on the duality gap are all still available when only an appropriate approximate maximization oracle is used. Finally, %
although the structural SVM problem is what motivated this work, we expect that the block-coordinate Frank-Wolfe algorithm may be useful for other problems in machine learning where a complex objective with block-separable constraints arises.
\vspace{-3mm}
\paragraph{Acknowledgements.}
We thank
Francis Bach, %
Bernd G{\"a}rtner and %
Ronny Luss %
for helpful discussions, and
Robert Carnecky for the 3D illustration. %
MJ is supported by the ERC Project SIPA, and by the Swiss National Science Foundation. %
SLJ and MS are partly supported by the ERC (SIERRA-ERC-239993). 
SLJ is supported by a Research in Paris fellowship. MS is supported 
by a NSERC postdoctoral fellowship. 

{\small
\bibliography{references}
\bibliographystyle{icml2013}
}

\clearpage
\appendix

\numberwithin{definition}{section}
\numberwithin{algorithm}{section}

\onecolumn

\icmltitle{Supplementary Material\\
Block-Coordinate Frank-Wolfe Optimization for Structural SVMs}
\vskip 0.3in

\paragraph{Outline.}
In Appendix \ref{sec:app_curvature}, we discuss the curvature constants and compute them for the structural SVM problem.
In Appendix \ref{sec:app_SVM_algos}, we give additional details on applying the Frank-Wolfe algorithms to the structural SVM and provide proofs for Theorems \ref{thm:convergence_FW_SVM} and \ref{thm:convergence_FW_product_SVM}.
In the main Appendix \ref{sec:app_convergence_proof}, we give a self-contained presentation and analysis of the new block-coordinate Frank-Wolfe method (Algorithm~\ref{alg:FW_product}), and prove the main convergence Theorem~\ref{thm:convergence_FW_product}.
In Appendix \ref{sec:Fenchel_gap_equivalence}, the `linearization'-duality gap is interpreted in terms of Fenchel duality. For completeness, we include a short derivation of the dual problem to the structural SVM in Appendix \ref{sec:app_duals}. Finally, we present in Appendix~\ref{sec:additional_experiments} additional experimental results as well as more detailed information about the implementation.

\section{The Curvature Constants $\Cf$ and $\CfTotal$}\label{sec:app_curvature}

\paragraph{The Curvature Constant $\Cf$.}
The \emph{curvature constant} $\Cf$ is given by the maximum relative deviation of the objective function $f$ from its linear approximations, over the domain~$\domain$~\citep{Clarkson:2010hv,Jaggi:2013wg}.
Formally,
\begin{equation}\label{eq:Cf}
  \Cf := \sup_{\substack{\x,\sv\in \domain, \\
                      \stepsize\in[0,1],\\
                      \y = \x+\stepsize(\sv-\x)}}
          \frac{2}{\stepsize^2}\left( f(\y)-f(\x)-\langle \y-\x, \nabla f(\x)\rangle \right) \ .
\end{equation}

The assumption of bounded $\Cf$ corresponds to a slightly weaker, affine invariant form of a \emph{smoothness} assumption on $f$.
It is known that $\Cf$ is upper bounded by the Lipschitz constant of the
gradient $\nabla f$ times the squared diameter of $\domain$, for any arbitrary choice of a norm~\citep[Lemma 8]{Jaggi:2013wg}; %
but it can also be much smaller (in particular, when the dimension of the affine hull of $\domain$ is smaller than the ambient space), so it is a more fundamental quantity in the analysis of the Frank-Wolfe algorithm than the Lipschitz constant of the gradient. As pointed out by~\citet[Section 2.4]{Jaggi:2013wg}, $\Cf$ is invariant under affine transformations, as is the Frank-Wolfe algorithm.

\paragraph{The Product Curvature Constant $\CfTotal$.}
The curvature concept can be generalized to our setting of product domains $\domain := \domain^{(1)}\times\mathellipsis\times\domain^{(n)}$ as follows: over each individual coordinate block, the curvature is given by
\begin{equation}\label{eq:CfBlock}
  \Cf^{(i)} := \sup_{\substack{\x\in \domain,\,\sv_{(i)}\in \domain^{(i)}, \\
                      \stepsize\in[0,1],\\ %
                      \y = \x+\stepsize(\sv_{[i]}-\x_{[i]})}}
          \frac{2}{\stepsize^2}\left( f(\y)-f(\x)-\langle \y_{(i)}-\x_{(i)}, \nabla_{\!(i)} f(\x)\rangle \right) \ ,
\end{equation}
where the notation $\x_{[i]}$ refers to the zero-padding of $\x_{(i)}$ so that $\x_{[i]} \in \domain$. By considering the Taylor expansion of $f$, it is not hard to see that also the `partial' curvature $\Cf^{(i)}$ is upper bounded by the Lipschitz constant of the partial gradient $\nabla_{\!(i)} f$ times the squared diameter of just one domain block $\domain^{(i)}$. %
See also the proof of Lemma \ref{lem:curvature_FW_SVM_product} below.

We define the global \emph{product curvature constant} as the sum of these curvatures for each block, i.e.
\begin{equation}\label{eq:CfProductGlobal}
  \CfTotal := \sum_{i=1}^n \Cf^{(i)}
\end{equation}
Observe that for the classical Frank-Wolfe case when $n=1$, we recover the original curvature constant.

\paragraph{Computing the Curvature Constant $\Cf$ in the SVM Case.}
\begin{lemma}\label{lem:curvature_FW_SVM}
For the dual structural SVM objective function~\eqref{eq:svmstruct_nslack_dual} over the domain $\domain := \simplex_{|\outputdomain_1|}\times\mathellipsis\times\simplex_{|\outputdomain_n|}$, the curvature constant $\Cf$,  as defined in \eqref{eq:Cf}, is upper bounded by
\[
\Cf \le \frac{4R^2}{\lambda} \ ,
\]
where $R$ is the maximal length of a difference feature vector, i.e. $R:= \displaystyle\max_{i\in[n],\outputvarv \in \outputdomain_i}\norm{\featuremapdiffv_i(\outputvarv)}_2$~.
\end{lemma}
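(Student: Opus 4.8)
The plan is to exploit the fact that the dual objective $f(\dualvarv) = \frac{\lambda}{2}\norm{A\dualvarv}^2 - \bv^T\dualvarv$ is an (inhomogeneous) quadratic, so that the second-order Taylor expansion underlying the curvature definition~\eqref{eq:Cf} is \emph{exact} and the step-size $\stepsize$ cancels entirely, reducing the curvature to a simple diameter-type quantity. First I would record the gradient $\nabla f(\dualvarv) = \lambda A^TA\dualvarv - \bv$ and the constant Hessian $\nabla^2 f = \lambda A^TA$. Substituting $\y = \x + \stepsize(\sv-\x)$ into the bracket of~\eqref{eq:Cf} and using the exact identity $f(\y) - f(\x) - \langle \y - \x, \nabla f(\x)\rangle = \frac12(\y-\x)^T(\lambda A^TA)(\y-\x) = \frac{\lambda}{2}\stepsize^2\norm{A(\sv-\x)}^2$, the prefactor $\frac{2}{\stepsize^2}$ cancels the $\stepsize^2$. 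The supremum over $\stepsize$ becomes vacuous, leaving the clean formula $\Cf = \sup_{\x,\sv\in\domain}\lambda\norm{A(\sv-\x)}^2$.

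The main work is then to bound $\norm{A(\sv-\x)}$ uniformly over the product-of-simplices domain $\domain = \simplex_{|\outputdomain_1|}\times\cdots\times\simplex_{|\outputdomain_n|}$. Here I would recall from the definition of $A$ that $A\dualvarv = \frac{1}{\lambda n}\sum_{i}\sum_{\outputvarv\in\outputdomain_i}\dualvar_i(\outputvarv)\featuremapdiffv_i(\outputvarv)$, so that for each block $i$ the quantity $\bm{g}_i(\dualvarv) := \sum_{\outputvarv\in\outputdomain_i}\dualvar_i(\outputvarv)\featuremapdiffv_i(\outputvarv)$ is a convex combination of vectors $\featuremapdiffv_i(\outputvarv)$, each of norm at most $R$; convexity of the norm then gives $\norm{\bm{g}_i(\dualvarv)}\le R$. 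Writing $A(\sv-\x) = \frac{1}{\lambda n}\sum_i\big(\bm{g}_i(\sv) - \bm{g}_i(\x)\big)$ and applying the triangle inequality yields $\norm{A(\sv-\x)} \le \frac{1}{\lambda n}\sum_i\big(\norm{\bm{g}_i(\sv)} + \norm{\bm{g}_i(\x)}\big) \le \frac{1}{\lambda n}\cdot n\cdot 2R = \frac{2R}{\lambda}$. Squaring and multiplying by $\lambda$ gives $\Cf \le \lambda\cdot\frac{4R^2}{\lambda^2} = \frac{4R^2}{\lambda}$, as claimed.

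Because $f$ is quadratic there is no genuinely hard obstacle; the only point requiring care is the per-block bound $\norm{\bm{g}_i(\dualvarv)}\le R$, where one must use that $\dualvarv$ lies in a \emph{product} of simplices so that each block $\dualvarv_{(i)}$ is itself a probability vector and $\bm{g}_i(\dualvarv)$ is a true convex average of difference features. This block structure is exactly what lets the triangle inequality close with the factor $2R$ per block and produces a final bound independent of $n$. The same exact-Taylor computation will later serve as the template for the partial curvatures $\Cf^{(i)}$ in~\eqref{eq:CfBlock}, which is why I would phrase the diameter estimate block by block rather than globally.
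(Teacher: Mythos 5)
Your proposal is correct and follows essentially the same route as the paper: exploit that the dual objective is quadratic so the curvature reduces exactly to $\lambda\sup_{\x,\sv\in\domain}\norm{A(\sv-\x)}^2$, then bound this diameter by noting that each block of $A\dualvarv$ is $\frac{1}{\lambda n}$ times a convex combination of difference feature vectors of norm at most $R$, giving $\norm{A(\sv-\x)}\le \frac{2R}{\lambda}$ via the triangle inequality. The paper's proof organizes the final estimate slightly differently (bounding $\sup_{\vv,\ww\in A\domain}\norm{\vv-\ww}$ by $2\sup_{\vv\in A\domain}\norm{\vv}$ first, then bounding $\norm{\vv}\le n\cdot\frac{R}{\lambda n}$), but this is the same computation.
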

\begin{proof}[Proof of Lemma \ref{lem:curvature_FW_SVM}]
If the objective function is twice differentiable, we can plug-in the second degree
Taylor expansion of~$f$ into the above definition~\eqref{eq:Cf} of the
curvature, see e.g.  \citep[Inequality (2.12)]{Jaggi:2011ux} or \citep[Section 4.1]{Clarkson:2010hv}. %
In our case, the gradient at $\dualvarv$ is given by $\lambda A^TA\dualvarv - \bv$, so that the Hessian is $\lambda A^TA$, being a constant matrix independent of $\dualvarv$. This gives the following upper bound\footnote{Because our function is a quadratic function, this is actually an equality.} on~$\Cf$, which we can separate into two identical matrix-vector products with our matrix~$A$:
\[
\begin{split}
\Cf \le& \sup_{\substack{\x,\y \in \domain, \\
                    \zz \in [\x,\y] \subseteq \domain}}
            (\y-\x)^T \nabla^2 f(\zz) (\y-\x)  \\
  =& \  \lambda \cdot \sup_{\x,\y \in \domain}
            (A(\y-\x))^T A(\y-\x) \\
  =& \  \lambda \cdot \sup_{\vv,\ww \in A\domain}
            \norm{\vv-\ww}_2^2
        \ \leq  \lambda\cdot \sup_{\vv \in A\domain}
            \norm{2\vv}_2^2
\end{split}
\]
By definition of our compact domain $\domain$, we have that each vector $\vv\in A\domain$ is precisely the sum of $n$ vectors, each of these being a convex combination of the feature vectors for the possible labelings for datapoint $i$.

Therefore, the norm $\norm{\vv}_2$ is upper bounded by $n$ times the longest column of the matrix $A$, or more formally $\norm{\vv}_2 \le n \frac1{\lambda n} R$ with $R$ being the longest\footnote{%
This choice of the radius $R$ then gives $\frac1{\lambda n} R
= \max_{i\in[n],\outputvarv \in \outputdomain_i} \norm{ \frac1{\lambda n} \featuremapdiffv_i(\outputvarv)}_2
= \max_{i\in[n],\outputvarv \in \outputdomain_i} \norm{ A_{(i,\outputvarv)}} $.%
} feature vector, i.e.
\[
R:= \max_{i\in[n],\outputvarv \in \outputdomain_i}\norm{\featuremapdiffv_i(\outputvarv)}_2 \ .
\]
Altogether, we have obtained that the curvature $\Cf$ is upper bounded by $\frac{4R^2}{\lambda}$.

We also note that in the worst case, this bound is tight. For example, we can make $\Cf = \frac{4R^2}{\lambda}$ by having for each datapoint $i$, two labelings which give opposite difference feature vectors $\psi_i$ of the same maximal norm $R$.
\end{proof}

\paragraph{Computing the Product Curvature Constant $\CfTotal$ in the SVM Case.}
\begin{lemma}\label{lem:curvature_FW_SVM_product}
For the dual structural SVM objective function~\eqref{eq:svmstruct_nslack_dual} over the domain $\domain := \simplex_{|\outputdomain_1|}\times\mathellipsis\times\simplex_{|\outputdomain_n|}$, the total curvature constant $\CfTotal$ on the product domain $\domain$, as defined in \eqref{eq:CfProductGlobal}, is upper bounded by
\[
\CfTotal \le \frac{4R^2}{\lambda n} \
\]
where $R$ is the maximal length of a difference feature vector, i.e. $R:= \displaystyle\max_{i\in[n],\outputvarv \in \outputdomain_i}\norm{\featuremapdiffv_i(\outputvarv)}_2$~.
\end{lemma}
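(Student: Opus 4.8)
The plan is to mirror the proof of Lemma~\ref{lem:curvature_FW_SVM}, but exploiting that each block curvature $\Cf^{(i)}$ only involves perturbations supported on a single block, which is exactly where the extra factor of $1/n$ comes from. First I would record that $f$ is quadratic with constant Hessian $\nabla^2 f = \lambda A^T\!A$, so its second-order Taylor expansion is exact and the bracket in~\eqref{eq:CfBlock} equals $\frac{\stepsize^2}{2}(\y-\x)^T \lambda A^T\!A\,(\y-\x)$. Since here $\y - \x = \stepsize(\sv_{[i]} - \x_{[i]})$ is supported entirely on block $i$, the partial-gradient inner product $\langle \y_{(i)} - \x_{(i)}, \nabla_{\!(i)} f(\x)\rangle$ coincides with the full inner product $\langle \y-\x, \nabla f(\x)\rangle$, so nothing is lost in passing to the block definition. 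Substituting and cancelling the $\frac{2}{\stepsize^2}$ factor then yields the clean expression
\[
\Cf^{(i)} = \lambda \sup_{\x\in\domain,\, \sv_{(i)}\in\domain^{(i)}} \norm{A(\sv_{[i]} - \x_{[i]})}_2^2 \, .
\]

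The key step is then to bound this supremum. Because the perturbation is confined to block $i$, the vector $A(\sv_{[i]} - \x_{[i]})$ only involves the columns of $A$ associated with datapoint $i$, namely $\frac1{\lambda n}\featuremapdiffv_i(\outputvarv)$ for $\outputvarv\in\outputdomain_i$. As $\sv_{(i)}$ and $\x_{(i)}$ both lie in the simplex $\simplex_{|\outputdomain_i|}$, each of $A\sv_{[i]}$ and $A\x_{[i]}$ is a convex combination of these columns, hence has norm at most $\frac1{\lambda n}R$ by the definition of $R$. The triangle inequality then gives $\norm{A(\sv_{[i]} - \x_{[i]})}_2 \le \frac{2R}{\lambda n}$, so that $\Cf^{(i)} \le \lambda\big(\frac{2R}{\lambda n}\big)^2 = \frac{4R^2}{\lambda n^2}$.

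Finally I would sum over the $n$ blocks: $\CfTotal = \sum_{i=1}^n \Cf^{(i)} \le n\cdot\frac{4R^2}{\lambda n^2} = \frac{4R^2}{\lambda n}$, which is the claimed bound. The only subtlety — and the reason this is sharper than naively invoking Lemma~\ref{lem:curvature_FW_SVM} — is the observation that a single-block perturbation touches only one datapoint's feature columns, so the relevant norm is bounded by $R/(\lambda n)$ rather than the $n\cdot R/(\lambda n) = R/\lambda$ that arises when all $n$ blocks move simultaneously in the full Frank-Wolfe analysis. I do not expect any real obstacle beyond carefully tracking the zero-padding $\sv_{[i]}$ and confirming that the partial-gradient term matches the full gradient term for block-supported differences.
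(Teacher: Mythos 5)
Your proposal is correct and takes essentially the same route as the paper's proof: both use the (exact, since $f$ is quadratic) second-order Taylor expansion to reduce $\Cf^{(i)}$ to $\lambda\sup\norm{A(\sv_{[i]}-\x_{[i]})}_2^2$, observe that a single-block perturbation involves only the columns $\frac{1}{\lambda n}\featuremapdiffv_i(\outputvarv)$ of datapoint $i$ so that each of $A\sv_{[i]}$ and $A\x_{[i]}$ is a convex combination of vectors of norm at most $\frac{R}{\lambda n}$ (giving $\Cf^{(i)}\le\frac{4R^2}{\lambda n^2}$), and then sum over the $n$ blocks. No gaps; your explicit remark that the partial-gradient term agrees with the full gradient term for block-supported differences is a point the paper leaves implicit.
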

\begin{proof}
We follow the same lines as in the above proof of Lemma \ref{lem:curvature_FW_SVM}, but now applying the same bound to the block-wise definition \eqref{eq:CfBlock} of the curvature on the $i$-th block.
Here, the change from $\x$ to $\y$ is now restricted to only affect the coordinates in the $i$-th block $\domain^{(i)}$. To simplify the notation, let $\domain^{[i]}$ be $\domain^{(i)}$ augmented with the zero domain for all the other blocks -- i.e. the analog of $\x_{(i)} \in \domain^{(i)}$ is $\x_{[i]} \in \domain^{[i]}$. $\x_{(i)}$ is the $i$-th block of $\x$ whereas $\x_{[i]} \in \domain$ is $\x_{(i)}$ padded with zeros for all the other blocks. We thus require that $\y-\x \in \domain^{[i]}$ for a valid change from $\x$ to $\y$. Again by the degree-two Taylor expansion, we obtain
\[
\begin{split}
\Cf^{(i)} \le& \sup_{\substack{\x,\y \in \domain, \\
                    (\y-\x) \in \domain^{[i]} \\
                    \zz \in [\x,\y] \subseteq \domain}}
            (\y-\x)^T \nabla^2 f(\zz) (\y-\x)  \\
  =& \  \lambda \cdot \sup_{\substack{\x,\y \in \domain\\
                    (\y-\x) \in \domain^{[i]}}}
            (A(\y-\x))^T A(\y-\x) \\
  =& \  \lambda \cdot \sup_{\vv,\ww \in A\domain^{(i)}}
            \norm{\vv-\ww}_2^2
        \ \leq  \lambda\cdot \sup_{\vv \in A\domain^{(i)}}
            \norm{2\vv}_2^2
\end{split}
\]
In other words, by definition of our compact domain $\domain^{(i)}=\simplex_{|\outputdomain_i|}$, we have that each vector $\vv\in A\domain^{(i)}$ %
 is a convex combination of the feature vectors corresponding to the possible labelings for datapoint $i$.
Therefore, the norm $\norm{\vv}_2$ is again upper bounded by the longest column of the matrix~$A$, which means $\norm{\vv}_2 \le \frac1{\lambda n} R$ with $R:= \max_{i\in[n],\outputvarv \in \outputdomain_i}\norm{\featuremapdiffv_i(\outputvarv)}_2$.
Summing up over the $n$ blocks $\domain^{(i)}$, we obtain that the product curvature $\CfTotal$ is upper bounded by $\frac{4R^2}{\lambda n}$.

For the same argument as at the end of the proof for Lemma~\ref{lem:curvature_FW_SVM}, this bound is actually tight in the worst case.
\end{proof}

\section{More Details on the Algorithms for Structural SVMs}\label{sec:app_SVM_algos}

\subsection{Equivalence of an Exact Frank-Wolfe Step and Loss-Augmented Decoding} \label{ssec:FW_equivalent_decoding}

To see that the proposed Algorithm~\ref{alg:FW_SVM} indeed exactly corresponds to the standard Frank-Wolfe Algorithm~\ref{alg:FW} applied to the SVM dual problem \eqref{eq:svmstruct_nslack_dual}, we verify that the search direction $\sv$ giving the update $\weightv_{\sv} = A\sv$ is in fact an exact Frank-Wolfe step, which can be seen as follows:
\begin{lemma}\label{lem:condGradient}
The sparse vector $\sv\in\R^n$ constructed in the inner for-loop of Algorithm \ref{alg:FW_SVM}
is an exact solution to $\sv = \argmin_{\sv'\in \domain} \left\langle \sv', \nabla f(\dualvarv^{(k)}) \right\rangle$ for optimization problem~\eqref{eq:svmstruct_nslack_dual}.
\end{lemma}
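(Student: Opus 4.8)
The plan is to exploit the product structure of the domain $\domain = \simplex_{|\outputdomain_1|}\times\mathellipsis\times\simplex_{|\outputdomain_n|}$ together with the fact that a linear function attains its minimum over a polytope at a vertex. First I would observe that the linear objective $\langle \sv', \nabla f(\dualvarv^{(k)})\rangle$ separates as a sum over the $n$ coordinate blocks,
\[
\langle \sv', \nabla f(\dualvarv^{(k)})\rangle = \sum_{i=1}^n \big\langle \sv'_{(i)}, \nabla_{\!(i)} f(\dualvarv^{(k)})\big\rangle \ ,
\]
and since $\domain$ is a Cartesian product of simplices with no coupling constraints between blocks, the minimization over $\sv'\in\domain$ decouples into $n$ independent minimizations, one over each simplex $\simplex_{|\outputdomain_i|}$.

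Next I would invoke the elementary fact that the minimum of a linear function over a simplex is attained at one of its vertices, which here are exactly the unit vectors $\unit^{\outputvarv}$ for $\outputvarv\in\outputdomain_i$. Concretely, minimizing $\langle \sv'_{(i)}, \nabla_{\!(i)} f(\dualvarv^{(k)})\rangle$ over $\sv'_{(i)}\in\simplex_{|\outputdomain_i|}$ puts all mass on the coordinate with the smallest gradient component, i.e. on $\unit^{\outputvarv}$ for any $\outputvarv$ minimizing the $(i,\outputvarv)$-th entry of $\nabla f(\dualvarv^{(k)})$.

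It then remains to identify this minimizing coordinate with the loss-augmented decoding solution. Recalling from Section~\ref{sec:setup} that the $(i,\outputvarv)$-th component of the gradient is $[\nabla f(\dualvarv^{(k)})]_{(i,\outputvarv)} = -\frac1n H_i(\outputvarv;\weightv^{(k)})$ (with $\weightv^{(k)} = A\dualvarv^{(k)}$), minimizing this entry over $\outputvarv\in\outputdomain_i$ is equivalent to \emph{maximizing} $H_i(\outputvarv;\weightv^{(k)})$, which is precisely the subproblem~\eqref{eq:subproblem_loss_augm} solved in the inner for-loop to produce $\outputvarv_i^*$. Hence the block-wise minimizer is $\unit^{\outputvarv_i^*}$, and assembling the blocks gives $\sv = (\unit^{\outputvarv_1^*},\dots,\unit^{\outputvarv_n^*})$, so that $\weightv_{\sv} = A\sv = \sum_{i=1}^n \frac1{\lambda n}\featuremapdiffv_i(\outputvarv_i^*)$ as claimed.

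I do not expect any serious obstacle here, as the argument is a direct combination of separability and vertex-optimality of linear programs over simplices; the only point requiring care is the sign bookkeeping when passing from minimizing the gradient entry to maximizing $H_i$, and noting that when the maximizer $\outputvarv_i^*$ is not unique any choice still yields an exact (possibly non-unique) minimizer, which suffices for the statement.
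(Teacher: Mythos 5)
Your proposal is correct and follows essentially the same route as the paper's own proof: decompose the linear minimization over the product of simplices into $n$ independent block problems, use vertex-optimality of linear functions over a simplex, and identify the $(i,\outputvarv)$-th gradient component $-\frac1n H_i(\outputvarv;\weightv^{(k)})$ so that the blockwise minimizer is $\unit^{\outputvarv_i^*}$ given by loss-augmented decoding. Your additional remark about non-uniqueness of the maximizer is a harmless refinement that the paper leaves implicit.
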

\begin{proof}
Over the product domain $\domain =
\simplex_{|\outputdomain_1|}\times\mathellipsis\times\simplex_{|\outputdomain_n|}$,
the minimization $\min_{\sv' \in \domain} \langle \sv', \nabla f(\dualvarv)
\rangle$ decomposes as $\sum_i \min_{\bm{s}_i \in \simplex_{|\outputdomain_i|} }
\langle \bm{s}_i, \nabla_i f(\dualvarv) \rangle$. %
The minimization of a linear function over the simplex reduces to a search over its corners -- in this case, it amounts for each $i$ to find the minimal component of $-H_i(\outputvarv;\weightv)$ over $\outputvarv\in\outputdomain_i$, i.e. solving the loss-augmented decoding problem as used in Algorithm \ref{alg:FW_SVM} to construct the domain vertex $\sv$. To see this, note that for our choice of primal variables $\weightv = A\dualvarv$, the gradient of the dual objective, $\nabla f(\dualvarv) = \lambda A^TA\dualvarv - \bv$, writes as $\lambda A^T\weightv - \bv$. This vector is precisely the loss-augmented decoding function $-\frac1n H_i(\outputvarv;\weightv)$, for $i\in[n],\,\outputvarv\in\outputdomain_i$, as defined in~\eqref{eq:subproblem_loss_augm}.
\end{proof}

\subsection{Relation between the Lagrange Duality Gap and the `Linearization' Gap for the Structural SVM}\label{sec:gap_comparison}
We show here that the simple `linearization' gap~\eqref{eq:duality_gap}, evaluated on the structural SVM dual problem~\eqref{eq:svmstruct_nslack_dual} is actually equivalent to the standard Lagrangian duality gap for the structural SVM primal objective~\eqref{eq:svmstruct_nslack_primal} (these two duality gaps are not the same in general\footnote{For example, the two gaps are different when evaluated on the dual of the conditional random field objective (see, for example, \citet{Collins2008} for the formulation), which does not have a Lipschitz continuous gradient.}). This is important for the duality gap convergence rate results of our Frank-Wolfe algorithms to be transferable as primal convergence rates on the original structural SVM objective~\eqref{eq:svmstruct_nslack_primal_nonsmooth}, which is the one with statistical meaning (for example with generalization error bounds as given in~\citet{Taskar2003}).

\begin{proof}
So consider the difference of our objective at $\weightv := A\dualvarv$ in the primal problem~\eqref{eq:svmstruct_nslack_primal_nonsmooth}, and the dual objective at $\dualvarv$ in problem~\eqref{eq:svmstruct_nslack_dual} (in the maximization version). This difference is
\begin{eqnarray*}
g_{\text{\tiny Lagrange}}(\weightv,\dualvarv)&=&\frac\lambda2 \weightv^T\weightv + \frac1n \sum_{i=1}^n \tilde{H}_i(\weightv)
- \left(\bv^T\dualvarv - \frac\lambda2 \weightv^T\weightv \right) \\
&=& \lambda \weightv^T\weightv - \bv^T\dualvarv + \frac1n \sum_{i=1}^n \max_{\outputvarv\in\outputdomain_i} H_i(\outputvarv;\weightv) \ .
\end{eqnarray*}
Now recall that by the definition of $A$ and $\bv$, we have that $\frac{1}{n} H_i(\outputvarv;\weightv) = (\bv-\lambda A^T\weightv)_{(i,\outputvarv)} = (-\nabla f(\dualvarv))_{(i,\outputvarv)}$. By summing up over all points and re-using a similar argument as in Lemma~\ref{lem:condGradient} above, we get that
\[
\frac1n \sum_{i=1}^n \max_{\outputvarv\in\outputdomain_i} H_i(\outputvarv;\weightv)
= \sum_{i=1}^n \max_{\outputvarv\in\outputdomain_i} (-\nabla f(\dualvarv))_{(i,\outputvarv)} = \max_{\sv'\in \domain} \left\langle \sv', -\nabla f(\dualvarv) \right\rangle \ ,
\]
\begin{eqnarray*}
g_{\text{\tiny Lagrange}}(\weightv,\dualvarv)&=& (\lambda \weightv^T A - \bv^T) \dualvarv + \frac1n \sum_{i=1}^n \max_{\outputvarv\in\outputdomain_i} H_i(\outputvarv;\weightv) \\
	&=& \left\langle  \nabla f(\dualvarv), \dualvarv \right\rangle + \max_{\sv'\in \domain} \left\langle -\sv', \nabla f(\dualvarv) \right\rangle 
	\;=\; \left\langle \dualvarv - \sv, \nabla f(\dualvarv)\right\rangle 
	\;=\; g(\dualvarv) \ ,
\end{eqnarray*}
as defined in~\eqref{eq:duality_gap}.
\end{proof}

\subsection{Convergence Analysis} \label{ssec:app_SVM_proofs}
\subsubsection{Convergence of the Batch Frank-Wolfe Algorithm~\ref{alg:FW_SVM} on the Structural SVM Dual}
\begin{reptheorem}{thm:convergence_FW_SVM}%
Algorithm~\ref{alg:FW_SVM} obtains an $\varepsilon$-approximate solution to the structural SVM dual problem~\eqref{eq:svmstruct_nslack_dual} and duality gap $g(\dualvarv^{(k)}) \le\varepsilon$ after at most $O\left(\frac{R^2}{\lambda\varepsilon}\right)$ iterations, where each iteration costs $n$ oracle calls.
\end{reptheorem}
\begin{proof}
We apply the known convergence results for the standard Frank-Wolfe
Algorithm~\ref{alg:FW}, as given e.g. in~\citep{Frank:1956vp,Dunn:1978di,Jaggi:2013wg}, or as given in the paragraph just after the proof of Theorem~\ref{thm:primalGreedyProduct}:
For each $k\ge 1$, the iterate $\dualvarv^{(k)}$ of Algorithm~\ref{alg:FW} (either using the predefined step-sizes, or using line-search) satisfies
$
\E[ f(\dualvarv^{(k)})] - f(\dualvarv^*) \le \frac{2 \Cf}{k+2} \ ,
$
where $\dualvarv^*\in \domain$ is an optimal solution to problem~\eqref{eq:svmstruct_nslack_dual}.

Furthermore, if Algorithm~\ref{alg:FW} is run for $K\ge 1$ iterations, then it has an iterate $\dualvarv^{(\hat k)}$, $1\le \hat k\le K$, with duality gap bounded by
$
\E [g(\dualvarv^{(\hat k)})] \le \frac{6 \Cf}{K+1}
$.
This was shown e.g. in \citep{Jaggi:2013wg} with slightly different constants, or also in our analysis presented below (see the paragraph after the generalized analysis provided in Theorem~\ref{thm:primalDualGreedy1Regime}, %
when the number of blocks $n$ is set to one).

Now for the SVM problem and the equivalent Algorithm~\ref{alg:FW_SVM}, the claim follows from the curvature bound $\Cf \le \frac{4R^2}{\lambda}$ for the dual structural SVM objective function~\eqref{eq:svmstruct_nslack_dual} over the domain $\domain := \simplex_{|\outputdomain_1|}\times\mathellipsis\times\simplex_{|\outputdomain_n|}$, as given in the above Lemma \ref{lem:curvature_FW_SVM}.%
\end{proof}

\subsubsection{Convergence of the Block-Coordinate Frank-Wolfe Algorithm~\ref{alg:FW_product_SVM} on the Structural SVM Dual}

\begin{reptheorem}{thm:convergence_FW_product_SVM}%
If $\Lmax \le \frac{4R^2}{\lambda n}$ (so $h_0 \leq \frac{4R^2}{\lambda n}$), then Algorithm~\ref{alg:FW_product_SVM} obtains an $\varepsilon$-approximate solution to the structural SVM dual problem~\eqref{eq:svmstruct_nslack_dual} and expected duality gap $\E [g(\dualvarv^{(k)})] \le\varepsilon$ after at most $O\left(\frac{R^2}{\lambda\varepsilon}\right)$ iterations, where each iteration costs a single oracle call.

If $\Lmax > \frac{4R^2}{\lambda n}$, then it requires at most an additional (constant in $\varepsilon$) number of $O\left(n \log\left( \frac{\lambda n \Lmax}{R^2}\right) \right)$ %
steps to get the same error and duality gap guarantees, whereas the predefined step-size variant will require an additional $O\left(\frac{n \Lmax}{\varepsilon}\right)$ steps.
\end{reptheorem}
\begin{proof}

Writing $h_0=f(\dualvarv^{(0)}) - f(\dualvarv^*)$ for the error at the starting point used by the algorithm, the convergence Theorem~\ref{thm:convergence_FW_product} states that if $k\ge0$ and $k\ge \frac{2n}{\varepsilon}(\CfTotal+h_0)$, then the expected error is $\E [f(\dualvarv^{(k)})]-f(\dualvarv^*)\le\varepsilon$ and analogously for the expected duality gap. The result then follows by plugging in the curvature bound $\CfTotal \le \frac{4R^2}{\lambda n}$ for the dual structural SVM objective function~\eqref{eq:svmstruct_nslack_dual} over the domain $\domain := \simplex_{|\outputdomain_1|}\times\mathellipsis\times\simplex_{|\outputdomain_n|}$, as detailed in Lemma \ref{lem:curvature_FW_SVM_product} %
(notice that it is $n$ times smaller than the curvature $\Cf$ needed for the batch algorithm) and then bounding $h_0$. To bound $h_0$, we observe that by the choice of the starting point $\dualvarv^{(0)}$ using only the observed labels, the initial error is bounded as $h_0 \le g(\dualvarv^{(0)}) = \bv^T \sv = \frac1n \sum_{i=1}^n \max_{\outputvarv\in\outputdomain_i}\errorterm_i(\outputvarv) \leq \Lmax$. Thus, if $\Lmax \leq \frac{4R^2}{\lambda n}$, then we have $\CfTotal + h_0 \leq \frac{8R^2}{\lambda n}$, which proves the first part of the theorem.

In the case $\Lmax > \frac{4R^2}{\lambda n}$, then the predefined step-size variant will require an additional $\frac{2n h_0}{\varepsilon} \leq \frac{2n \Lmax}{\varepsilon}$ steps as we couldn't use the fact that $h_0  \leq \CfTotal$. 
For the line-search variant, on the other hand, we can use the improved convergence Theorem~\ref{thm:fast_warmup}, %
which shows that the algorithm require at most $k_0 \leq n \log( h_0/\CfTotal)$ steps to reach the condition $h_0  \leq \CfTotal$; once this condition is satisfied, we can simply re-use Theorem~\ref{thm:convergence_FW_product} with $k$ redefined as $k - k_0$ to get the final convergence rates. We also point out that the statement of Theorem~\ref{thm:fast_warmup} stays valid by replacing~$\CfTotal$ with any $\CfTotal{'} \geq \CfTotal$ in it. So plugging in $\CfTotal{'} = \frac{R^2}{\lambda n}$ and the bound $h_0 \leq \Lmax$ in the $k_0$ quantity gives back the number of additional steps mentioned in the second part of the theorem statement an $\varepsilon$-approximate solution. 
A similar argument can be made for the expected duality gap by using the improved convergence Theorem~\ref{thm:primalDualFaster}, which simply adds the requirement $K \geq 5 k_0$.
\end{proof}

We note that the condition $\Lmax \le \frac{4R^2}{\lambda n}$ is not necessarily too restrictive in the case of the structural SVM setup. In particular, the typical range of $\lambda$ which is needed for a problem is around $O(1/n)$
-- and so the condition becomes $\Lmax \leq 4R^2$ which is typically satisfied when the loss function is normalized.

\subsection{Implementation}\label{ssec:app_block_implement}
We comment on three practical implementation aspects of Algorithm \ref{alg:FW_product_SVM} on large structural SVM problems:

\paragraph{Memory.} For each datapoint $i$, our Algorithm \ref{alg:FW_product_SVM} stores an additional vector $\weightv_i\in\R^d$ holding the contribution of its corresponding dual variables $\dualvarv_{(i)}$ to the primal vector $\weightv = A\dualvarv$, i.e. $\weightv_i = A\dualvarv_{[i]}$, where $\dualvarv_{[i]}$ is $\dualvarv_{(i)}$ padded with zeros so that $\dualvarv_{[i]} \in \R^m$ and $\dualvarv = \sum_i \dualvarv_{[i]}$. This means the algorithm needs more memory than the direct (or batch) Frank-Wolfe structural SVM Algorithm~\ref{alg:FW_SVM}, but the additional memory can sometimes be bounded by a constant times the size of the input data itself.
In particular, in the case that the feature vectors $\featuremapdiffv_i(\outputvarv)$ are sparse, we can sometimes get the same improvement in memory requirements for $\weightv_i$, since for fixed $i$, all vectors $\featuremapdiffv_i(\outputvarv)$ usually have the same sparsity pattern. On the other hand, if the feature vectors are not sparse, it might be more efficient to only work with the dual variables instead of the primal variables (see the kernelized version in Appendix~\ref{ssec:app_kernelized} for more details).

\paragraph{Duality Gap as a Stopping Criterion.} 
Analogous as in the `classical Frank-Wolfe' structural SVM Algorithm \ref{alg:FW_SVM} explained in Section~\ref{sec:FW_SVM}, we would again like to use the duality gap $g(\dualvarv^{(k)})\le\varepsilon$ as the stopping criterion for the faster Algorithm \ref{alg:FW_product_SVM}.
Unfortunately, since now in every step we only update a single one of the many blocks, such a single direction $\sv_{(i)}$ will only determine the partial gap~$g^{(i)}(\dualvarv^{(k)})$ in the $i$-th block, but not the full information needed to know the total gap $g(\dualvarv^{(k)})$.
Instead, to compute the total gap, a single complete (batch) pass through all datapoints as in Algorithm~\ref{alg:FW_SVM} is necessary, to obtain a full linear minimizer $\sv\in\domain$. For efficiency reason, we could therefore compute the duality gap every say $Nn$ iterations for some constant $N>1$. Then stopping as soon as $g(\dualvarv^{(k)}) = g(\weightv^{(k)},\ell^{(k)},\weightv_{\sv},\ell_{\sv}) \le \varepsilon$ will not affect our convergence results.

\paragraph{Line-Search.} To compute the line-search step-size for Frank-Wolfe on the structural SVM, we recall that the analytic formula was given by $\stepsize_{opt} := \frac{\langle \dualvarv - \sv, \nabla f(\dualvarv) \rangle}{\lambda\norm{A(\dualvarv-\sv)}^2}$, and finally taking $\stepsize_{LS} := \max\left\{0,\min\left\{1,\stepsize_{opt} \right\} \right\}$. This is valid for any $\sv \in \domain$. For the block-coordinate Frank-Wolfe Algorithm~\ref{alg:FW_product_SVM}, $\sv$ is equal to $\dualvarv$ for all blocks, except for the $i$-th block -- this means that $\dualvarv - \sv = \dualvarv_{[i]} - \sv_{[i]}$, i.e. is zero everywhere except on the $i$-th block. By recalling that $\weightv_i = A\dualvarv_{[i]}$ is the individual contribution to $\weightv$ from $\dualvarv_{(i)}$ which is stored during the algorithm, we see that the denominator thus becomes $\lambda\norm{A(\dualvarv-\sv)}^2 = \lambda\norm{\weightv_i-\weight_{\sv}}^2$. The numerator is $\langle \dualvarv - \sv, \nabla f(\dualvarv) \rangle = (\dualvarv - \sv)^T(\lambda A^TA\dualvarv - \bv) = \lambda ( \weightv_i-\weight_{\sv}) ^T \weightv - \ell_i + \ell_{\sv}$, where as before $\ell_i = \bv^T \dualvarv_{[i]}$ is maintained during Algorithm~\ref{alg:FW_product_SVM} and so the line-search step-size can be computed efficiently. We mention in passing that when $\sv_{(i)}$ is the exact minimizer of the linear subproblem on $\domain^{(i)}$, then the numerator is actually a duality gap component $g^{(i)}(\dualvarv)$ as defined in~\eqref{eqn:gapProduct} -- the total duality gap then is $g(\dualvarv) = \sum_i g^{(i)}(\dualvarv)$ which can only be computed if we do a batch pass over all the datapoints, as explained in the previous paragraph.

\subsection{More details on the Kernelized Algorithm}\label{ssec:app_kernelized}

Both Algorithms~\ref{alg:FW_SVM} and \ref{alg:FW_product_SVM} can be used with kernels by explicitly maintaining the sparse dual variables $\dualvarv^{(k)}$ instead of the primal variables $\weightv^{(k)}$. In this case, the classifier is only given implicitly as a sparse combination of the corresponding kernel functions, i.e. $\weightv = A\dualvarv$, where $\featuremapdiffv_i(\outputvarv) = k(\inputvarv_i,\outputvarv_i; \cdot, \cdot) - k(\inputvarv_i,\outputvarv; \cdot, \cdot)$ for a structured kernel $k : (\inputdomain \times \outputdomain) \times (\inputdomain \times \outputdomain) \rightarrow \R$. Note that the number of non-zero dual variables is upper-bounded by the number of iterations, and so the time to take dot products grows quadratically in the number of iterations.

\begin{algorithm}[h!]
    \caption{Kernelized Dual Block-Coordinate Frank-Wolfe for Structural SVM}
    \label{alg:FW_product_SVM_kernel}
\begin{algorithmic}
         \STATE Let $\dualvarv^{(0)} := (\unit^{\outputvarv_1},\dots,\unit^{\outputvarv_n}) \in \domain =  \simplex_{|\outputdomain_1|}\times\mathellipsis\times\simplex_{|\outputdomain_n|}$ and $\bar{\dualvarv}^{(0)} = \dualvarv^{(0)}$ 
    \FOR{$k=0\dots K$}
                 \STATE  Pick $i$ uniformly at random in $\{1, \ldots, n\}$
                 \STATE Solve $\outputvarv_i^* := \displaystyle\argmax_{\outputvarv\in\outputdomain_i} \ H_i(\outputvarv;A\dualvarv^{(k)})$
                 \hspace{0.7cm}{\small\textit{(solve the loss-augmented decoding problem~\eqref{eq:subproblem_loss_augm})}}
            \STATE $\sv_{(i)} := \unit^{\outputvarv_i^*} \in \domain^{(i)}$
            \hspace{3cm}{\small\textit{(having only a single non-zero entry)}}
            \STATE Let $\stepsize := \frac{2n}{k+2n}$,\  {\small or optimize $\stepsize$ by line-search}
            \STATE Update $\dualvarv^{(k+1)}_{(i)}:= (1-\stepsize)\dualvarv^{(k)}_{(i)}+\stepsize \sv_{(i)}$
            \STATE {\small(Optionally: Update $\bar{\dualvarv}^{(k+1)}:=\frac{k}{k+2}\bar{\dualvarv}^{(k)}+\frac{2}{k+2}\dualvarv^{(k+1)}$)   \hspace{0.4cm}{\small\textit{(maintain a weighted average of the iterates)}}}
    \ENDFOR
\end{algorithmic}
\end{algorithm}
To compute the line-search step-size, we simply re-use the same formula as in Algorithm~\ref{alg:FW_product_SVM}, but reconstructing (implicitly) on the fly the missing quantities such as $\ell_i = \bv^T \dualvarv_{[i]}$, $\weightv_i = A\dualvarv_{[i]}$ and $\weightv^{(k)} = A \dualvarv^{(k)}$, and re-interpreting dot products such as $\weightv_i^T \weightv^{(k)}$ as the suitable sum of kernel evaluations (which has $O(k^2/n)$ terms, where $k$ is the number of iterations since the beginning).

\bigskip
\section{Analysis of the Block-Coordinate Frank-Wolfe Algorithm~\ref{alg:FW_product}}\label{sec:app_convergence_proof}

This section gives a self-contained presentation and analysis of the new block-coordinate Frank-Wolfe optimization Algorithm~\ref{alg:FW_product}. The main goal is to prove the convergence Theorem~\ref{thm:convergence_FW_product}, which here is split into two parts, the \emph{primal} convergence rate in Theorem~\ref{thm:primalGreedyProduct}, and the \emph{primal-dual} convergence rate in Theorem~\ref{thm:primalDualGreedy1Regime}.
Finally, we will present a faster convergence result for the line-search variant in Theorem~\ref{thm:fast_warmup} and Theorem~\ref{thm:primalDualFaster}, which we have used in the convergence for the structural SVM case as presented above in Theorem~\ref{thm:convergence_FW_product_SVM}.

\paragraph{Coordinate Descent Methods.}
Despite their simplicity and very early appearance in the literature,
surprisingly few results were known on the convergence (and convergence rates
in particular) of coordinate descent type methods. Recently, the interest in
these methods has grown again due to their good scalability to very large
scale problems as e.g. in machine learning, and also sparked new theoretical
results such as \citep{Nesterov:2012fa}.

\paragraph{Constrained Convex Optimization over Product Domains.}
We consider the general constrained convex optimization problem
\begin{equation}\label{eqn:opt}
\min_{\x\in \domain} f(\x)
\end{equation}
over a Cartesian product domain
$\domain = \domain^{(1)}\times\mathellipsis\times \domain^{(n)} \subseteq \R^m$, where each factor $\domain^{(i)} \subseteq\R^{m_i}$ is convex and \emph{compact}, and $\sum_{i=1}^n m_i = m$. We will write $\x_{(i)}\in\R^{m_i}$ for the $i$-th block of coordinates of a vector $\x\in\R^m$, and $\x_{[i]}$ for the padding of $\x_{(i)}$ with zeros so that $\x_{[i]} \in\R^m$.

\paragraph{Nesterov's `Huge Scale' Coordinate Descent.}
If the objective function $f$ is strongly smooth (i.e. has Lipschitz continuous partial gradients $\nabla_{(i)} f(\x)\in\R^{m_i}$), then the following algorithm converges%
\footnote{
By additionally assuming strong convexity of $f$ w.r.t. the $\ell_1$-norm
(global on $\domain$, not only on the individual factors), one can even get
linear convergence rates, see again \citep{Nesterov:2012fa} and the follow-up
paper \citepsup{Richtarik:2011vg}.} %
at a rate of $\frac1k~$, or more precisely $\frac{n}{k+n}$, as shown in
\citep[Section 4]{Nesterov:2012fa}:

\begin{algorithm}[h!]
  \caption{Uniform Coordinate Descent Method, \citep[Section 4]{Nesterov:2012fa}}
  \label{alg:UCDM}
\begin{algorithmic}
  \STATE Let $\x^{(0)} \in \domain$%
  \FOR{$k=0\dots\infty$}
  \STATE Pick $i$ uniformly at random in $\{1, \ldots, n\}$
  \STATE Compute $\sv_{(i)} := \displaystyle\argmin_{\sv_{(i)}\in \domain^{(i)}} \textstyle\left\langle \sv_{(i)}%
  , \nabla_{(i)} f(\x^{(k)})
  \right\rangle + \frac{L_i}{2}\norm{\sv_{(i)}-\x_{(i)}}^2$
  \STATE Update $\x^{(k+1)}_{(i)}:=\x^{(k)}_{(i)}+\big(\sv_{(i)} - \x^{(k)}_{(i)}\big)$   \hspace{0.4cm}{\small\textit{(only affecting the $i$-th coordinate block)}}
  \ENDFOR
\end{algorithmic}
\end{algorithm}

\paragraph{Using Simpler Update Steps: Frank-Wolfe / Conditional Gradient Methods.}
In some large-scale applications, the above computation of the update direction $\sv_{(i)}$ can be problematic, e.g. if the Lipschitz constants $L_i$ are unknown, or ---more importantly--- if the domains $\domain^{(i)}$ are such that the quadratic term makes the subproblem for $\sv_{(i)}$ hard to solve.

The structural SVM is a nice example where this makes a big difference. Here,
each domain block~$\domain^{(i)}$ is a simplex of exponentially many
variables, but nevertheless the linear subproblem over one such factor (also known as loss-augmented decoding) is often relatively easy to solve.

We would therefore like to replace the above computation of $\sv_{(i)}$ by a simpler one, as proposed in the following algorithm variant:

\begin{algorithm}[h!]
  \caption{Cheaper Coordinate Descent: Block-Coordinate Frank-Wolfe Algorithm}
  \label{alg:FW_product_again}
\begin{algorithmic}
  \STATE Let $\x^{(0)} \in \domain$ and $\bar{\x}_w^{(0)} = \x^{(0)}$ %
  \FOR{$k=0\dots\infty$}
  \STATE Pick $i$ uniformly at random in $\{1, \ldots, n\}$
  \STATE Compute $\sv_{(i)} := \displaystyle\argmin_{\sv_{(i)}\in \domain^{(i)}} \textstyle\left\langle \sv_{(i)}, \nabla_{(i)} f(\x^{(k)}) \right\rangle$
  \STATE \hspace{1cm}{\small\textit{(or alternatively, find $\sv_{(i)}$ that solves this linear problem approximately,\\
              \hspace{1cm}                             either up to an additive error \eqref{eqn:qualityAdd} or up to a multiplicative error \eqref{eqn:qualityMult})}}
  \STATE Let $\stepsize := \frac{2n}{k+2n}$,\  {\small or perform line-search for the step-size: $\stepsize := \displaystyle\argmin_{\stepsize\in[0,1]} \, f\left(\x^{(k)}+\stepsize\big(\sv_{[i]} - \x^{(k)}_{[i]}\big)\right)$}
  \STATE Update $\x^{(k+1)}_{(i)}:=\x^{(k)}_{(i)}+\stepsize\big(\sv_{(i)} - \x^{(k)}_{(i)}\big)$   \hspace{1.9cm}{\small\textit{(only affecting the $i$-th coordinate block)}}
  \STATE {\small(Optionally: Update $\bar{\x}_w^{(k+1)}:=\frac{k}{k+2}\bar{\x}_w^{(k)}+\frac{2}{k+2}\x^{(k+1)}$)   \hspace{0.4cm}{\small\textit{(maintain a weighted average of the iterates)}}}
  \ENDFOR
\end{algorithmic}
\end{algorithm}

This natural coordinate descent type optimization method picks a single one of the $n$ blocks uniformly at random, and in each step leaves all other blocks unchanged. 

If there is only one factor ($n=1$), then Algorithm \ref{alg:FW_product_again}
becomes the standard Frank-Wolfe (or conditional gradient) algorithm, which is known to converge at a rate of $O(1/k)$ \citep{Frank:1956vp,Dunn:1978di,Clarkson:2010hv,Jaggi:2013wg}.

\paragraph{Using Approximate Linear Minimizers.}
If approximate linear minimizers are used internally in Algorithm~\ref{alg:FW_product_again}, then the necessary approximation quality for the candidate directions $\sv_{(i)}$ is determined as follows (in either additive or multiplicative quality):

In the \emph{additive} case, we choose a fixed additive error parameter $\delta\ge0$ such that the candidate direction $\sv_{(i)}$ satisfies
\begin{equation}\label{eqn:qualityAdd}
\left\langle \sv_{(i)}, \nabla_{(i)} f(\x) \right\rangle
~\le~
\displaystyle\min_{\sv'_{(i)}\in \domain^{(i)}} \textstyle\left\langle \sv'_{(i)}, \nabla_{(i)} f(\x) \right\rangle + \frac12 \delta \,\addFactor_k \,\Cf^{(i)}  \ ,\vspace{-1mm}
\end{equation}
where $\addFactor_k := \frac{2n}{k+2n}$ comes from the default step-size and is used for the convergence results to come. Note that if line-search is used to determine a different step-size, the candidate direction is still defined with respect to the default $\addFactor_k$. 

In the \emph{multiplicative} case, we choose a fixed multiplicative error parameter $0<\mapprox\le1$ such that the candidate directions $\sv_{(i)}$ attain the current `duality gap' on the $i$-th factor up to a \emph{multiplicative} approximation error of~$\mapprox$, i.e.\vspace{-1mm}
\begin{equation}\label{eqn:qualityMult}
\left\langle \x-\sv_{(i)}, \nabla_{(i)} f(\x) \right\rangle
~\ge~
\displaystyle \mapprox\cdot \max_{\sv'_{(i)}\in \domain^{(i)}} \textstyle\left\langle \x-\sv'_{(i)}, \nabla_{(i)} f(\x) \right\rangle
\ .\vspace{-1mm}
\end{equation}
If a multiplicative approximate internal oracle is used together with the predefined step-size instead of doing line-search, then the step-size in Algorithm~\ref{alg:FW_product_again} needs to be increased to $\stepsize_k:= \frac{2n}{\mapprox k+2n}$ instead of the original~$\frac{2n}{k+2n}$.

Both types of errors can be combined together with the following property for the candidate direction $\sv_{(i)}$:
\begin{equation}\label{eqn:qualityBoth}
\left\langle \x-\sv_{(i)}, \nabla_{(i)} f(\x) \right\rangle
~\ge~
\displaystyle \mapprox\cdot \max_{\sv'_{(i)}\in \domain^{(i)}} \textstyle\left\langle \x-\sv'_{(i)}, \nabla_{(i)} f(\x) \right\rangle -  \frac12 \delta \,\addFactor_k \,\Cf^{(i)} \ ,
\end{equation}
where $\addFactor_k:= \frac{2n}{\mapprox k+2n}$.

\paragraph{Averaging the Iterates.}
In the above Algorithm~\ref{alg:FW_product_again} we have also added an optional last line which maintains the following weighted average $\bar{\x}_w^{(k)}$ which is defined for $k \geq 1$ as
\begin{equation} \label{eqn:wavg}
	\bar{\x}_w^{(k)} := \frac{2}{k (k+1)} \sum_{t=1}^k t \, \x^{(t)} \ ,
\end{equation}
and by convention we also define $\bar{\x}_w^{(0)} := \x^{(0)}$. As our convergence analysis will show, the weighted average of the iterates can yield more robust duality gap convergence guarantees when the duality gap function $g$ is convex in $\x$ (see Theorem~\ref{thm:primalDualGreedy1Regime}) -- this is for example the case for quadratic functions such as in the structural SVM objective~\eqref{eq:svmstruct_nslack_dual}. We will also consider in our proofs a scheme which averages the last $(1-\mu)$-fraction of the iterates for some fixed $0 < \mu < 1$:
\begin{equation} \label{eqn:tavg} 
	\bar{\x}_\mu^{(k)} := \frac{1}{k- \ceil{\mu k}+1} \sum_{t= \ceil{\mu k} }^k \x^{(t)} \ .
\end{equation}
This is what \citet{Rakhlin2012} calls \emph{$(1-\mu)$-suffix averaging} and it appeared in the context of getting a stochastic subgradient method with $O(1/k)$ convergence rate for strongly convex functions instead of the standard $O((\log k)/k)$ rate that one can prove for the individual iterates $\x^{(k)}$. The problem with $(1-\mu)$-suffix averaging is that to implement it for a fixed $\mu$ (say $\mu = 0.5$) without storing a fraction of all the iterates, one needs to know when they will stop the algorithm. An alternative mentioned in~\citet{Rakhlin2012} is to maintain a uniform average over rounds of exponentially increasing size (the so-called `doubling trick'). This can give very good performance towards the end of the rounds as we will see in our additional experiments in Appendix~\ref{sec:additional_experiments}, but the performance varies widely towards the beginning of the rounds. This motivates the simpler and more robust weighted averaging scheme~\eqref{eqn:wavg}, which in the case of the stochastic subgradient method, was also recently proven to have $O(1/k)$ convergence rate by~\citet{LacosteJulien:2012uo}\footnote{In this paper, they considered a $(k+1)$-weight instead of our $k$-weight, but similar rates can be proven for shifted versions. We motivate skipping the first iterate $\x^{(0)}$ in our weighted averaging scheme as sometimes bounds can be proven on the quality of $\x^{(1)}$ irrespective of $\x^{(0)}$ for Frank-Wolfe (see the paragraph after the proof of Theorem~\ref{thm:primalGreedyProduct} for example, looking at the $n=1$ case).} and independently by~\citet{Shamir:2013vw}, who called such schemes `polynomial-decay averaging'.

\paragraph{Related Work.}
In contrast to the randomized choice of coordinate which we use here, the analysis of \emph{cyclic} coordinate descent algorithms (going through the blocks sequentially) seems to be notoriously difficult, such that until today, no analysis proving a global convergence rate has been obtained as far as we know. \citetsup{Luo:1992fy} has proven a \emph{local} linear convergence rate for the strongly convex case.

For product domains, such a cyclic analogue of our Algorithm \ref{alg:FW_product_again} has already been proposed in~\citetsup{Patriksson:1998hg}, using a generalization of Frank-Wolfe iterations under the name `cost approximation'.
The analysis of~\citetsup{Patriksson:1998hg} shows asymptotic convergence, but since the method goes through the blocks sequentially, no convergence rates could be proven so far.

\subsection{Setup for Convergence Analysis}
We review below the important concepts needed for analyzing the convergence of the block-coordinate Frank-Wolfe Algorithm~\ref{alg:FW_product_again}.

\paragraph{Decomposition of the Duality Gap.}
The product structure of our domain has a crucial effect on the duality gap, namely that it decomposes into a sum over the $n$ components of the domain.
The `linearization' duality gap as defined in~\eqref{eq:duality_gap} %
(see also \citet{Jaggi:2013wg}) for any constrained convex problem of the above form (\ref{eqn:opt}), for a fixed feasible point $\x \in \domain$, is given by
\begin{equation}\label{eqn:gapProduct}
  \begin{array}{rl}
g(\x) := %
   & \displaystyle\max_{\sv\in \domain} \, \left\langle \x-\sv, \nabla f(\x)\right\rangle \\
 =& \displaystyle\sum_{i=1}^n \max_{\sv_{(i)}\in \domain^{(i)}} \, \left\langle \x_{(i)}-\sv_{(i)}, \nabla_{(i)} f(\x)\right\rangle\\
 =:& \displaystyle\sum_{i=1}^n \ g^{(i)}(\x) \ .
  \end{array}
\end{equation}

\paragraph{Curvature.}
Also, the curvature can now be defined on the individual factors, %
\begin{equation}\label{eqn:CfProduct}
  \Cf^{(i)} := \sup_{\substack{\x\in \domain,\,\sv_{(i)}\in \domain^{(i)}, \\
                      \stepsize\in[0,1],\\
                      \y = \x+\stepsize(\sv_{[i]}-\x_{[i]})}}
          \textstyle \frac{2}{\stepsize^2}\left( f(\y)-f(\x)-\langle \y_{(i)}-\x_{(i)}, \nabla_{(i)} f(\x)\rangle \right) \ .\vspace{-2pt}
\end{equation}
We recall that the notation $\x_{[i]}$ and $\x_{(i)}$ is defined just below~\eqref{eqn:opt}.
We define the global product curvature as the sum of these curvatures for each block, i.e.
\begin{equation}\label{eqn:CfProductGlobal}
  \CfTotal := \sum_{i=1}^n \Cf^{(i)} .
\end{equation}

\subsection{Primal Convergence on Product Domains}
The following main theorem shows that after $O\big(\frac1\varepsilon\big)$ many iterations, Algorithm~\ref{alg:FW_product_again} obtains an $\varepsilon$-approximate solution.

\begin{theorem}[Primal Convergence]\label{thm:primalGreedyProduct}
For each $k\ge 0$, the iterate $\x^{(k)}$ of the exact variant of Algorithm~\ref{alg:FW_product_again} satisfies
\[
\E[f(\x^{(k)})] - f(\x^*) \le \frac{2n}{k+2n}\big(\CfTotal+ f(\x^{(0)}) - f(\x^*)\big) \ ,
\]
For the approximate variant of Algorithm~\ref{alg:FW_product_again} with \emph{additive} approximation quality~(\ref{eqn:qualityAdd}) for $\delta\ge0$, it holds that
\[
\E[f(\x^{(k)})] - f(\x^*) \le \frac{2n}{k+2n}\big(\CfTotal(1+\delta)+ f(\x^{(0)}) - f(\x^*)\big) \ .
\]
For the approximate variant of Algorithm~\ref{alg:FW_product_again}, with \emph{multiplicative} approximation quality~(\ref{eqn:qualityMult}) for $0<\mapprox\le1$, it holds that
\[
\E[f(\x^{(k)})] - f(\x^*) \le \frac{2n}{\mapprox k+2n}\big(\frac1\mapprox\CfTotal+ f(\x^{(0)}) - f(\x^*)\big) \ .
\]

All convergence bounds hold both if the predefined step-sizes, or line-search is used in the algorithm. Here $\x^*\in \domain$ is an optimal solution to problem~(\ref{eqn:opt}), and the expectation is with respect to the random choice of blocks during the algorithm.
(In other words all three algorithm variants deliver a solution of (expected) primal error at most~$\varepsilon$ after $O(\frac1\varepsilon)$ many iterations.)
\end{theorem}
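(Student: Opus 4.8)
The plan is to adapt the classical Frank-Wolfe descent argument block-wise, and then average over the random choice of block so that the sum over blocks reconstructs $\CfTotal$ and the full gap $g$. First I would fix an iteration $k$, condition on $\x^{(k)}$, and suppose block $i$ was selected. Since the update $\x^{(k+1)} = \x^{(k)} + \stepsize(\sv_{[i]} - \x^{(k)}_{[i]})$ only moves the $i$-th block, the block-curvature definition~\eqref{eqn:CfProduct} gives the sufficient-decrease inequality
\[
f(\x^{(k+1)}) \le f(\x^{(k)}) + \stepsize \big\langle \sv_{(i)} - \x^{(k)}_{(i)}, \nabla_{(i)} f(\x^{(k)})\big\rangle + \tfrac{\stepsize^2}{2}\Cf^{(i)} \, .
\]
In the exact variant $\sv_{(i)}$ minimizes the linearization over $\domain^{(i)}$, so the inner product equals $-g^{(i)}(\x^{(k)})$ by the definition~\eqref{eqn:gapProduct} of the $i$-th gap component, yielding $f(\x^{(k+1)}) \le f(\x^{(k)}) - \stepsize\, g^{(i)}(\x^{(k)}) + \tfrac{\stepsize^2}{2}\Cf^{(i)}$.

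The key step---and the one that differs from the single-block analysis---is to take expectation over the uniformly random choice of $i$. Conditioned on $\x^{(k)}$, averaging the previous inequality over $i\in\{1,\dots,n\}$ and using the two decompositions $\frac1n\sum_i g^{(i)}(\x^{(k)}) = \frac1n g(\x^{(k)})$ from~\eqref{eqn:gapProduct} and $\frac1n\sum_i \Cf^{(i)} = \frac1n\CfTotal$ from~\eqref{eqn:CfProductGlobal} gives
\[
\E_i\!\big[f(\x^{(k+1)})\big] \le f(\x^{(k)}) - \tfrac{\stepsize}{n}\, g(\x^{(k)}) + \tfrac{\stepsize^2}{2n}\CfTotal \, .
\]
Writing $h_k := f(\x^{(k)}) - f(\x^*)$ and invoking the weak-duality bound $g(\x^{(k)}) \ge h_k$ (valid by convexity of $f$, exactly as in Section~\ref{sec:FW}), then taking total expectation, I obtain the scalar recursion $\E[h_{k+1}] \le (1 - \tfrac{\stepsize}{n})\E[h_k] + \tfrac{\stepsize^2}{2n}\CfTotal$.

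It remains to solve this recursion with $\stepsize = \frac{2n}{k+2n}$, which I would do by induction on $k$, proving $\E[h_k] \le \frac{2n}{k+2n}(\CfTotal + h_0)$. For the base case $k=0$ the right-hand side is $\CfTotal + h_0 \ge h_0$, which is exactly where the initial error $h_0$ must appear: unlike ordinary Frank-Wolfe, a single step only touches one block, so one cannot discard $h_0$ after the first iteration. For the inductive step, substituting $\stepsize = \frac{2n}{k+2n}$ makes $1 - \frac{\stepsize}{n} = \frac{k+2n-2}{k+2n}$ and $\frac{\stepsize^2}{2n} = \frac{2n}{(k+2n)^2}$; bounding $\CfTotal \le \CfTotal + h_0$ and using the elementary inequality $(k+2n-1)(k+2n+1) \le (k+2n)^2$ collapses the expression to $\frac{2n}{(k+1)+2n}(\CfTotal + h_0)$, closing the induction. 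Since line-search by definition achieves an objective at least as small as the predefined step-size, the same bound holds for the line-search variant.

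Finally, the two approximate variants require only local modifications. In the additive case~\eqref{eqn:qualityAdd}, the inner-product term picks up an extra $\tfrac12\delta\,\addFactor_k\Cf^{(i)}$; since the predefined step equals $\addFactor_k = \frac{2n}{k+2n}$, this contributes an additional $\tfrac{\stepsize^2}{2}\delta\Cf^{(i)}$, so the curvature effectively becomes $\CfTotal(1+\delta)$ throughout the recursion. In the multiplicative case~\eqref{eqn:qualityMult}, the gap term is weakened to $-\stepsize\,\mapprox\,g^{(i)}$, giving $\E[h_{k+1}] \le (1 - \frac{\stepsize\mapprox}{n})\E[h_k] + \frac{\stepsize^2}{2n}\CfTotal$; running the same induction with the enlarged step-size $\stepsize = \frac{2n}{\mapprox k + 2n}$ and bounding $\CfTotal \le \mapprox(\frac{1}{\mapprox}\CfTotal + h_0)$ yields the stated rate $\frac{2n}{\mapprox k + 2n}(\frac{1}{\mapprox}\CfTotal + h_0)$. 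The main obstacle throughout is not any single calculation but the bookkeeping ensuring that the randomized block choice exactly reconstructs the full gap $g$ and the summed curvature $\CfTotal$---this averaging is what produces the characteristic factor of $n$ in the rate.
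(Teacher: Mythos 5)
Your proposal is correct and follows essentially the same route as the paper: the block-curvature descent bound combined with the exact/approximate linear-minimizer gap identity is the paper's Lemma~\ref{lem:stepProduct}, the averaging over the random block to recover $\tfrac1n g(\x^{(k)})$ and $\tfrac1n\CfTotal$, weak duality, and the induction with step-size $\tfrac{2n}{\mapprox k+2n}$ and constant $C=\tfrac1\mapprox\CfTotal+h_0$ (absorbing $\delta$ into $\CfTotal(1+\delta)$ in the additive case) reproduce the paper's argument exactly. Your closing observation about why $h_0$ cannot be discarded after one step in the block setting also matches the paper's remark on the $n=1$ special case.
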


The proof of the above theorem on the convergence rate of the primal error crucially depends on the following Lemma~\ref{lem:stepProduct} on the improvement in each iteration.

\begin{lemma}\label{lem:stepProduct}
Let $\stepsize\in[0,1]$ be an arbitrary fixed step-size.
Moving only within the $i$-th block of the domain, we consider two variants of steps towards a direction $\sv_{(i)}\in \domain^{(i)}$:
Let $\x^{(k+1)}_{\stepsize} := \x(\stepsize)$ be the point obtained by moving towards~$\sv_{(i)}$ using step-size $\stepsize$,
and let $\x^{(k+1)}_{LS} := \x(\stepsize_{LS})$ be the corresponding point obtained by line-search, i.e. $\stepsize_{LS} := \displaystyle\argmin_{\bar\stepsize\in[0,1]} \, f\left(\x(\bar\stepsize)\right)$.
Here for convenience we have used the notation $\x(\bar\stepsize) := \x^{(k)}+\bar\stepsize\big(\sv_{[i]} - \x^{(k)}_{[i]}\big)$ for $\bar\stepsize\in[0,1]$.

If for each $i$ the candidate direction $\sv_{(i)}$ satisfies the \emph{additive} approximation quality~(\ref{eqn:qualityAdd}) for $\delta\ge 0$ and some fixed $\addFactor_k$, then in expectation over the random choice of the block $i$ and conditioned on $\x^{(k)}$, it holds that
\[
  \E\big[ f(\x^{(k+1)}_{LS})\,|\, \x^{(k)}\big]
  \le \E\big[ f(\x^{(k+1)}_\stepsize) \,|\, \x^{(k)}\big]
  ~\le~ f(\x^{(k)}) - \frac{\stepsize}{n} g(\x^{(k)}) + \frac{1}{2n} (\stepsize^2 + \delta \addFactor_k \stepsize) \CfTotal \ .
\]
On the other hand, if $\sv_{(i)}$ attains the duality gap $g^{(i)}(\x)$ on the $i$-th block up to a \emph{multiplicative} approximation quality~(\ref{eqn:qualityMult}) for $0<\mapprox\le1$, then\vspace{-2mm}
\[
  \E\big[ f(\x^{(k+1)}_{LS}) \,|\, \x^{(k)}\big]
  \le \E\big[ f(\x^{(k+1)}_\stepsize) \,|\, \x^{(k)}\big]
  ~\le~ f(\x^{(k)}) - \frac{\stepsize}{n} \mapprox \, g(\x^{(k)}) + \frac{\stepsize^2}{2n} \CfTotal \ .
\]
All expectations are taken over the random choice of the block $i$ and conditioned on~$\x^{(k)}$.
\end{lemma}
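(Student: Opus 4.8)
The plan is to derive, for a fixed block $i$ and a fixed step-size $\stepsize\in[0,1]$, a quadratic upper bound on the fixed-step iterate $\x^{(k+1)}_\stepsize$ directly from the definition~\eqref{eqn:CfProduct} of the block curvature. Rearranging that definition with $\y = \x^{(k+1)}_\stepsize$ and $\x = \x^{(k)}$, and using that moving only within block~$i$ gives $\y_{(i)} - \x^{(k)}_{(i)} = \stepsize(\sv_{(i)} - \x^{(k)}_{(i)})$, I would obtain $f(\x^{(k+1)}_\stepsize) \le f(\x^{(k)}) + \stepsize \langle \sv_{(i)} - \x^{(k)}_{(i)}, \nabla_{\!(i)} f(\x^{(k)})\rangle + \frac{\stepsize^2}{2}\Cf^{(i)}$. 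The first inequality in both claimed chains, $f(\x^{(k+1)}_{LS}) \le f(\x^{(k+1)}_\stepsize)$, then holds pointwise for every realization of $i$ because $\stepsize_{LS}$ minimizes $f$ over the whole segment, so it suffices to bound the fixed-step point.

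Next I would convert the linear term into the block gap $g^{(i)}(\x^{(k)})$. In the \emph{additive} case, the approximation quality~\eqref{eqn:qualityAdd} bounds $\langle \sv_{(i)}, \nabla_{\!(i)} f\rangle$ by its exact minimum plus $\frac12 \delta \addFactor_k \Cf^{(i)}$; subtracting $\langle \x^{(k)}_{(i)}, \nabla_{\!(i)} f\rangle$ and recognizing that $\min_{\sv'_{(i)}}\langle \sv'_{(i)} - \x^{(k)}_{(i)}, \nabla_{\!(i)} f\rangle = -g^{(i)}(\x^{(k)})$ yields $\langle \sv_{(i)} - \x^{(k)}_{(i)}, \nabla_{\!(i)} f\rangle \le -g^{(i)}(\x^{(k)}) + \frac12 \delta \addFactor_k \Cf^{(i)}$. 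In the \emph{multiplicative} case, \eqref{eqn:qualityMult} directly gives $\langle \sv_{(i)} - \x^{(k)}_{(i)}, \nabla_{\!(i)} f\rangle \le -\mapprox\, g^{(i)}(\x^{(k)})$. Substituting either bound into the curvature inequality produces a per-block estimate of the form $f(\x^{(k+1)}_\stepsize) \le f(\x^{(k)}) - \stepsize\, c\, g^{(i)}(\x^{(k)}) + (\text{curvature terms})\,\Cf^{(i)}$.

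Finally I would take the expectation over the uniformly random block $i$, for which $\E[\,\cdot\,] = \frac1n\sum_{i=1}^n$. Using the decomposition $\sum_{i=1}^n g^{(i)}(\x) = g(\x)$ from~\eqref{eqn:gapProduct} and $\sum_{i=1}^n \Cf^{(i)} = \CfTotal$ from~\eqref{eqn:CfProductGlobal}, the per-block estimates average to exactly the two claimed inequalities, with the factor $\frac1n$ landing simultaneously on the gap term and the curvature term.

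There is no deep obstacle here: the argument is a direct combination of the curvature definition with the approximation-quality definitions, followed by linearity of expectation. The only real care is bookkeeping --- ensuring the additive error term carries the correct $\addFactor_k$ (which is defined with respect to the \emph{default} step-size, independently of whether line-search is actually used for the update), correctly identifying the per-block linearized minimum with $-g^{(i)}$, and tracking the $\frac1n$ from uniform sampling so that it multiplies both the gap and the curvature at once.
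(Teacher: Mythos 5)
Your proposal is correct and follows essentially the same route as the paper's proof: a per-block quadratic bound from the curvature definition~\eqref{eqn:CfProduct}, conversion of the linear term into $-g^{(i)}(\x^{(k)})$ (plus the additive slack $\tfrac12\delta\addFactor_k\Cf^{(i)}$, or with the factor $\mapprox$ in the multiplicative case), pointwise domination of the line-search iterate, and finally averaging over the uniformly random block so that $\sum_i g^{(i)} = g$ and $\sum_i \Cf^{(i)} = \CfTotal$ each pick up the factor $\tfrac1n$. The bookkeeping points you flag (in particular that $\addFactor_k$ is tied to the default step-size regardless of whether line-search is used) are exactly the ones the paper handles.
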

\begin{proof}
  We write $\x := \x^{(k)}$, $\y := \x^{(k+1)}_\stepsize = \x+\stepsize(\sv_{[i]}-\x_{[i]})$, with $\x_{[i]}$ and $\sv_{[i]}$ being zero everywhere except in their $i$-th block. We also write $d_\x := \nabla_{(i)} f(\x)$ to simplify the notation.
  From the definition~(\ref{eqn:CfProduct}) of the curvature constant $\Cf^{(i)}$ of our convex function $f$ over the factor $\domain^{(i)}$, we have
  \[
  \begin{array}{rl}
	f(\y) = & f(\x+\stepsize(\sv_{[i]}-\x_{[i]})) \\
    \le & f(\x) + \stepsize \langle \sv_{(i)}-\x_{(i)}, d_\x\rangle + \frac{\stepsize^2}{2}\Cf^{(i)} \ .
  \end{array}
  \]
  Now we use that by \eqref{eqn:qualityAdd}, the choice of $\sv_{(i)}$ with $\left\langle \sv_{(i)}, \nabla_{(i)} f(\x) \right\rangle \le \displaystyle\min_{\sv'_{(i)}\in \domain^{(i)}} \textstyle\left\langle \sv'_{(i)}, \nabla_{(i)} f(\x) \right\rangle + \frac12\delta \addFactor_k \Cf^{(i)}$
 is a good descent direction for the linear approximation to $f$ at $\x$, on the $i$-th factor $\domain^{(i)}$, giving
  \begin{equation}\label{eqn:lemmaStepProofGap}\textstyle
  \langle \sv_{(i)}-\x_{(i)}, d_\x\rangle
  \le -g^{(i)}(\x) + \frac{\delta \addFactor_k}{2} \Cf^{(i)} \ ,
  \end{equation}
  by the definition~(\ref{eqn:gapProduct}) of the duality gap. Altogether, we have obtained
  \[
  \begin{array}{rl}
  f(\y) \le& f(\x) + \stepsize (-g^{(i)}(\x) + \frac{\delta \addFactor_k}{2} \Cf^{(i)}) + \frac{\stepsize^2}{2}\Cf^{(i)} \\
         =& f(\x) - \stepsize g^{(i)}(\x) + \frac{1}{2} (\stepsize^2 + \delta \addFactor_k \stepsize) \Cf^{(i)} \ .
  \end{array}
  \]
  Using that the line-search by definition must lead to an objective value at least as good as the one at the fixed~$\stepsize$, we therefore have shown the inequality
\[\textstyle
  f(\x^{(k+1)}_{LS})
  \le f(\x^{(k+1)}_\stepsize)
  ~\le~ f(\x^{(k)}) - \stepsize g^{(i)}(\x^{(k)}) + \frac{1}{2} (\stepsize^2 + \delta \addFactor_k \stepsize) \Cf^{(i)} \ .
\]
  Finally the claimed bound on the expected improvement directly follows by taking the expectation:
  With respect to the (uniformly) random choice of the block $i$, the expected value of the gap $g^{(i)}(\x^{(k)})$ corresponding to the picked $i$ is exactly $\frac1n g(\x^{(k)})$. Also, the expected curvature of the $i$-th factor is~$\frac1n \CfTotal$.

  The proof for the case of multiplicative approximation follows completely analogously, using
  $
  \langle \sv_{(i)}-\x_{(i)}, d_\x\rangle \le -\mapprox\, g^{(i)}(\x) ,
  $
  which then gives a step improvement of
  $
  f(\y) \le f(\x) - \stepsize \mapprox g^{(i)}(\x) + \frac{\stepsize^2}{2} \Cf^{(i)} \ .
  $
\end{proof}

Having Lemma~\ref{lem:stepProduct} at hand, we will now prove our above primal
convergence Theorem~\ref{thm:primalGreedyProduct} using similar ideas as for
general domains, such as in~\citet{Jaggi:2013wg}.

\begin{proof}[Proof of Theorem~\ref{thm:primalGreedyProduct}]
We first prove the theorem for the approximate variant of Algorithm~\ref{alg:FW_product_again} with multiplicative approximation quality~\eqref{eqn:qualityMult} of $0<\mapprox\le1$ -- the exact variant of the algorithm is simply the special case $\mapprox = 1$. 
From the above Lemma~\ref{lem:stepProduct}, we know that for every inner step of Algorithm~\ref{alg:FW_product_again} and conditioned on $\x^{(k)}$, we have that
$\E[ f(\x_\stepsize^{(k+1)}) \,|\, \x^{(k)}] \le f(\x^{(k)}) - \frac{\stepsize \mapprox}{n} g(\x^{(k)}) + \frac{\stepsize^2}{2n} \CfTotal$,
   where the expectation is over the random choice of the block~$i$ (this bound holds independently whether line-search is used or not). Writing $h(\x) := f(\x) - f(\x^*)$ for the (unknown) primal error at any point $\x$, this reads as
  \begin{equation}\label{eqn:hStepProductExp_conditional}
  \begin{array}{rl}
                   \E[ h(\x_\stepsize^{(k+1)}) \,|\, \x^{(k)}] \le& h(\x^{(k)}) - \frac{\stepsize \mapprox}{n} g(\x^{(k)}) + \frac{\stepsize^2}{2n} \CfTotal \\
                    \le& h(\x^{(k)}) - \frac{\stepsize \mapprox}{n} h(\x^{(k)}) + \frac{\stepsize^2}{2n} \CfTotal  \\
                    =& (1- \frac{\stepsize \mapprox}{n}) h(\x^{(k)}) + \frac{\stepsize^2}{2n} \CfTotal,
  \end{array}
  \end{equation}
where in the second line, we have used \emph{weak duality} $h(\x) \le g(\x)$ (which follows directly from the definition of the duality gap, together with convexity of $f$%
). The inequality~\eqref{eqn:hStepProductExp_conditional} is conditioned on $\x^{(k)}$, which is a random quantity given the previous random choices of blocks to update. We get a deterministic inequality by taking the expectation of both sides with respect to the random choice of previous blocks, yielding:
  \begin{equation}\label{eqn:hStepProductExp}
  \begin{array}{rl}
                   \E[ h(\x_\stepsize^{(k+1)})] \le& (1- \frac{\stepsize \mapprox}{n}) \E[ h(\x^{(k)}) ] + \frac{\stepsize^2}{2n} \CfTotal.
  \end{array}
  \end{equation}

We observe that the resulting inequality~\eqref{eqn:hStepProductExp} with $\mapprox=1$ is of the
same form as the one appearing in the standard Frank-Wolfe primal convergence
proof such as in~\citet{Jaggi:2013wg}, though with a crucial difference of the $1/n$ factor (and that we are now working with the expected values $\E[ h(\x^{(k)})]$ instead of the original $h(\x^{(k)})$). We will thus follow a similar induction argument over~$k$, but we will see that the~$1/n$ factor will yield a slightly different induction base case (which for $n=1$ can be analyzed separately to obtain a better bound). To simplify the notation, let $h_k := \E[ h(\x^{(k)})]$.

By induction, we are now going to prove that
  \[
  h_k \le \frac{2n C}{\mapprox k+2n}~~~~~~ \text{ for }\ k\geq0 \ .
  \]
  for the choice of constant $C:=\frac1\mapprox \CfTotal + h_0$.

  The \emph{base-case} $k=0$ follows immediately from the definition of $C$, given that $C \geq h_0$.

  Now we consider the \emph{induction step} for $k \geq 0$. Here the bound~(\ref{eqn:hStepProductExp}) for the particular choice of step-size $\stepsize_k:=\frac{2n}{\mapprox k+2n} \in[0,1]$ given by Algorithm~\ref{alg:FW_product_again} gives us (the same bound also holds for the line-search variant, given that the corresponding objective value $f(\x^{(k+1)}_{\text{\tiny Line-Search}})\le f(\x^{(k+1)}_{\stepsize})$ only improves): %
  \[
  \begin{array}{rl}
  h_{k+1} \le& (1- \frac{\stepsize_k\mapprox}{n}) h_k + (\stepsize_k)^2 \frac{C\mapprox}{2n} \\[3pt]
                      =& (1-\frac{2\mapprox}{\mapprox k+2n}) h_k + (\frac{2n}{\mapprox k+2n})^2 \frac{C\mapprox}{2n}  \\[3pt]
                      \le& (1-\frac{2\mapprox}{\mapprox k+2n}) \frac{2n C}{\mapprox k+2n} + (\frac{1}{\mapprox k+2n})^2 2nC\mapprox \ ,
  \end{array}
  \]
  where in the first line we have used that $\CfTotal \le C \mapprox$, and in the last inequality we have plugged in the induction hypothesis for $h_k$. Simply rearranging the terms gives
  \[
  \begin{array}{rl}
   h_{k+1} \le& \frac{2nC}{\mapprox k+2n} \left(1 - \frac{2\mapprox}{\mapprox k+2n} + \frac{\mapprox}{\mapprox k+2n}\right) \\[3pt]
                      =& \frac{2nC}{\mapprox k+2n} \frac{\mapprox k+2n-\mapprox}{\mapprox k+2n} \\[3pt]
                     \le& \frac{2nC}{\mapprox k+2n} \frac{\mapprox k+2n}{\mapprox k+2n+\mapprox} \\[3pt]
                      =& \frac{2nC}{\mapprox (k+1)+2n} \ ,
  \end{array}
  \]
  which is our claimed bound for $k\ge 0$.
  	
  The analogous claim for Algorithm~\ref{alg:FW_product_again} using the approximate linear primitive with \emph{additive} approximation quality~\eqref{eqn:qualityAdd} with $\addFactor_k = \frac{2n}{\mapprox k+2n}$ follows from exactly the same argument, by replacing every occurrence of $\CfTotal$ in the proof here by~$\CfTotal(1+\delta)$ instead (compare to Lemma~\ref{lem:stepProduct} also -- note that $\stepsize = \addFactor_k$ here). Note moreover that one can combine easily both a multiplicative approximation with an additive one as in~\eqref{eqn:qualityBoth}, and modify the convergence statement accordingly.
\end{proof}

\paragraph{Domains Without Product Structure: $n=1$.}
Our above convergence result also holds for the case of the standard Frank-Wolfe algorithm, when no product structure on the domain is assumed, i.e. for the case $n=1$.
In this case, the constant in the convergence can even be improved for the  variant of the algorithm without a multiplicative approximation ($\mapprox = 1$), since the additive term given by $h_0$, i.e. the error at the starting point, can be removed. This is because already after the first step, we obtain a bound for $h_1$ which is independent of $h_0$. More precisely, plugging $\stepsize_0 :=1$ and $\mapprox=1$ in the bound~\eqref{eqn:hStepProductExp} when $n=1$ gives $h_1 \leq 0+\CfTotal (1+\delta) \leq C$. Using $k=1$ as the base case for the same induction proof as above, we obtain that for $n=1$:
\[
    h_k \leq \frac{2}{k+2} \CfTotal (1+\delta) ~~\text{ for all }\ k\geq1 \ ,
\]
which matches the convergence rate given in~\citet{Jaggi:2013wg}.
Note that in the traditional Frank-Wolfe setting, i.e. $n=1$, our defined curvature constant becomes $\CfTotal = \Cf$.%

\paragraph{Dependence on $h_0$.} We note that the only use of including $h_0$ in the constant $C = \mapprox^{-1}\CfTotal + h_0$ was to satisfy the base case in the induction proof, at $k=0$. If from the structure of the problem we can get a guarantee that $h_0 \leq \mapprox^{-1}\CfTotal$, then the smaller constant $C' = \mapprox^{-1}\CfTotal$ will satisfy the base case and the whole proof will go through with it, without needing the extra $h_0$ factor. See also Theorem~\ref{thm:fast_warmup} for a better convergence result with a weaker dependence on $h_0$ in the case where the line-search is used.

\subsection{Obtaining Small Duality Gap}

The following theorem shows that after $O\big(\frac1\varepsilon\big)$ many iterations, Algorithm~\ref{alg:FW_product_again} will have visited a solution with $\varepsilon$-small duality gap in expectation. Because the block-coordinate Frank-Wolfe algorithm is only looking at one block at a time, it doesn't know what is its current true duality gap without doing a full (batch) pass over all blocks. Without monitoring this quantity, the algorithm could miss which iterate had a low duality gap. This is why, if one is interested in having a good duality gap (such as in the structural SVM application), then the averaging schemes considered in~\eqref{eqn:wavg} and~\eqref{eqn:tavg} become interesting: the following theorem also says that the bound hold for \emph{each} of the averaged iterates, \emph{if the duality gap function $g$ is convex}, which is the case for example when $f$ is a quadratic function.\footnote{To see that $g$ is convex when $f$ is quadratic, we refer to the equivalence between the gap $g(\x)$ and the Fenchel duality $p(\x)-d(\nabla f(\x)))$ as shown in Appendix~\ref{sec:Fenchel_gap_equivalence}. The dual function $d(\cdot)$ is concave, so if $\nabla f(\x))$ is an affine function of $\x$ (which is the case for a quadratic function), then $d$ will be a concave function of $\x$, implying that $g(\x) = p(\x)-d(\nabla f(\x)))$ is convex in $\x$, since the primal function $p$ is convex.}

\begin{theorem}[Primal-Dual Convergence]\label{thm:primalDualGreedy1Regime}
 For each $K\ge 0$, the variants of Algorithm~\ref{alg:FW_product_again} (either using the predefined step-sizes, or using line-search) will yield at least one iterate $\x^{(\hat k)}$ with $\hat k\le K$ with expected duality gap bounded by
\[
\E\big[g(\x^{(\hat k)})\big] \le \beta \frac{2 n}{\mapprox (K+1)} C  \ ,
\]
where $\beta=3$ and $C = \mapprox^{-1}\CfTotal(1+\delta)+ f(\x^{(0)}) - f(\x^*)$. $\delta \geq 0$ and $0 < \mapprox \leq 1$ are the approximation quality parameters as defined in~(\ref{eqn:qualityBoth}) %
-- use $\delta=0$ and $\mapprox=1$ for the exact variant. 

Moreover, if the duality gap $g$ is a convex function of $\x$, then the above bound also holds both for $\E\big[g(\bar{\x}_w^{(K)})\big]$ and $\E\big[g(\bar{\x}_{0.5}^{(K)})\big]$ for each $K \geq 0$, where $\bar{\x}_w^{(K)}$ is the weighted average of the iterates as defined in~(\ref{eqn:wavg}) and  $\bar{\x}_{0.5}^{(K)}$ is the $0.5$-suffix average of the iterates as defined in~(\ref{eqn:tavg}) with $\mu=0.5$.
\end{theorem}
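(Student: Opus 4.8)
The plan is to turn the one-step expected-decrease estimate of Lemma~\ref{lem:stepProduct} into a bound on a single \emph{weighted average} of the expected gaps, from which all three assertions (minimum-gap iterate, $t$-weighted average $\bar{\x}_w^{(K)}$, and $0.5$-suffix average $\bar{\x}_{0.5}^{(K)}$) will follow. First I would take full expectations in Lemma~\ref{lem:stepProduct} over all the random block choices, abbreviating $h_k := \E[f(\x^{(k)})]-f(\x^*)$ and $g_k := \E[g(\x^{(k)})]$. With the predefined step-size $\stepsize_k = \frac{2n}{\mapprox k+2n}$ (and noting that line-search can only improve the objective, so the same bound applies), the combined-error form~\eqref{eqn:qualityBoth} of the lemma rearranges, using $g_k\ge0$, into
\[
\tfrac{\stepsize_k \mapprox}{n}\, g_k ~\le~ h_k - h_{k+1} + \tfrac{\stepsize_k^2}{2n}\CfTotal(1+\delta) ,
\]
where henceforth $\CfTotal(1+\delta)$ folds in the additive error (set $\delta=0,\mapprox=1$ for the exact variant).

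The key observation is that all three targets reduce to bounding one quantity. Fix nonnegative weights $w_1,\dots,w_K$ summing to $1$ and set $S := \sum_{t=1}^K w_t\, g_t$. Pathwise, the minimum gap over $\{0,\dots,K\}$ is at most any convex combination of the gaps, so $\E[g(\x^{(\hat k)})] \le S$; and if $g$ is convex (which holds for quadratic $f$, as recorded in the footnote to the theorem via the Fenchel-duality interpretation of Appendix~\ref{sec:Fenchel_gap_equivalence}), Jensen's inequality gives $\E[g(\bar{\x}^{(K)})]\le S$ for the averaged iterate whose averaging weights are exactly the $w_t$. It therefore suffices to exhibit, for the weights $w_t\propto t$ (matching~\eqref{eqn:wavg}) and for the uniform weights on $\{\lceil K/2\rceil,\dots,K\}$ (matching~\eqref{eqn:tavg} with $\mu=0.5$), the estimate $S \le 3\cdot\frac{2n}{\mapprox(K+1)}C$; the minimum-iterate bound then follows from either choice.

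To estimate $S$, I would isolate $g_t$ from the displayed inequality and multiply by $w_t$, which splits $S$ into a \emph{curvature} part and a \emph{telescoping} part. The curvature part collapses because $\stepsize_t^2(\mapprox t+2n)=2n\,\stepsize_t$, and the bound $\CfTotal(1+\delta)/\mapprox\le C$ turns it into a term of order $\frac{2n}{\mapprox(K+1)}C$. The telescoping part $\sum_t a_t(h_t-h_{t+1})$, with coefficients $a_t$ increasing in $t$, is handled by Abel summation by parts: the boundary term $-a_K h_{K+1}\le0$ is discarded, and the residual $\sum_t (a_t-a_{t-1})h_t$ is controlled by substituting the primal rate $h_t\le\frac{2nC}{\mapprox t+2n}$ from Theorem~\ref{thm:primalGreedyProduct} together with the elementary bound $\mapprox(2t-1)+2n\le2(\mapprox t+2n)$, yielding a term of order $\frac{4n}{\mapprox(K+1)}C$. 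Adding the two pieces gives $S\le\frac{6n}{\mapprox(K+1)}C$, i.e. $\beta=3$.

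The main obstacle is extracting the \emph{clean} constant $\beta=3$ — which must collapse to the known $\frac{6\Cf}{K+1}$ bound for ordinary Frank-Wolfe when $n=1$ and $\mapprox=1$ — rather than a loose or $\log K$-inflated constant. This forces one to average over the correct window: for the suffix and minimum bounds one sums only over the second half of the run, so that the gap-weight total $\sum_t \tfrac{\stepsize_t\mapprox}{n}$ stays $\Theta(1)$ instead of growing logarithmically, while for $\bar{\x}_w^{(K)}$ the $t$-proportional weights are dictated by~\eqref{eqn:wavg} and the summation-by-parts bookkeeping with the primal rate must be tracked carefully to keep the leading constant at $3$. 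Secondary points to verify are the pathwise (hence expectation-preserving) nature of the ``$\min\le$ average'' step, so that no measurability issue arises from the random index $\hat k$, and the convexity of $g$ needed for Jensen to apply to the two averaged iterates.
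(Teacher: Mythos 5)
Your proposal follows essentially the same route as the paper's own proof: the same rearranged one-step inequality from Lemma~\ref{lem:stepProduct}, the same master convex-combination bound covering all three claims at once (minimum iterate, $t$-weighted average via Jensen, $0.5$-suffix average), the same Abel summation by parts with the boundary term $-a_K h_{K+1}\le 0$ discarded, and the same substitution of the primal rate from Theorem~\ref{thm:primalGreedyProduct}, with your $2nC/(\mapprox(K+1)) + 4nC/(\mapprox(K+1))$ split reproducing the paper's $\beta=3$ exactly. The only detail you omit is the trivial $K=0$ corner case (where the $t$-proportional weights degenerate), which the paper settles with a one-line direct bound on $g_0$.
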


\begin{proof}
To simplify notation, we will again denote the expected primal error and expected duality gap for any iteration $k\ge0$ in the algorithm by $h_k := \E[ h(\x^{(k)})] :=  \E[ f(\x^{(k)}) - f(\x^{*})]$ and $g_k := \E[g(\x^{(k)})]$ respectively.

The proof starts again by using the crucial improvement Lemma~\ref{lem:stepProduct} with $\stepsize = \stepsize_k := \frac{2n}{\mapprox k + 2n}$ to cover both variants of Algorithm~\ref{alg:FW_product_again} at the same time. As in the beginning of the proof of Theorem~\ref{thm:primalGreedyProduct}, we take the expectation with respect to $\x^{(k)}$ in Lemma~\ref{lem:stepProduct} and subtract $f(\x^{*})$ to get that for each $k \geq 0$ (for the general approximate variant of the algorithm):
  \[
  \begin{array}{rl}
   h_{k+1} \le& h_k - \frac{1}{n}\stepsize_k \mapprox \, g_k + \frac{1}{2n} ( {\stepsize_k}^2 + \delta \addFactor_k \stepsize_k) \CfTotal \\[3pt]
   =& h_k - \frac{1}{n}\stepsize_k \mapprox \, g_k + \frac{1}{2n}  {\stepsize_k}^2 \CfTotal (1+\delta) \ ,\\[3pt]
  \end{array}
  \]
since $\addFactor_k \leq \stepsize_k$. By isolating $g_k$ and using the fact that $C \geq \mapprox^{-1}\CfTotal (1+\delta)$, we get the crucial inequality for the expected duality gap:
\begin{equation} \label{eqn:gk_inequality}
	g_k \leq \frac{n}{\mapprox \stepsize_k} (h_k - h_{k+1}) + \stepsize_k \frac{C}{2} \ . 
\end{equation}
The general proof idea to get an handle on $g_k$ is to take a convex combination over multiple $k$'s of the inequality~\eqref{eqn:gk_inequality}, to obtain a new upper bound. Because a convex combination of numbers is upper bounded by its maximum, we know that the new bound has to upper bound at least one of the $g_k$'s (this gives the existence $\hat{k}$ part of the theorem). Moreover, if $g$ is convex, we can also obtain an upper bound for the expected duality gap of the same convex combination of the iterates.

So let $\{w_k\}_{k=0}^K$ be a set of non-negative weights, and let $\rho_k := w_k / S_K$, where $S_K := \sum_{k=0}^K w_k$. Taking the convex combination of inequality~\eqref{eqn:gk_inequality} with coefficient $\rho_k$, we get
\begin{align}
	\sum_{k=0}^K \rho_k g_k &\leq \frac{n}{\mapprox} \sum_{k=0}^K \rho_k \left(\frac{h_k}{\stepsize_k} - \frac{h_{k+1}}{\stepsize_k} \right) + \sum_{k=0}^K \rho_k \stepsize_k \frac{C}{2} \nonumber \\
	&=  \frac{n}{\mapprox} \left(h_0 \frac{\rho_0}{\stepsize_0} - h_{K+1}  \frac{\rho_K}{\stepsize_K}\right) +  \frac{n}{\mapprox} \sum_{k=0}^{K-1} h_{k+1}  
	  \left(\frac{\rho_{k+1}}{\stepsize_{k+1}} -  \frac{\rho_{k}}{\stepsize_k} \right) 
	  + \sum_{k=0}^K \rho_k \stepsize_k \frac{C}{2} \nonumber \\
	&\leq  \frac{n}{\mapprox} h_0 \frac{\rho_0}{\stepsize_0} +  \frac{n}{\mapprox} \sum_{k=0}^{K-1} h_{k+1}  
		  \left(\frac{\rho_{k+1}}{\stepsize_{k+1}} -  \frac{\rho_{k}}{\stepsize_k} \right) 
		  + \sum_{k=0}^K \rho_k \stepsize_k \frac{C}{2} \ , \label{eqn:convex_gap_combo}
\end{align}
using $h_{K+1} \geq 0$. Inequality~\eqref{eqn:convex_gap_combo} can be seen as a master inequality to derive various bounds on $g_k$. In particular, if we define $\bar{\x} := \sum_{k=0}^K \rho_k \x^{(k)}$ and we suppose that $g$ is convex (which is the case for example when $f$ is a quadratic function), then we have $\E[ g(\bar{\x}) ] \leq  \sum_{k=0}^K \rho_k g_k$ by convexity and linearity of the expectation.

\paragraph{Weighted-averaging case.} We first consider the weights $w_k = k$ which appear in the definition of the weighted average of the iterates $\bar{\x}_w^{(K)}$ in~\eqref{eqn:wavg} and suppose $K \geq1$. In this case, we have $\rho_k = k / S_K$ where $S_K = K(K+1)/2$. With the predefined step-size $\stepsize_k = 2n / (\mapprox k + 2n)$, we then have 
\begin{align*}
    \frac{\rho_{k+1}}{\stepsize_{k+1}} -  \frac{\rho_{k}}{\stepsize_k} &= \frac{1}{2n S_K} \left( (k+1)(\mapprox(k+1)+2n) - k (\mapprox k+2n)\right) \\
   &= \frac{\mapprox(2k+1)+2n}{2n S_K} \ .
\end{align*}
Plugging this in the master inequality~\eqref{eqn:convex_gap_combo} as well as using the convergence rate $h_k \leq \frac{2nC}{\mapprox k + 2n}$ from Theorem~\ref{thm:primalGreedyProduct}, we obtain
\begin{align*} 
	\sum_{k=0}^K \rho_k g_k 
	&\leq \frac{n}{\mapprox S_K} \left[0 + \sum_{k=0}^{K-1} \frac{2nC}{\mapprox(k+1) + 2n} \frac{\mapprox(2k+1)+2n}{2n}
			   \right] 
			  + \sum_{k=0}^K \frac{2nk}{\mapprox k + 2n} \frac{C}{2 S_K} \nonumber \\
	&\leq \frac{nC}{\mapprox S_K} \left[ 2 \sum_{k=0}^{K-1} 1 + \sum_{k=1}^{K} 1 \right] \nonumber \\
	&= \frac{2nC}{\mapprox (K+1)} \cdot 3 .
\end{align*}
Hence we have proven the bound with $\beta = 3$ for $K \geq 1$. For $K=0$, the master inequality~\eqref{eqn:convex_gap_combo} becomes
\[
   g_0 \le \frac{n}{\mapprox} h_0 + \frac{1}{2}C \leq \frac{nC}{\mapprox} \left(1+\frac{1}{2n}\right)
\]
since $h_0 \leq C$ and $\nu \leq 1$. Given that $n\geq 1$, we see that the bound also holds for $K=0$.

\paragraph{Suffix-averaging case.} For the proof of convergence of the $0.5$-suffix averaging of the iterates $\bar{\x}_{0.5}^{(K)}$, we refer the reader to the proof of Theorem~\ref{thm:primalDualFaster} which can be re-used for this case (see the last paragraph of the proof to explain how).
\end{proof}

\paragraph{Domains Without Product Structure: $n=1$.} As we mentioned after the proof of the primal convergence Theorem~\ref{thm:primalGreedyProduct}, we note that if $n=1$, then we can replace $C$ in the statement of Theorem~\ref{thm:primalDualGreedy1Regime} by $\CfTotal (1+\delta)$ for $K\geq1$ when $\mapprox=1$, as then we can ensure that $h_1 \leq C$ which is all what was needed for the primal convergence induction. Again, $\CfTotal = \Cf$ when $n=1$.

\subsection{An Improved Convergence Analysis for the Line-Search Case}\label{ssec:fast_warmup}%

\subsubsection{Improved Primal Convergence for Line-Search}
If line-search is used, we can improve the convergence results of Theorem~\ref{thm:primalGreedyProduct} by showing a weaker dependence on the starting condition $h_0$ thanks to faster progress in the starting phase of the first few iterations:

\begin{theorem}[Improved Primal Convergence for Line-Search]\label{thm:fast_warmup}
For each $k\ge k_0$, the iterate $\x^{(k)}$ of the line-search variant of
Algorithm~\ref{alg:FW_product_again} (where the linear subproblem is solved with a multiplicative approximation quality~(\ref{eqn:qualityMult}) of $0<\mapprox \leq 1$) satisfies
\begin{equation} \label{eqn:PrimalFasterRate}
\E\big[f(\x^{(k)})\big] - f(\x^*) ~\le~
\frac{1}{\mapprox}\frac{2n\CfTotal}{\mapprox(k-k_0)+2n}
\end{equation}
where $k_0 := \max\big\{0, \left\lceil 
\log\left(\frac{2 \mapprox h(\x^{(0)})}{\CfTotal}\right) 
\Big/ (-\log\xi_n) \right\rceil \big\}$ 
is the number of steps required to guarantee that $\E\big[f(\x^{(k)})\big] - f(\x^*) \leq \mapprox^{-1}\CfTotal$, with $\x^*\in \domain$ being an optimal solution to problem~(\ref{eqn:opt}), and $h(\x^{(0)}) := f(\x^{(0)}) - f(\x^*)$ is the primal error at the starting point, and
$\xi_n:=1-\frac{\mapprox}{n}<1$ is the geometric decrease rate of the primal error in the first phase while $k < k_0$ --- i.e. $\E\big[f(\x^{(k)})\big] - f(\x^*) \leq (\xi_n)^k \  h(\x^{(0)}) + \CfTotal/2\mapprox$ for $k < k_0$.

If the linear subproblem is solved with an additive approximation quality~(\ref{eqn:qualityAdd}) of $\delta \geq 0$ instead, then replace all appearances of $\CfTotal$ above with $\CfTotal (1+\delta)$.
\end{theorem}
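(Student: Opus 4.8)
The plan is to exploit the fact that line-search makes the one-step descent bound of Lemma~\ref{lem:stepProduct} hold for \emph{every} fictitious step-size $\stepsize\in[0,1]$ simultaneously, and then to run the analysis in two phases: a geometric (linear) decrease phase while the error is still large, followed by the usual sublinear phase but with the weaker constant $\mapprox^{-1}\CfTotal$ in place of $\mapprox^{-1}\CfTotal + h_0$. First I would take the multiplicative version of Lemma~\ref{lem:stepProduct}, apply weak duality $h(\x)\le g(\x)$, and take expectations over the history to obtain, for $h_k:=\E[f(\x^{(k)})]-f(\x^*)$, the recursion
\[
h_{k+1} \le \Bigl(1-\tfrac{\stepsize\mapprox}{n}\Bigr)h_k + \tfrac{\stepsize^2}{2n}\CfTotal ,
\]
which is valid for \emph{any} fixed $\stepsize\in[0,1]$ precisely because the line-search iterate satisfies $f(\x^{(k+1)}_{LS})\le f(\x^{(k+1)}_\stepsize)$ for every $\stepsize$.

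For the first phase I would plug in the aggressive choice $\stepsize=1$, giving the contraction $h_{k+1}\le \xi_n h_k + \tfrac{\CfTotal}{2n}$ with $\xi_n=1-\tfrac{\mapprox}{n}$. Unrolling this geometric recursion and summing $\sum_{j=0}^{k-1}\xi_n^j=\tfrac{n}{\mapprox}(1-\xi_n^k)$ yields
\[
h_k \le \xi_n^k\, h(\x^{(0)}) + \tfrac{\CfTotal}{2\mapprox},
\]
which is exactly the claimed first-phase bound. Requiring the right-hand side to drop below $\mapprox^{-1}\CfTotal$, i.e. solving $\xi_n^k h(\x^{(0)})\le \tfrac{\CfTotal}{2\mapprox}$ for $k$ by taking logarithms (and flipping the inequality since $\log\xi_n<0$), produces precisely the threshold $k_0=\max\{0,\lceil \log(2\mapprox h(\x^{(0)})/\CfTotal)/(-\log\xi_n)\rceil\}$, so that $h_{k_0}\le\mapprox^{-1}\CfTotal$.

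For the second phase I would restart the induction of Theorem~\ref{thm:primalGreedyProduct} at the base point $k_0$, now using the diminishing virtual step-sizes $\stepsize=\tfrac{2n}{\mapprox(k-k_0)+2n}$ (again legitimate since line-search dominates any fixed step-size). The key point is that $h_{k_0}\le\mapprox^{-1}\CfTotal$ satisfies the base case of that induction with the \emph{reduced} constant $C'=\mapprox^{-1}\CfTotal$, exactly as noted in the \emph{Dependence on $h_0$} remark following Theorem~\ref{thm:primalGreedyProduct}. The induction step is then identical to that proof, so it carries over with $k$ replaced by $k-k_0$ and $C$ replaced by $C'$, delivering
\[
h_k \le \frac{2nC'}{\mapprox(k-k_0)+2n} = \frac{1}{\mapprox}\frac{2n\CfTotal}{\mapprox(k-k_0)+2n}
\qquad\text{for } k\ge k_0,
\]
which is~\eqref{eqn:PrimalFasterRate}.

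I expect the main obstacle to be the clean bookkeeping of this virtual step-size argument: the single line-search algorithm is analyzed with two \emph{different} fictitious step-size schedules in the two phases, and the whole restart is justified solely by the fact that the line-search objective value never exceeds the value attained at any fixed $\stepsize$. The additive-error case needs no new idea: with $\mapprox=1$ and the candidate direction still defined relative to $\addFactor_k=\tfrac{2n}{k+2n}\le 1$, the extra term $\tfrac{1}{2n}\delta\addFactor_k\stepsize\,\CfTotal$ appearing in Lemma~\ref{lem:stepProduct} is absorbed by bounding $\addFactor_k\le 1$, which amounts to replacing $\CfTotal$ by $\CfTotal(1+\delta)$ throughout both phases.
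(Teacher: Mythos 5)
Your proof is correct, and your first phase coincides exactly with the paper's argument: plug $\stepsize=1$ into the descent bound of Lemma~\ref{lem:stepProduct} (valid for any fixed $\stepsize$ because line-search dominates), unroll the geometric recursion to get $h_k \le (\xi_n)^k h_0 + \CfTotal/2\mapprox$, and read off $k_0$ by taking logarithms. Where you genuinely diverge is the second phase. The paper plugs the \emph{optimal} fixed step $\stepsize^* = \mapprox h_k/\CfTotal$ into the same one-step bound, obtaining the quadratic recurrence $h_{k+1} \le h_k - \frac{1}{\zeta}h_k^2$ with $\zeta = 2n\CfTotal/\mapprox^2$, and solves it by comparison with the differential equation $h'(t) = -h^2(t)/\zeta$; you instead restart the induction of Theorem~\ref{thm:primalGreedyProduct} at $k_0$ with the shifted schedule $\stepsize_k = \frac{2n}{\mapprox(k-k_0)+2n}$ and the reduced constant $C' = \mapprox^{-1}\CfTotal$, justified by the base case $h_{k_0} \le \mapprox^{-1}\CfTotal$. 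Both routes deliver the identical bound --- the paper itself concedes this equivalence in its ``Effect of Line-Search'' remark --- and your route is precisely the one the paper uses for the \emph{additive} variant, so your argument is arguably more uniform: a single induction-restart covers both cases, at the price of not exhibiting the optimal-step-size recurrence that explains \emph{why} line-search buys nothing beyond knowing when to switch phases. One small correction in your additive case: absorbing the extra term $\frac{1}{2n}\delta\,\addFactor_k\stepsize\,\CfTotal$ into $\CfTotal(1+\delta)$ requires $\addFactor_k \le \stepsize$, not merely $\addFactor_k \le 1$ (one needs $\delta\,\addFactor_k\stepsize \le \delta\stepsize^2$). With $\stepsize=1$ in the first phase the two conditions coincide, but in the second phase you must invoke $\addFactor_k = \frac{2n}{k+2n} \le \frac{2n}{(k-k_0)+2n} = \stepsize_k$, which is exactly the ``crucial fact'' the paper highlights; since this inequality does hold, the slip is cosmetic rather than a gap.
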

\begin{proof}
For the line-search case, the expected improvement guaranteed by Lemma~\ref{lem:stepProduct} for the multiplicative approximation variant of Algorithm~\ref{alg:FW_product_again}, in expectation as in~\eqref{eqn:hStepProductExp}, is valid for \emph{any} choice of $\stepsize\in[0,1]$:
\begin{equation}\label{eqn:gammaLSimpr}
\begin{array}{rl}
  \E\big[ h(\x^{(k+1)}_{LS})\big]  \le& (1 - \frac{\mapprox\stepsize }{n}) \E \big[ h(\x^{(k)}) \big] + \frac{\stepsize^2}{2n} \CfTotal \ .

\end{array}
\end{equation}
Because the bound~\eqref{eqn:gammaLSimpr} holds for any $\stepsize$, we are free to choose the one which minimizes it subject to $\stepsize \in [0,1]$, that is $\stepsize^*:=\min\left\{1,\frac{\mapprox h_k}{\CfTotal}\right\}$, where we have again used the identification $h_k := \E\big[ h(\x^{(k)}_{LS})\big]$.
Now we distinguish two cases:

If $\stepsize^*=1$, then $\mapprox h_k \ge \CfTotal$. By unrolling the inequality~\eqref{eqn:gammaLSimpr} recursively to the beginning and using $\stepsize = 1$ at each step, we get:
\[
\begin{array}{rl}
  h_{k+1}  \le& \left(1-\frac{\mapprox}{n}\right) h_k + \frac{1}{2n} \CfTotal \\ 
  \le& \left(1-\frac{\mapprox}{n}\right)^{k+1} h_0 +  \frac{1}{2n} \CfTotal \sum_{t=0}^k \left(1-\frac{\mapprox}{n}\right)^t \\
   \le& \left(1-\frac{\mapprox}{n}\right)^{k+1} h_0 +  \frac{1}{2n} \CfTotal \sum_{t=0}^\infty \left(1-\frac{\mapprox}{n}\right)^t \\
   =& \left(1-\frac{\mapprox}{n}\right)^{k+1} h_0 + \frac{1}{2n} \CfTotal \left(\frac{1}{1-(1-\mapprox/n)}\right) \\
   =& \left(1-\frac{\mapprox}{n}\right)^{k+1} h_0 + \frac{1}{2\mapprox} \CfTotal \ .\\
\end{array}
\]
We thus have a geometric decrease with rate $\xi_n:=1-\frac{\mapprox}{n}$ in this phase. We then get $h_k \leq \mapprox^{-1}\CfTotal$ as soon as $(\xi_n)^k h_0 \leq \CfTotal/2\mapprox$, i.e. when $k\ge\log_{1/\xi_n}(2 \mapprox h_0/\CfTotal)
= \log(2 \mapprox h_0/\CfTotal)/-\log(1-\frac{\mapprox}{ n})$. We thus have obtained a logarithmic bound on the number of steps that fall into the first regime case here, i.e. where $h_k$ is still `large'. Here it is crucial to note that the primal error $h_k$ is always decreasing in each step, due to the line-search, so once we leave this regime of $h_k\ge \mapprox^{-1}\CfTotal$, then we will never enter it again in subsequent steps.

On the other hand, as soon as we reach a step $k$ (e.g. when $k=k_0)$ such that $\stepsize^*<1$ or equivalently $h_k<\mapprox^{-1}\CfTotal$, then we are always in the second phase where $\stepsize^* = \frac{\mapprox h_k}{\CfTotal}$. Plugging this value of $\stepsize^*$ in (\ref{eqn:gammaLSimpr}) yields the recurrence bound:
\begin{equation}\label{eqn:fastRecurrence}
  h_{k+1} \le h_k - \frac{1}{\zeta} h_k^2   \quad \forall k \geq k_0
\end{equation}
where $\zeta := \frac{2n\CfTotal}{\mapprox^2}$, with the initial condition $h_{k_0} \leq \frac{\CfTotal}{\mapprox}
= \frac{\mapprox \zeta}{2n}$. This is a standard recurrence inequality which appeared for
example in \citet[Theorem 5, see their Equation (23)]{Joachims:2009ex} or
in the appendix of~\citetsup{Teo:2007bi}. We can solve the recurrence
\eqref{eqn:fastRecurrence} by following the argument of \citetsup{Teo:2007bi}, where it was pointed out that since $h_k$ is monotonically decreasing, we can upper bound $h_k$ by the solution to the corresponding differential equations $h'(t) = -h^2(t) / \zeta$, with initial condition $h(k_0) = h_{k_0}$. Integrating both sides, we get the solution $h(t) = \frac{\zeta}{t-k_0+\zeta/h_{k_0}}$. Plugging in the value for $h_{k_0}$ and since $h_k \leq h(k)$, we thus get the bound:
\begin{equation}\label{eqn:recurrenceSolution}
  h_{k} \le \frac{1}{\mapprox}\frac{2n\CfTotal}{\mapprox(k-k_0)+2n}  \quad \forall k \geq k_0,
\end{equation}
which completes the proof for the multiplicative approximation variant.

For the \emph{additive approximation} variant, the inequality~\eqref{eqn:gammaLSimpr} with $\stepsize = 1$ in Lemma~\ref{lem:stepProduct} becomes:
\[
\begin{array}{rl}
  h_{k+1}  \le& \left(1-\frac{\mapprox}{n}\right) h_k + \frac{1}{2n} (1+ \delta \addFactor_k) \CfTotal \\ 
  \leq& \left(1-\frac{\mapprox}{n}\right) h_k + \frac{1}{2n} (1+ \delta) \CfTotal \ , \\ 
\end{array}
\]
since $\addFactor_k \leq 1$. By unrolling this inequality as before, we get the geometric rate of decrease in the initial phase by using $\stepsize = 1$ until $k=k_0$ where we can ensure that $h_{k_0} \leq \CfTotal (1+ \delta)/ \mapprox$. We then finish the proof by re-using the induction proof from Theorem~\ref{thm:primalGreedyProduct}, but with Equation~\eqref{eqn:PrimalFasterRate} as the induction hypothesis, replacing $\CfTotal$ with $\CfTotal (1+ \delta)$. The base case at $k = k_0$ is satisfied by the definition of $k_0$. For the induction step, we use $\stepsize_k = \frac{2n}{\mapprox (k-k_0) + 2n}$ (note that because we use line-search, we are free to use any $\stepsize$ we want in the inequality from Lemma~\ref{lem:stepProduct}), and use the crucial fact that $\addFactor_k = \frac{2n}{\mapprox k + 2n}\leq \stepsize_k$ to get a similar argument as in Theorem~\ref{thm:primalGreedyProduct}.
\end{proof}

\paragraph{Number of Iterations.} We now make some observations in the case of $\delta = 0$ (for simplicity). Note that since for $n > 0.5$ and $-\log\left(1-\frac{\mapprox}{n}\right) > \frac{\mapprox}{n}$ for the natural logarithm, we get that $k_0 \leq  \left\lceil \frac{n}{\mapprox} \log\left( \frac{2 \mapprox h(\x^{(0)})}{\CfTotal} \right) \right\rceil$ and so unless the structure of our problem can guarantee that $h(\x^{(0)}) \leq \CfTotal/ \mapprox$, we get a linear number of steps in~$n$ required to reach the second phase, but the dependence is logarithmic in $h(\x^{(0)})$ -- instead of linear in~$h(\x^{(0)})$ as given by our previous convergence Theorem~\ref{thm:primalGreedyProduct} for the fixed step-size variant (in the fixed step-size variant, we would need $k_0 = \left\lceil 2n \frac{h(\x^{(0)})}{\CfTotal} \right\rceil$ steps to guarantee $h_{k_0} \leq \CfTotal / \mapprox$). Therefore, for the line-search variant of our Algorithm~\ref{alg:FW_product_again}, we have obtained guaranteed $\varepsilon$-small error after
\[
\left\lceil \frac{n}{\mapprox} \log\left( \frac{2 \mapprox h(\x^{(0)})}{\CfTotal} \right) \right\rceil + \left\lceil \frac{2n\CfTotal}{\mapprox^2 \, \varepsilon} \right\rceil
\]
iterations.

\paragraph{Effect of Line-Search.} It is also interesting to point out that even though we were using the optimal step-size in the second phase of the above proof (which yielded the recurrence~\eqref{eqn:fastRecurrence}), the second phase bound is not better than what we could have obtained by using a fixed step-size schedule of $\frac{2n}{\mapprox (k-k_0)+2n}$ and following the same induction proof line as in the previous Theorem~\ref{thm:primalGreedyProduct} (using the base case $h_{k_0} \leq \CfTotal / \mapprox$ and so we could let $C := \mapprox^{-1}\CfTotal$). This thus means that the advantage of the line-search over the fixed step-size schedule only appears in knowing when to switch from a step-size of $1$ (in the first phase, when $h_k \geq \mapprox^{-1}\CfTotal$) to a step-size of $\frac{2n}{\mapprox(k-k_0)+2n}$ (in the second phase), which unless we know the value of $f(\x^*)$, we cannot know in general. In the standard Frank-Wolfe case where $n=1$ and $\mapprox=1$, there is no difference in the rates for line-search or fixed step-size schedule as in this case we know $h_1 \leq \CfTotal$ as explained at the end of the proof of Theorem~\ref{thm:primalGreedyProduct}. This also suggests that if $k_0 > n$, it might be more worthwhile in theory to first do one batch Frank-Wolfe step to ensure that $h_1 \leq \CfTotal$, and then proceed with the block-coordinate Frank-Wolfe algorithm afterwards.

\subsubsection{Improved Primal-Dual Convergence for Line-Search}
Using the improved primal convergence theorem for line-search, we can also get a better rate for the expected duality gap (getting rid of the dependence of $h_0$ in the constant $C$):
\begin{theorem}[Improved Primal-Dual Convergence for Line-Search]\label{thm:primalDualFaster}
Let $k_0$ be defined as in Theorem~\ref{thm:fast_warmup}. For each $K\ge 5 k_0$, the line-search variant of Algorithm~\ref{alg:FW_product_again} will yield at least one iterate $\x^{(\hat k)}$ with $\hat k\le K$ with expected duality gap bounded by
\[
\E\big[g(\x^{(\hat k)})\big] \le \beta \frac{2 n}{\mapprox (K+2)} C  \ ,
\]
where $\beta=3$ and $C = \mapprox^{-1}\CfTotal(1+\delta)$. $\delta \geq 0$ and $0 < \mapprox \leq 1$ are the approximation parameters as defined in~(\ref{eqn:qualityBoth}) %
 -- use $\delta=0$ and $\mapprox=1$ for the exact variant. 

Moreover, if the duality gap $g$ is a convex function of $\x$, then the above bound also holds for $\E\big[g(\bar{\x}_{0.5}^{(K)})\big]$ for each $K\ge 5 k_0$, where $\bar{\x}_{0.5}^{(K)}$ is the $0.5$-suffix average of the iterates as defined in~(\ref{eqn:tavg}) with $\mu=0.5$.
\end{theorem}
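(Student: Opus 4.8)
The plan is to mirror the primal--dual argument of Theorem~\ref{thm:primalDualGreedy1Regime}, but to run it only on the \emph{second phase} of the algorithm (the iterations $k\ge k_0$), where the improved primal convergence Theorem~\ref{thm:fast_warmup} guarantees the sharper rate $h_k\le \frac{2nC}{\mapprox(k-k_0)+2n}$ with $C=\mapprox^{-1}\CfTotal(1+\delta)$, a constant that no longer contains the initial error $h_0$. First I would reproduce the per-step gap inequality exactly as in~\eqref{eqn:gk_inequality}: starting from the improvement Lemma~\ref{lem:stepProduct} (multiplicative version), taking expectations and isolating $g_k:=\E[g(\x^{(k)})]$, one obtains, for \emph{any} $\stepsize_k\in[0,1]$,
\[
g_k \;\le\; \frac{n}{\mapprox\,\stepsize_k}\,(h_k-h_{k+1}) \;+\; \frac{\stepsize_k}{2}\,C \ .
\]
The crucial point is that this holds for an arbitrary $\stepsize_k$ precisely because line-search is used, so I am free to substitute the \emph{shifted} step-size $\stepsize_k:=\frac{2n}{\mapprox(k-k_0)+2n}$, which lies in $[0,1]$ exactly for $k\ge k_0$ and matches the denominator of the improved primal rate.

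Next I would average this inequality \emph{uniformly} over the suffix window $k=k_1,\dots,K$ with $k_1:=\ceil{K/2}$ and weights $\rho_k=1/(K-k_1+1)$; the resulting convex combination $\sum_k\rho_k\x^{(k)}$ is precisely the $0.5$-suffix average $\bar{\x}_{0.5}^{(K)}$ from~\eqref{eqn:tavg}. The hypothesis $K\ge 5k_0$ forces $k_1>k_0$, so the improved primal bound of Theorem~\ref{thm:fast_warmup} is valid on the whole averaging window. As in~\eqref{eqn:convex_gap_combo} I would telescope the term $\sum_k \frac{1}{\stepsize_k}(h_k-h_{k+1})$ by Abel summation; the key simplification is that $\frac{1}{\stepsize_{k+1}}-\frac{1}{\stepsize_k}=\frac{\mapprox}{2n}$ is constant, so the boundary term contributes $\frac{1}{\stepsize_{k_1}}h_{k_1}\le C$ while the interior term becomes $\frac{\mapprox}{2n}\sum_{k>k_1}h_k$. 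Together with the step-size term $\sum_k\stepsize_k\frac{C}{2}$, everything reduces to harmonic sums $\sum_k \frac{1}{\mapprox(k-k_0)+2n}$, which I would bound by logarithms. Here the factor $5$ pays off: under $K\ge 5k_0$ one has $k_0\le K/5$ and $k_1-k_0\ge 3K/10$, so the ratio $\frac{\mapprox(K-k_0)+2n}{\mapprox(k_1-k_0)+2n}\le \frac{K-k_0}{k_1-k_0}\le \tfrac83<e$, making each logarithm strictly below $1$ and collapsing the right-hand side to $\beta\frac{2n}{\mapprox(K+2)}C$ with $\beta=3$.

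Both conclusions then follow from this single master bound on $\sum_{k=k_1}^K\rho_k g_k$. For the existence statement, a weighted average is at least its minimum, so some $\hat k\in[k_1,K]\subseteq[0,K]$ satisfies $\E[g(\x^{(\hat k)})]\le\beta\frac{2n}{\mapprox(K+2)}C$, with no convexity of $g$ required. When $g$ is convex, Jensen's inequality and linearity of expectation give $\E[g(\bar{\x}_{0.5}^{(K)})]\le\sum_{k=k_1}^K\rho_k g_k$, yielding the averaged bound. The additive-error case is handled verbatim by replacing $\CfTotal$ with $\CfTotal(1+\delta)$ and using $\addFactor_k\le\stepsize_k$ to absorb the additive term, exactly as in the proof of Theorem~\ref{thm:primalGreedyProduct}. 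Finally, the same telescoping argument applies to the $0.5$-suffix case of Theorem~\ref{thm:primalDualGreedy1Regime} (which this proof is meant to cover): there one simply drops the phase split, uses the standard primal rate $h_k\le\frac{2nC}{\mapprox k+2n}$ valid for all $k\ge0$ with the weaker constant $C$ that retains $h_0$, and the ratio becomes $\le 2$ so no restriction on $K$ is needed.

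I expect the main obstacle to be the constant bookkeeping in the telescoping step: verifying that the specific choice of the factor $5$ in $K\ge 5k_0$, propagated through the ceiling $k_1=\ceil{K/2}$ and the additive $2n$ offsets, is exactly what pushes the logarithmic ratios below the threshold that delivers $\beta=3$ with denominator $K+2$ rather than a larger constant. Everything else is a routine specialization of the arguments already used for Theorems~\ref{thm:primalGreedyProduct} and~\ref{thm:primalDualGreedy1Regime}.
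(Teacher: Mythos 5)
You follow essentially the same route as the paper's proof: the gap inequality~\eqref{eqn:gk_inequality} with the shifted step-sizes $\stepsize_k=\frac{2n}{\mapprox(k-k_0)+2n}$ justified by line-search, uniform weights over the suffix $\{\ceil{0.5K},\dots,K\}$ plugged into the master inequality~\eqref{eqn:convex_gap_combo}, telescoping with the constant increment $\frac{1}{\stepsize_{k+1}}-\frac{1}{\stepsize_k}=\frac{\mapprox}{2n}$, a boundary term bounded by $C$ via Theorem~\ref{thm:fast_warmup}, an integral bound on the resulting harmonic sum, and the same endgame (a convex combination dominates its minimum for the existence claim, Jensen when $g$ is convex, and the $k_0=0$ specialization recovering the suffix-average part of Theorem~\ref{thm:primalDualGreedy1Regime}).

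However, the step ``each logarithm strictly below $1$'' hides a genuine gap, exactly where you anticipated trouble. For a positive decreasing $f$ one has $\sum_{k=k_1}^{K}f(k)\ge\int_{k_1}^{K+1}f(t)\,dt$, so the harmonic sum is strictly \emph{larger} than the logarithm of the unshifted ratio you wrote; the valid upper bound is $\sum_{k=k_1}^{K}f(k)\le\int_{k_1-1}^{K}f(t)\,dt$, whose logarithm carries a $-1$ shift: $\log\frac{\mapprox(K-k_0)+2n}{\mapprox(k_1-1-k_0)+2n}$. Your device of dropping the $2n$ offsets cannot be applied to this shifted ratio: doing so yields $\log\frac{K-k_0}{k_1-1-k_0}$, which at $K=5k_0$ with $k_0=4$, $k_1=10$ equals $\log\frac{16}{5}\approx 1.16>1$ and destroys $\beta=3$; and the alternative correction $\sum_{k=k_1}^{K}f(k)\le f(k_1)+\int_{k_1}^{K}f(t)\,dt$ costs an additive $\frac{\mapprox}{2n}\le\frac12$, giving only $\frac12+\log\frac83\approx 1.48$ and hence $\beta\approx 4$. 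Since $\log\frac83\approx 0.98$ sits within two percent of the threshold $1$, there is no slack for crude absorption: the $2n/\mapprox$ offsets must be retained and exploited. This is precisely what the paper does, via a two-case monotonicity analysis of $b(K)=\log\frac{K-a_n}{0.5K-1-a_n}$ with $a_n=k_0-2n/\mapprox$: if $2n/\mapprox\ge k_0+2$ then $b$ is increasing in $K$ with limit $\log 2<1$; otherwise $b$ is decreasing in $K$, is evaluated at $K=5k_0$, and is increasing in $k_0$ with limit $\log\frac83<1$. Substituting that case analysis (or an equally careful use of $2n/\mapprox\ge 2$) for your dropped-offset step completes your proof; everything else matches the paper.
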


\begin{proof}
We follow a similar argument as in the proof of Theorem~\ref{thm:primalDualGreedy1Regime}, but making use of the better primal convergence Theorem~\ref{thm:fast_warmup} as well as using the $0.5$-suffix average for the master inequality~(\ref{eqn:convex_gap_combo}). Let $K \geq 5 k_0$ be given. Let $\stepsize_k := \frac{2n}{\mapprox (k-k_0) + 2n}$ for $k \geq k_0$. Note then that $\addFactor_k = \frac{2n}{\mapprox k + 2n} \leq \stepsize_k$ and so the gap inequality~\eqref{eqn:gk_inequality} appearing in the proof of Theorem~\ref{thm:primalDualGreedy1Regime} is valid for this $\stepsize_k$ (because we are considering the line-search variant of Algorithm~\ref{alg:FW_product_again}, we are free to choose any $\stepsize \in [0,1]$ in Lemma~\ref{lem:stepProduct}). This means that the master inequality~\eqref{eqn:convex_gap_combo} is also valid here with $C = \mapprox^{-1}\CfTotal(1+\delta)$.

We consider the weights which appear in the definition of the $0.5$-suffix average of iterates $\bar{\x}_{0.5}^{(K)}$ given in~(\ref{eqn:tavg}), i.e. the average of the iterates $\x^{(k)}$ from $k = K_s := \ceil{0.5 K}$ to $k=K$. We thus have $\rho_k = 1/S_K$ for $K_s \leq k \leq K$ and $\rho_k = 0$ otherwise, where $S_K = K - \ceil{0.5 K} + 1$. Notice that $K_s \geq k_0$ by assumption.

With these choices of $\rho_k$ and $\stepsize_k$, %
the master inequality~\eqref{eqn:convex_gap_combo} becomes
\begin{align} 
	\sum_{k=0}^K \rho_k g_k &\leq 
	\frac{n}{\mapprox S_K} \left[\frac{h_{K_s}}{\stepsize_{K_s}} + \sum_{k=K_s}^{K-1} h_{k+1}  
		  \left(\frac{1}{\stepsize_{k+1}} -  \frac{1}{\stepsize_k} \right) \right] 
		  + \sum_{k=K_s}^K \stepsize_k \frac{C}{2 S_K} \nonumber \\ 
	&\leq \frac{n}{\mapprox S_K} \left[C + \sum_{k=K_s}^{K-1} \frac{2nC}{\mapprox (k+1-k_0) + 2n} (\frac{\mapprox}{2n} )  
			   \right] 
			  + \sum_{k=K_s}^K \frac{2n}{\mapprox (k-k_0) + 2n} \frac{C}{2 S_K} \nonumber \\
	&= \frac{nC}{\mapprox S_K} \left[ 1 + \sum_{k=K_s}^{K-1} \frac{1}{k+1-k_0 + 2n/\mapprox} + \sum_{k=K_s}^K \frac{1}{k-k_0 + 2n/\mapprox} \right] \nonumber \\
	&\leq \frac{nC}{\mapprox S_K} \left[ 1 + 2 \sum_{k=K_s}^{K} \frac{1}{k-k_0 + 2n/\mapprox}\right]  \nonumber \\
	&\leq \frac{2nC}{\mapprox (K+2)} \left[ 1 + 2 \sum_{k=K_s}^{K} \frac{1}{k-k_0 + 2n/\mapprox}\right] , \label{eqn:tail_avg_master}
\end{align}
where in the second line we used the faster convergence rate $h_k \leq \frac{2nC}{\mapprox (k-k_0) + 2n}$ from Theorem~\ref{thm:fast_warmup}, given that $K_s \geq k_0$. In the last line, we used $S_K \leq 0.5K+1$. The rest of the proof simply amounts to get an upper bound of $\beta=3$ on the term between brackets in~\eqref{eqn:tail_avg_master}, thus concluding that $\sum_{k=0}^K \rho_k g_k \leq \beta \frac{2nC}{\mapprox(K+2)}$. Then following a similar argument as in Theorem~\ref{thm:primalDualGreedy1Regime}, this will imply that there exists some $g_{\hat k}$ similarly upper bounded (the existence part of the theorem); and that if $g$ is convex, we have that $\E\big[g(\bar{\x}_{0.5}^{(K)})\big]$ is also similarly upper bounded.

We can upper bound the summand term in~\eqref{eqn:tail_avg_master} by using the fact that for any non-negative decreasing integrable function~$f$, we  have $\sum_{k = K_{s}}^K f(k) \leq \int_{ K_{s}-1}^{K} f(t) dt$. Let $a_n := k_0-2n/\mapprox$. Using $f(k) := \frac{1}{k-a_n}$, we have that
\begin{align*}
\sum_{k = K_{s}}^K \frac{1}{k-a_n} \leq& \int_{ K_{s}-1}^{K} \frac{1}{t-a_n} dt = \big[\log(t-a_n) \big]_{t=K_{s}-1}^{t=K} \\
 =&  \log \frac{K -a_n}{K_s-1-a_n} \leq \log \frac{K - a_n}{0.5 K - 1 - a_n} =: b(K),
\end{align*}
where we used $K_s \geq 0.5 K$. We want to show that $b(K) \leq 1$ for $K \geq 5 k_0$ to conclude that $\beta = 3$ works as a bound in~\eqref{eqn:tail_avg_master} and thus completing the proof. By looking at the sign of the derivative of $b(K)$, we can see that it is an increasing function of $K$ if $a_n \leq -2$ i.e. if $2n/\mapprox \geq k_0 + 2$ (which is always the case if $k_0 = 0$ as $n \geq 1$), and a strictly decreasing function of $K$ otherwise. In the case where $b(K)$ is increasing, we have $b(K) \leq \lim_{K \mapsto \infty} b(K) = \log(2) < 1$. In the case where $b(K)$ is decreasing, we upper bound it by letting $K$ take its minimal value from the theorem, namely $K \geq 5 k_0$. From the definition of $a_n$, we then get that $
b(5 k_0) = \log \frac{ 4 k_0 + 2n/\mapprox}{ 1.5 k_0 - 1 + 2n / \mapprox}
$, which is an increasing function of $k_0$ as long as $2n/\mapprox \geq 2$ (which is indeed always the case). So letting $k_0\rightarrow \infty$, we get that $b(5 k_0) \leq \log(4/1.5) \approx 0.98 < 1$, thus completing the proof.

We finally note that statement for $\E\big[g(\bar{\x}_{0.5}^{(K)})\big]$ in Theorem~\ref{thm:primalDualGreedy1Regime} can be proven using the same argument as above, but with $k_0 = 0$ and $C = \mapprox^{-1}\CfTotal(1+\delta) + h_0$ and using the original primal convergence bound on $h_k$ in Theorem~\ref{thm:primalGreedyProduct} instead. This will work for both predefined step-size or the line search variants --- the only place where we used the line-search in the above proof was to use the different primal convergence result as well as shifted-by-$k_0$ step-sizes $\stepsize_k$ (which reduce to the standard step-sizes when $k_0$ = 0).
\end{proof}

We note that we cannot fully get rid of the dependence on $h_0$ for the convergence rate of the expected duality gap of the weighted averaged scheme because we average over $k < k_0$, a regime where the primal error depends on $h_0$. With a more refined analysis for the weighted average with line-search scheme though, we note that one can replace the $h_0 \frac{n}{K}$ dependence in the bound with a $h_0 (\frac{n}{K})^2$ one, i.e. a quadratic speed-up to forget the initial conditions when line-search is used.

We also note that a bound of $O(1/K)$ can be derived similarly for $\E\big[g(\bar{\x}_{\mu}^{(K)})\big]$ for $0 < \mu < 1$ --- namely using the $C$ as in Theorem~\ref{thm:primalDualGreedy1Regime} and $\beta = \beta_\mu := (1-\mu)^{-1} (0.5 - \log \mu)$ (notice that $\beta_\mu = \infty$ if $\mu = 0$ or $\mu = 1$). This result is similar as the one for the stochastic subgradient method and where the $O(1/K)$ rate was derived by~\citet{Rakhlin2012} for the $(1-\mu)$-suffix averaging scheme --- this provided a motivation for the scheme as the authors proved that the full averaging scheme has $\Omega((\log K)/K)$ rate in the worst case. If we use $\mu=0$ (i.e. we average from the beginning), then the sum in~\eqref{eqn:tail_avg_master} becomes $O(\log K)$, yielding $O((\log K)/K)$ for the expected gap. 
\section{Equivalence of the `Linearization'-Duality Gap to a Special Case of Fenchel Duality} \label{sec:Fenchel_gap_equivalence}
For our used constrained optimization framework, the notion of the simple duality gap was crucial.
Consider a general constrained optimization problem
\begin{equation}\label{eq:optConstr}
\min_{\x\in\domain} f(\x) \ ,
\end{equation}
where the domain (or feasible set) $\domain\subseteq\X$ is an arbitrary compact subset of a Euclidean space~$\X$. We assume that the objective function $f$ is convex, but not necessarily differentiable.

In this case, the general `linearization' duality gap (\ref{eq:duality_gap})
as proposed by \citep{Jaggi:2013wg} is given by
\begin{equation}\label{eq:lin_gap_fenchel}
g(\x;d_\x) = \id_\domain^*(-d_\x) + \langle \x, d_\x \rangle \ .
\end{equation}
Here $d_\x$ is an arbitrary subgradient to $f$ at the candidate position $\x$,
and $\id_\domain^*(\y) := \sup_{\sv\in\domain}\, \langle \sv,\y\rangle$ is the \emph{support function} of the set $\domain$.

Convexity of $f$ implies that the linearization $f(\x) + \big\langle \sv - \x, d_\x \big\rangle$ always lies below the graph of the function $f$, as illustrated by the figure in Section \ref{sec:FW}. This immediately gives the crucial property of the duality gap (\ref{eq:lin_gap_fenchel}), as being a \emph{certificate} for the current approximation quality,
i.e. upper-bounding the (unknown) error
$g(\x)\ge f(\x)-f(\x^*)$, where $\x^*$ is some optimal solution.

Note that for differentiable functions $f$, the gradient is the unique subgradient at $\x$, therefore the duality gap equals $g(\x) := g(\x;\nabla f(\x))$ as we defined in (\ref{eq:duality_gap}).

\paragraph{Fenchel Duality.}
Here we will additionally explain how the duality gap (\ref{eq:lin_gap_fenchel}) can also be interpreted as a special case of standard Fenchel convex duality.

We consider the equivalent formulation of our constrained problem (\ref{eq:optConstr}), given by
\[
\min_{\substack{\x\in\X}} \ f(\x) + \id_{\domain}(\x) \  .
\]
Here the \emph{set indicator function} $\id_\domain$ of a subset $\domain\subseteq\X$ is defined as $\id_\domain(\x):=0$ for $\x\in \domain$ and $\id_\domain(\x):=+\infty$ for $\x\notin \domain$.

The \emph{Fenchel conjugate} function $f^*$ of a function $f$ is given by $f^*(\y) := \sup_{\x\in\X} \langle \x, \y \rangle - f(\x)$.

For example, observe that the Fenchel conjugate of a set indicator function $\id_\domain(.)$ is given by its support function~$\id_\domain^*(.)$.

From the above definition of the conjugate, the \emph{Fenchel-Young inequality} $f(\x)+f^*(\y)\geq\langle \x,\y\rangle$ $\forall \x,\y\in\X$ follows directly.

Now we consider the \emph{Fenchel dual problem} of minimizing $p(\x) := f(\x) + \id_{\domain}(\x)$, which is defined as to maximize $d(\y) := -f^*(\y) - \id_\domain^*(-\y)$.
By the Fenchel-Young inequality, and assuming that $\x\in\domain$, we have that $\forall \y\in\X$,
\begin{eqnarray*}
p(\x) - d(\y) &=& f(\x) - (-f^*(\y) - \id_\domain^*(-\y))\\
 &\geq& \langle \x,\y\rangle + \id_\domain^*(-\y)\\
 &=& g(\x;\y) \ .
\end{eqnarray*}
Furthermore, this inequality becomes an equality if and only if $\y$ is chosen as a subgradient to~$f$ at $\x$, that is if $\y:=-d_\x$. The last fact follows from the known equivalent characterization of the subdifferential in terms of the Fenchel conjugate: $\partial f(\x) := \SetOf{\y\in\X}{f(\x)+f^*(\y)=\langle \x,\y\rangle}$.
For a more detailed explanation of Fenchel duality, we refer the reader to the
standard literature, e.g. \citepsup[Theorem 3.3.5]{Borwein:2006ts}.

To summarize, we have obtained that the simpler `linearization' duality gap $g(\x;d_\x)$ as given in \eqref{eq:lin_gap_fenchel} is indeed the difference of the current objective to the Fenchel dual problem, when being restricted to the particular choice of the dual variable $\y$ being a subgradient at the current position~$\x$.

\section{Derivation of the n-Slack Structural SVM Dual}\label{sec:app_duals}
\begin{proof}[Proof of the dual of the $n$-Slack-Formulation]
See also~\citet{Collins2008}.
For a self-contained explanation of Lagrange duality we refer the reader to
\citetsup[Section 5]{Boyd:2004uz}.
The Lagrangian of (\ref{eq:svmstruct_nslack_primal}) is
\[
L(\weightv,\bm{\xi},\dualvarv)
  = \frac\lambda2 \langle\weightv,\weightv\rangle+\frac1n\sum_{i=1}^n \xi_i
  + \sum_{i\in[n],\,\outputvarv \in \outputdomain_i}
  \frac1n\dualvar_i(\outputvarv)
  \left( -\xi_i +
     \langle \weightv, -\featuremapdiffv_i(\outputvarv) \rangle
     + \errorterm_i(\outputvarv)
  \right) ,
\]
where $\dualvarv = (\dualvarv_1,\dots,\dualvarv_n) \in\R^{|\outputdomain_1|}\times\dots\times\R^{|\outputdomain_n|} = \R^{m}$ are the corresponding (non-negative) Lagrange multipliers.
Here we have re-scaled the multipliers (dual variables) by a constant of $\frac1n$, corresponding to multiplying the corresponding original primal constraint by $\frac1n$ on both sides, which does not change the optimization problem. %

Since the objective as well as the constraints are continuously differentiable with respect to $(\weightv,\xi)$, the Lagrangian $L$ will attain its finite minimum over $\dualvarv$ when $\nabla_{\!(\weightv,\xi)} L(\weightv,\xi,\dualvarv) = 0$. Making this saddle-point condition explicit results in a simplified Lagrange dual problem, which is also known as the Wolfe dual.
In our case, this condition from differentiating w.r.t. $\weightv$ is
\begin{equation}\label{eq:saddle_n_slack_again}
\lambda \weightv  = \sum_{i\in[n],\,\outputvarv \in \outputdomain_i}
  \frac1n\dualvar_i(\outputvarv)
    \featuremapdiffv_i(\outputvarv) \ .
\end{equation}
And differentiating with respect to $\xi_i$ and setting the derivatives to zero gives\footnote{Note that because the Lagrangian is linear in $\xi_i$, if this condition is not satisfied, the minimization of the Lagrangian in $\xi_i$ yield $-\infty$ and so these points can be excluded.}
\[
\sum_{\outputvarv \in \outputdomain_i}
  \dualvar_i(\outputvarv) = 1 ~~~\forall i\in[n] \ .
\]
Plugging this condition and the expression (\ref{eq:saddle_n_slack_again}) for $\weightv$ back into the Lagrangian, we obtain the Lagrange dual problem
\begin{align}
    \max_{\dualvarv} \quad &
    -\frac{\lambda}{2}
    \norm{
    \sum_{i\in[n],\,\outputvarv \in \outputdomain_i}
  \dualvar_i(\outputvarv)
    \frac{\featuremapdiffv_i(\outputvarv)}{\lambda n}
    }^2
    + \sum_{i\in[n],\,\outputvarv \in \outputdomain_i} \dualvar_i(\outputvarv)
       \frac{\errorterm_i(\outputvarv)}{n}
    \notag\\
    \text{s.t.} \quad &
      \sum_{\outputvarv \in \outputdomain}  \dualvar_i(\outputvarv) = 1 ~~~\forall i\in[n],
    \notag\\
     &
      \text{ and }\  \dualvar_i(\outputvarv)  \ge 0 ~~~\forall i\in[n],\,\forall\outputvarv \in \outputdomain_i \ ,
      \notag
\end{align}
which is exactly the negative of the quadratic program claimed in (\ref{eq:svmstruct_nslack_dual}).
\end{proof}

\section{Additional Experiments}
\label{sec:additional_experiments}

Complementing the results presented in Figure~\ref{fig:results} in Section~\ref{sec:experiments} of the main paper, here we provide additional experimental results as well as give more information about the experimental setup used.

For the Frank-Wolfe methods, Figure~\ref{fig:fw_stepsize_results} presents results on OCR comparing setting the step-size by line-search against the simpler predefined step-size scheme of $\stepsize_k = 2n/(k+2n)$. There, \emph{BCFW} with predefined step-sizes does similarly as \emph{SSG}, indicating that most of the improvement of \emph{BCFW} with line-search over \emph{SSG} is coming from the optimal step-size choice (and not from the Frank-Wolfe formulation on the dual). We also see that \emph{BCFW} with predefined step-sizes can even do worse than batch Frank-Wolfe with line-search in the early iterations for small values of $\lambda$.
 
Figure~\ref{fig:ocr2_results} and Figure~\ref{fig:conll_results} show additional results of the stochastic solvers for several values of $\lambda$ on the OCR and CoNLL datasets. Here we also include the (uniformly) averaged stochastic subgradient method (\emph{SSG-avg}), which starts averaging at the beginning; as well as the $0.5$-suffix averaging versions of both \emph{SSG} and \emph{BCFW} (\emph{SSG-tavg} and \emph{BCFW-tavg} respectively), implemented using the `doubling trick'  as described just after Equation~\eqref{eqn:tavg} in Appendix~\ref{sec:app_convergence_proof}. The `doubling trick' uniformly averages all iterates since the last iteration which was a power of~2, and was described by~\citet{Rakhlin2012}, with experiments for \emph{SSG} in~\citet{LacosteJulien:2012uo}. In our experiments, \emph{BCFW-tavg} sometimes slightly outperforms the weighted average scheme \emph{BCFW-wavg}, but its performance fluctuates more widely, which is why we recommend the \emph{BCFW-wavg}, as mentioned in the main text. 
In our experiments, the objective value of \emph{SSG-avg} is always worse than the other stochastic methods (apart \emph{online-EG}), which is why it was excluded from the main text. \emph{Online-EG} performed substantially worse than the other stochastic solvers for the OCR dataset, and is therefore not included in the comparison for the other datasets.\footnote{The worse performance of the online exponentiated gradient method could be explained by the fact that it uses a log-parameterization of the dual variables and so its iterates are forced to be in the \emph{interior} of the probability simplex, whereas we know that the optimal solution for the structural SVM objective lies at the \emph{boundary} of the domain and thus these parameters need to go to infinity.}

Finally, Figure~\ref{fig:matching_results} presents additional results for the matching application from~\citet{Taskar06extrag}.

\begin{figure}[htb]
    \begin{subfigure}[t]{0.32\linewidth}
        \centering
        \def\xlabel{effective passes}
        \def\xmin{1}
        \def\xmax{149}
        \def\ymin{1e-2}
        \def\ymax{1e+3}
        \def\xmode{normal}
        \def\showlegend{0}
        \def\legendpos{north east}
        \def\experimentprefix{include/data/dataset=ocr2_lambda=0.010000}
        \small
\begin{tikzpicture}[scale=0.63]

\begin{axis}[
xlabel=\xlabel,
ylabel=primal suboptimality for problem \eqref{eq:svmstruct_nslack_primal},
xmin=\xmin,
xmax=\xmax,
ymin=\ymin,
ymax=\ymax,
enlargelimits=false, area style,
ymode=log,
xmode=\xmode,
line legend,
legend pos=\legendpos,
]

% ---------------------------------------------------------------------------
% BCFW: Stochastic Frank-Wolfe with line-search

% confidence interval
\addplot[fill=blue,draw=none,forget plot,opacity=0.2] table[x index=0,y
index=1, header=true, col sep=comma]
{\experimentprefix/product-LS_confidence.txt};

% "median" curve itself
\addplot [
color=blue,
solid,
style=thick,
mark=triangle,
mark repeat=20,
]
table[x index=0,y index=2, header=true, col sep=comma]
{\experimentprefix/product-LS.txt};
\ifnum \showlegend=1
{
\addlegendentry{BCFW (line-search)}
}
\fi
% ---------------------------------------------------------------------------
% BCFW: Stochastic Frank-Wolfe with predefined stepsize

% confidence interval
\addplot[fill=black,draw=none,forget plot,opacity=0.2] table[x index=0,y
index=1, header=true, col sep=comma]
{\experimentprefix/product_confidence.txt};

% "median" curve itself
\addplot [
color=black,
densely dotted,
style=thick,
mark=pentagon*,
mark repeat=20,
]
table[x index=0,y index=2, header=true, col sep=comma]
{\experimentprefix/product.txt};
\ifnum \showlegend=1
{
\addlegendentry{BCFW (predef. $\gamma$)}
}
\fi
% ---------------------------------------------------------------------------

% ---------------------------------------------------------------------------
% batch Frank-Wolfe with line search
% confidence interval
\addplot[fill=purple,draw=none,forget plot,opacity=0.2] table[x index=0,y
index=1, header=true, col sep=comma]
{\experimentprefix/frankWolfe-LS_confidence.txt};

\addplot [
color=orange,
densely dashed,
style=thick,
mark=triangle,
mark repeat=20,
mark options=solid
]
table[x index=0,y index=1, header=true, col sep=comma]
%{\experimentprefix/frankWolfe-LS_lowerenvelope.txt};
{\experimentprefix/frankWolfe-LS.txt};
\ifnum \showlegend=1
{
\addlegendentry{FW (line-search)}
}
\fi
% ---------------------------------------------------------------------------
% batch Frank-Wolfe with predefined stepsize
% confidence interval
\addplot[fill=brown,draw=none,forget plot,opacity=0.2] table[x index=0,y
index=1, header=true, col sep=comma]
{\experimentprefix/frankWolfe_confidence.txt};

\addplot [
color=violet,
densely dotted,
style=thick,
mark=pentagon*,
mark repeat=20,
mark options=solid
]
table[x index=0,y index=1, header=true, col sep=comma]
%{\experimentprefix/frankWolfe_lowerenvelope.txt}; %does not exist yet
{\experimentprefix/frankWolfe.txt};
\ifnum \showlegend=1
{
\addlegendentry{FW (predef. $\gamma$)}
}
\fi
% ---------------------------------------------------------------------------

% ---------------------------------------------------------------------------
% SSG (pegasos)

% confidence interval
\addplot[fill=green,draw=none,forget plot,opacity=0.2] table[x index=0,y
index=1, header=true, col sep=comma]
{\experimentprefix/pegasos_confidence.txt};

\addplot [
color=green,
solid,
style=thick,
mark=o,
mark repeat=20,
]
table[x index=0,y index=2, header=true, col sep=comma]
{\experimentprefix/pegasos.txt};
\ifnum \showlegend=1
{
\addlegendentry{SSG}
}
\fi
% ---------------------------------------------------------------------------

% ---------------------------------------------------------------------------
% cutting planes 
\ifdefined \showcuttingplanes
{

\addplot [
color=black,
densely dashed,
style=thick,
mark=square,
mark repeat=20,
mark options=solid
]
table[x index=0,y index=1, header=false, col sep=comma]
{\experimentprefix/svmStruct_lowerenvelope.txt};
\ifnum \showlegend=1
{
\addlegendentry{cutting plane}
}
\fi
}
\fi
% ---------------------------------------------------------------------------

\end{axis}

\end{tikzpicture}
\normalsize
        \caption{$\lambda=0.01$.}
    \end{subfigure}
    \begin{subfigure}[t]{0.32\linewidth}
        \centering
        \def\xlabel{effective passes}
        \def\xmin{1}
        \def\xmax{149}
        \def\ymin{1e-2}
        \def\ymax{1e+3}
        \def\xmode{normal}
        \def\showlegend{0}
        \def\legendpos{north east}
        \def\experimentprefix{include/data/dataset=ocr2_lambda=0.001000}
        \small
\begin{tikzpicture}[scale=0.63]

\begin{axis}[
xlabel=\xlabel,
ylabel=primal suboptimality for problem \eqref{eq:svmstruct_nslack_primal},
xmin=\xmin,
xmax=\xmax,
ymin=\ymin,
ymax=\ymax,
enlargelimits=false, area style,
ymode=log,
xmode=\xmode,
line legend,
legend pos=\legendpos,
]

% ---------------------------------------------------------------------------
% BCFW: Stochastic Frank-Wolfe with line-search

% confidence interval
\addplot[fill=blue,draw=none,forget plot,opacity=0.2] table[x index=0,y
index=1, header=true, col sep=comma]
{\experimentprefix/product-LS_confidence.txt};

% "median" curve itself
\addplot [
color=blue,
solid,
style=thick,
mark=triangle,
mark repeat=20,
]
table[x index=0,y index=2, header=true, col sep=comma]
{\experimentprefix/product-LS.txt};
\ifnum \showlegend=1
{
\addlegendentry{BCFW (line-search)}
}
\fi
% ---------------------------------------------------------------------------
% BCFW: Stochastic Frank-Wolfe with predefined stepsize

% confidence interval
\addplot[fill=black,draw=none,forget plot,opacity=0.2] table[x index=0,y
index=1, header=true, col sep=comma]
{\experimentprefix/product_confidence.txt};

% "median" curve itself
\addplot [
color=black,
densely dotted,
style=thick,
mark=pentagon*,
mark repeat=20,
]
table[x index=0,y index=2, header=true, col sep=comma]
{\experimentprefix/product.txt};
\ifnum \showlegend=1
{
\addlegendentry{BCFW (predef. $\gamma$)}
}
\fi
% ---------------------------------------------------------------------------

% ---------------------------------------------------------------------------
% batch Frank-Wolfe with line search
% confidence interval
\addplot[fill=purple,draw=none,forget plot,opacity=0.2] table[x index=0,y
index=1, header=true, col sep=comma]
{\experimentprefix/frankWolfe-LS_confidence.txt};

\addplot [
color=orange,
densely dashed,
style=thick,
mark=triangle,
mark repeat=20,
mark options=solid
]
table[x index=0,y index=1, header=true, col sep=comma]
%{\experimentprefix/frankWolfe-LS_lowerenvelope.txt};
{\experimentprefix/frankWolfe-LS.txt};
\ifnum \showlegend=1
{
\addlegendentry{FW (line-search)}
}
\fi
% ---------------------------------------------------------------------------
% batch Frank-Wolfe with predefined stepsize
% confidence interval
\addplot[fill=brown,draw=none,forget plot,opacity=0.2] table[x index=0,y
index=1, header=true, col sep=comma]
{\experimentprefix/frankWolfe_confidence.txt};

\addplot [
color=violet,
densely dotted,
style=thick,
mark=pentagon*,
mark repeat=20,
mark options=solid
]
table[x index=0,y index=1, header=true, col sep=comma]
%{\experimentprefix/frankWolfe_lowerenvelope.txt}; %does not exist yet
{\experimentprefix/frankWolfe.txt};
\ifnum \showlegend=1
{
\addlegendentry{FW (predef. $\gamma$)}
}
\fi
% ---------------------------------------------------------------------------

% ---------------------------------------------------------------------------
% SSG (pegasos)

% confidence interval
\addplot[fill=green,draw=none,forget plot,opacity=0.2] table[x index=0,y
index=1, header=true, col sep=comma]
{\experimentprefix/pegasos_confidence.txt};

\addplot [
color=green,
solid,
style=thick,
mark=o,
mark repeat=20,
]
table[x index=0,y index=2, header=true, col sep=comma]
{\experimentprefix/pegasos.txt};
\ifnum \showlegend=1
{
\addlegendentry{SSG}
}
\fi
% ---------------------------------------------------------------------------

% ---------------------------------------------------------------------------
% cutting planes 
\ifdefined \showcuttingplanes
{

\addplot [
color=black,
densely dashed,
style=thick,
mark=square,
mark repeat=20,
mark options=solid
]
table[x index=0,y index=1, header=false, col sep=comma]
{\experimentprefix/svmStruct_lowerenvelope.txt};
\ifnum \showlegend=1
{
\addlegendentry{cutting plane}
}
\fi
}
\fi
% ---------------------------------------------------------------------------

\end{axis}

\end{tikzpicture}
\normalsize
        \caption{$\lambda=0.001$.}
    \end{subfigure}
    \begin{subfigure}[t]{0.32\linewidth}
        \centering
        \def\xlabel{effective passes}
        \def\xmin{1}
        \def\xmax{149}
        \def\ymin{1e-2}
        \def\ymax{1e+3}
        \def\xmode{normal}
        \def\showlegend{1}
        \def\legendpos{north east}
        \def\experimentprefix{include/data/dataset=ocr2_lambda=0.000160}
        \small
\begin{tikzpicture}[scale=0.63]

\begin{axis}[
xlabel=\xlabel,
ylabel=primal suboptimality for problem \eqref{eq:svmstruct_nslack_primal},
xmin=\xmin,
xmax=\xmax,
ymin=\ymin,
ymax=\ymax,
enlargelimits=false, area style,
ymode=log,
xmode=\xmode,
line legend,
legend pos=\legendpos,
]

% ---------------------------------------------------------------------------
% BCFW: Stochastic Frank-Wolfe with line-search

% confidence interval
\addplot[fill=blue,draw=none,forget plot,opacity=0.2] table[x index=0,y
index=1, header=true, col sep=comma]
{\experimentprefix/product-LS_confidence.txt};

% "median" curve itself
\addplot [
color=blue,
solid,
style=thick,
mark=triangle,
mark repeat=20,
]
table[x index=0,y index=2, header=true, col sep=comma]
{\experimentprefix/product-LS.txt};
\ifnum \showlegend=1
{
\addlegendentry{BCFW (line-search)}
}
\fi
% ---------------------------------------------------------------------------
% BCFW: Stochastic Frank-Wolfe with predefined stepsize

% confidence interval
\addplot[fill=black,draw=none,forget plot,opacity=0.2] table[x index=0,y
index=1, header=true, col sep=comma]
{\experimentprefix/product_confidence.txt};

% "median" curve itself
\addplot [
color=black,
densely dotted,
style=thick,
mark=pentagon*,
mark repeat=20,
]
table[x index=0,y index=2, header=true, col sep=comma]
{\experimentprefix/product.txt};
\ifnum \showlegend=1
{
\addlegendentry{BCFW (predef. $\gamma$)}
}
\fi
% ---------------------------------------------------------------------------

% ---------------------------------------------------------------------------
% batch Frank-Wolfe with line search
% confidence interval
\addplot[fill=purple,draw=none,forget plot,opacity=0.2] table[x index=0,y
index=1, header=true, col sep=comma]
{\experimentprefix/frankWolfe-LS_confidence.txt};

\addplot [
color=orange,
densely dashed,
style=thick,
mark=triangle,
mark repeat=20,
mark options=solid
]
table[x index=0,y index=1, header=true, col sep=comma]
%{\experimentprefix/frankWolfe-LS_lowerenvelope.txt};
{\experimentprefix/frankWolfe-LS.txt};
\ifnum \showlegend=1
{
\addlegendentry{FW (line-search)}
}
\fi
% ---------------------------------------------------------------------------
% batch Frank-Wolfe with predefined stepsize
% confidence interval
\addplot[fill=brown,draw=none,forget plot,opacity=0.2] table[x index=0,y
index=1, header=true, col sep=comma]
{\experimentprefix/frankWolfe_confidence.txt};

\addplot [
color=violet,
densely dotted,
style=thick,
mark=pentagon*,
mark repeat=20,
mark options=solid
]
table[x index=0,y index=1, header=true, col sep=comma]
%{\experimentprefix/frankWolfe_lowerenvelope.txt}; %does not exist yet
{\experimentprefix/frankWolfe.txt};
\ifnum \showlegend=1
{
\addlegendentry{FW (predef. $\gamma$)}
}
\fi
% ---------------------------------------------------------------------------

% ---------------------------------------------------------------------------
% SSG (pegasos)

% confidence interval
\addplot[fill=green,draw=none,forget plot,opacity=0.2] table[x index=0,y
index=1, header=true, col sep=comma]
{\experimentprefix/pegasos_confidence.txt};

\addplot [
color=green,
solid,
style=thick,
mark=o,
mark repeat=20,
]
table[x index=0,y index=2, header=true, col sep=comma]
{\experimentprefix/pegasos.txt};
\ifnum \showlegend=1
{
\addlegendentry{SSG}
}
\fi
% ---------------------------------------------------------------------------

% ---------------------------------------------------------------------------
% cutting planes 
\ifdefined \showcuttingplanes
{

\addplot [
color=black,
densely dashed,
style=thick,
mark=square,
mark repeat=20,
mark options=solid
]
table[x index=0,y index=1, header=false, col sep=comma]
{\experimentprefix/svmStruct_lowerenvelope.txt};
\ifnum \showlegend=1
{
\addlegendentry{cutting plane}
}
\fi
}
\fi
% ---------------------------------------------------------------------------

\end{axis}

\end{tikzpicture}
\normalsize
        \caption{$\lambda=1/n=0.00016$.}
    \end{subfigure}
    \caption{Convergence of the Frank-Wolfe algorithms on the OCR dataset, depending on the choice of the step-size. We compare the line-search variants as used in Algorithms \ref{alg:FW_SVM} and \ref{alg:FW_product_SVM}, versus the simpler predefined step-sizes $\stepsize := \frac2{k+2}$ (and $\stepsize := \frac{2n}{k+2n}$ in the block-coordinate case respectively).
    See also the original optimization Algorithms \ref{alg:FW} and \ref{alg:FW_product}.
    }
    \label{fig:fw_stepsize_results}
\end{figure}
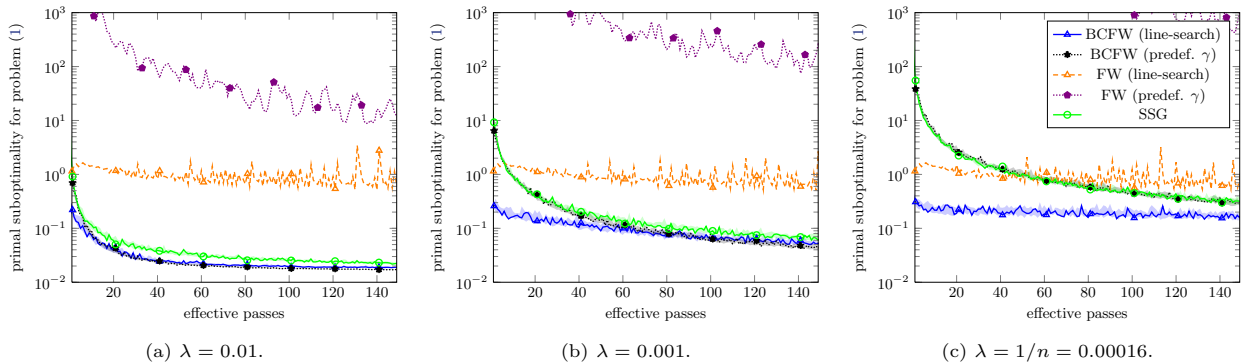

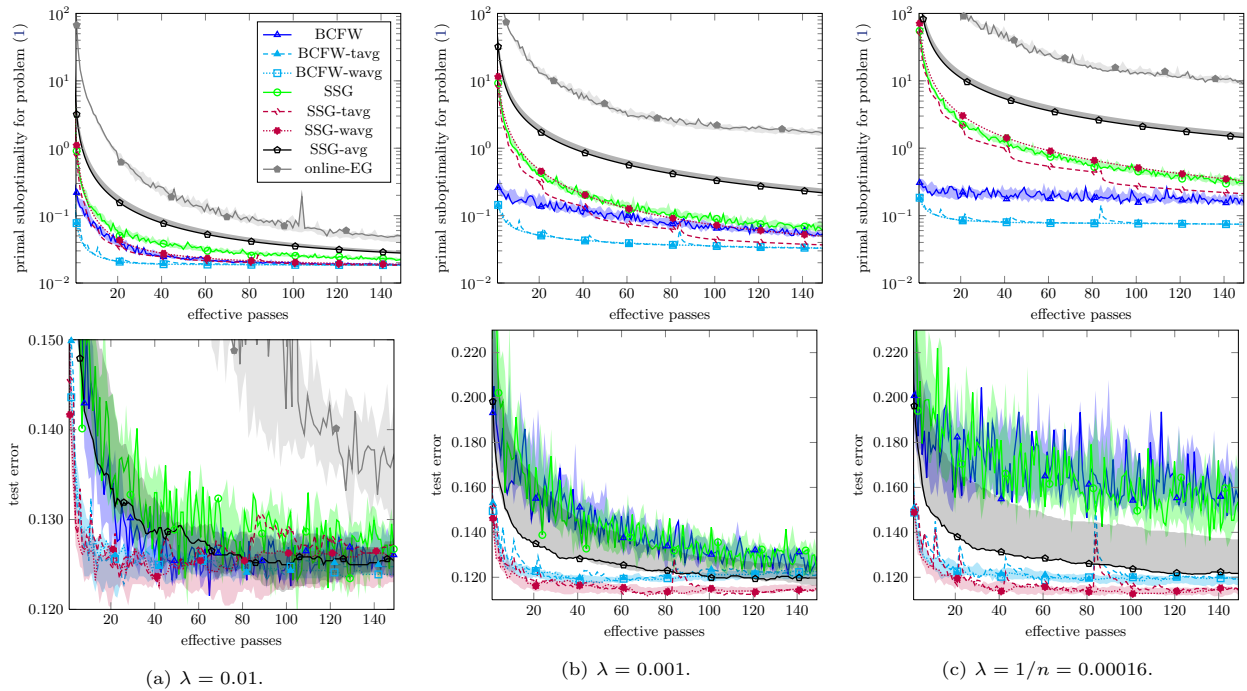
\begin{figure}[htb]
    \begin{subfigure}[t]{0.32\linewidth}
        \centering
        \def\xlabel{effective passes}
        \def\xmin{1}
        \def\xmax{149}
        \def\ymin{0.01}
        \def\ymax{100}
        \def\xmode{normal}
        \def\showlegend{1}
        \def\showeg{1}
        \def\showtavg{1}
        \def\legendpos{north east}
        \def\experimentprefix{include/data/dataset=ocr2_lambda=0.010000}
        \small
\begin{tikzpicture}[scale=0.63]

\begin{axis}[
xlabel=\xlabel,
ylabel=primal suboptimality for problem \eqref{eq:svmstruct_nslack_primal},
xmin=\xmin,
xmax=\xmax,
ymin=\ymin,
ymax=\ymax,
enlargelimits=false, area style,
ymode=log,
xmode=\xmode,
line legend,
legend pos=\legendpos,
]

% ---------------------------------------------------------------------------
% Stochastic Frank-Wolfe
\ifdefined \showsfwwithoutline
{

% confidence interval
\addplot[fill=red,draw=none,forget plot,opacity=0.3] table[x index=0,y
index=1, header=true, col sep=comma]
{\experimentprefix/product_confidence.txt};

% "median" curve itself
\addplot [
color=red,
solid,
style=thick,
mark=square,
mark repeat=20,
]
table[x index=0,y index=2, header=true, col sep=comma]
{\experimentprefix/product.txt};
\ifnum \showlegend=1
{
\addlegendentry{BCFW-fix}
}
\fi

}
\fi
% ---------------------------------------------------------------------------

% ---------------------------------------------------------------------------
% Stochastic Frank-Wolfe with line-search

% confidence interval
\addplot[fill=blue,draw=none,forget plot,opacity=0.3] table[x index=0,y
index=1, header=true, col sep=comma]
{\experimentprefix/product-LS_confidence.txt};

% "median" curve itself
\addplot [
color=blue,
solid,
style=thick,
mark=triangle,
mark repeat=20,
]
table[x index=0,y index=2, header=true, col sep=comma]
{\experimentprefix/product-LS.txt};
\ifnum \showlegend=1
{
\addlegendentry{BCFW}
}
\fi
% ---------------------------------------------------------------------------

% ---------------------------------------------------------------------------
% BCFW with tail averaging / suffix averaging (BCFW-tavg)
\ifdefined \showtavg
{
\addplot [
color=cyan,
densely dashed,
style=thick,
mark=triangle*,
mark repeat=20,
mark options=solid
]
table[x index=0,y index=2, header=true, col sep=comma]
{\experimentprefix/product-LS-opt.txt};
\ifnum \showlegend=1
{
\addlegendentry{BCFW-tavg}
}
\fi
}
\fi
% ---------------------------------------------------------------------------
% BCFW with weighted averaging (BCFW-wavg)

% confidence interval
\addplot[fill=cyan,draw=none,forget plot,opacity=0.3] table[x index=0,y
index=1, header=true, col sep=comma]
{\experimentprefix/product-LS-wavg_confidence.txt};

% "median" curve itself
\addplot [
color=cyan,
densely dotted,
style=thick,
mark=square,
mark repeat=20,
mark options=solid
]
table[x index=0,y index=2, header=true, col sep=comma]
{\experimentprefix/product-LS-wavg.txt};
\ifnum \showlegend=1
{
\addlegendentry{BCFW-wavg}
}
\fi
% ---------------------------------------------------------------------------

% ---------------------------------------------------------------------------
% SSG / pegasos

% confidence interval
\addplot[fill=green,draw=none,forget plot,opacity=0.3] table[x index=0,y
index=1, header=true, col sep=comma]
{\experimentprefix/pegasos_confidence.txt};

\addplot [
color=green,
solid,
style=thick,
mark=o,
mark repeat=20,
]
table[x index=0,y index=2, header=true, col sep=comma]
{\experimentprefix/pegasos.txt};
\ifnum \showlegend=1
{
\addlegendentry{SSG}
}
\fi
% ---------------------------------------------------------------------------

% ---------------------------------------------------------------------------
% SSG with tail averaging / suffix averaging
\ifdefined \showtavg
{
%% confidence interval
%\addplot[fill=purple,draw=none,forget plot,opacity=0.3] table[x index=0,y
%index=1, header=true, col sep=comma]
%{\experimentprefix/optimalSG_confidence.txt};

\addplot [
color=purple,
densely dashed,
style=thick,
mark=diamond,
mark repeat=20,
]
table[x index=0,y index=2, header=true, col sep=comma]
{\experimentprefix/optimalSG.txt};
\ifnum \showlegend=1
{
\addlegendentry{SSG-tavg}
}
\fi
}
\fi

% ---------------------------------------------------------------------------
% SSG with weighted averaging

% confidence interval
\addplot[fill=purple,draw=none,forget plot,opacity=0.2] table[x index=0,y
index=1, header=true, col sep=comma]
{\experimentprefix/pegasos-wavg_confidence.txt};

% "median" curve itself
\addplot [
color=purple,
densely dotted,
style=thick,
mark=*,
mark repeat=20,
]
table[x index=0,y index=2, header=true, col sep=comma]
{\experimentprefix/pegasos-wavg.txt};
\ifnum \showlegend=1
{
\addlegendentry{SSG-wavg}
}
\fi
% ---------------------------------------------------------------------------

% ---------------------------------------------------------------------------
% pegasos-avg

% confidence interval
\addplot[fill=black,draw=none,forget plot,opacity=0.3] table[x index=0,y
index=1, header=true, col sep=comma]
{\experimentprefix/pegasos-avg_confidence.txt};

\addplot [
color=black,
solid,
style=thick,
mark=pentagon,
mark repeat=20,
]
table[x index=0,y index=2, header=true, col sep=comma]
{\experimentprefix/pegasos-avg.txt};
\ifnum \showlegend=1
{
\addlegendentry{SSG-avg}
}
\fi
% ---------------------------------------------------------------------------

% ---------------------------------------------------------------------------
% online exponentiated gradient
\ifdefined \showeg
{

% confidence interval
\addplot[fill=gray,draw=none,forget plot,opacity=0.2] table[x index=0,y
index=1, header=true, col sep=comma]
{\experimentprefix/OEG_confidence.txt};

\addplot [
color=gray,
solid,
style=thick,
mark=pentagon*,
mark repeat=20,
]
table[x index=0,y index=2, header=true, col sep=comma]
{\experimentprefix/OEG.txt};
\ifnum \showlegend=1
{
\addlegendentry{online-EG}
}
\fi
}
\fi
% ---------------------------------------------------------------------------

\end{axis}

\end{tikzpicture}
\normalsize
        \def\xlabel{effective passes}
        \def\showlegend{0}
        \def\showtavg{1}
        \def\xmin{1}
        \def\xmax{149}
        \def\ymin{0.12}
        \def\ymax{0.15}
        \def\xmode{normal}
        \def\experimentprefix{include/data/dataset=ocr2_lambda=0.010000}
        \small
\begin{tikzpicture}[scale=0.63]

\pgfplotsset{y tick label style={ 
         scaled ticks=false, 
         /pgf/number format/fixed zerofill, 
         /pgf/number format/fixed, 
         /pgf/number format/precision=3, 
     }
}

\begin{axis}[
xlabel=\xlabel,
ylabel=test error,
xmin=\xmin,
xmax=\xmax,
ymin=\ymin,
ymax=\ymax,
xmode=\xmode,
enlargelimits=false, area style,
line legend,
]

% ---------------------------------------------------------------------------
% BCFW-fix - Stochastic Frank-Wolfe
\ifdefined \showsfwwithoutline
{

% confidence interval
\addplot[fill=red,draw=none,forget plot,opacity=0.3] table[x index=0,y
index=2, header=true, col sep=comma]
{\experimentprefix/product_confidence.txt};

% "median" curve itself
\addplot [
color=red,
solid,
style=thick,
mark=square,
mark repeat=20,
]
table[x index=0,y index=3, header=true, col sep=comma]
{\experimentprefix/product.txt};
\ifnum \showlegend=1
{
\addlegendentry{BCFW-fix}
}
\fi

}
\fi
% ---------------------------------------------------------------------------

% ---------------------------------------------------------------------------
% BCFW - Stochastic Frank-Wolfe with line-search

% confidence interval
\addplot[fill=blue,draw=none,forget plot,opacity=0.3] table[x index=0,y
index=2, header=true, col sep=comma]
{\experimentprefix/product-LS_confidence.txt};

% "median" curve itself
\addplot [
color=blue,
solid,
style=thick,
mark=none,
mark=triangle,
mark repeat=20,
]
table[x index=0,y index=3, header=true, col sep=comma]
{\experimentprefix/product-LS.txt};
\ifnum \showlegend=1
{
\addlegendentry{BCFW}
}
\fi
% ---------------------------------------------------------------------------

% ---------------------------------------------------------------------------
% BCFW with tail averaging / suffix averaging (BCFW-tavg)
\ifdefined \showtavg
{

\addplot [
color=cyan,
densely dashed,
style=thick,
mark=triangle*,
mark repeat=20,
mark options=solid
]
table[x index=0,y index=3, header=true, col sep=comma]
{\experimentprefix/product-LS-opt.txt};
\ifnum \showlegend=1
{
\addlegendentry{BCFW-tavg}
}
\fi
}
\fi
% ---------------------------------------------------------------------------
% BCFW with weighted averaging (BCFW-wavg)

% confidence interval
\addplot[fill=cyan,draw=none,forget plot,opacity=0.3] table[x index=0,y
index=2, header=true, col sep=comma]
{\experimentprefix/product-LS-wavg_confidence.txt};

% "median" curve itself
\addplot [
color=cyan,
densely dotted,
style=thick,
mark=square,
mark repeat=20,
mark options=solid
]
table[x index=0,y index=3, header=true, col sep=comma]
{\experimentprefix/product-LS-wavg.txt};
\ifnum \showlegend=1
{
\addlegendentry{BCFW-wavg}
}
\fi
% ---------------------------------------------------------------------------

% ---------------------------------------------------------------------------
% SSG / pegasos

% confidence interval
\addplot[fill=green,draw=none,forget plot,opacity=0.3] table[x index=0,y
index=2, header=true, col sep=comma]
{\experimentprefix/pegasos_confidence.txt};

\addplot [
color=green,
solid,
style=thick,
mark=o,
mark repeat=20,
]
table[x index=0,y index=3, header=true, col sep=comma]
{\experimentprefix/pegasos.txt};
\ifnum \showlegend=1
{
\addlegendentry{SSG}
}
\fi
% ---------------------------------------------------------------------------

% ---------------------------------------------------------------------------
% SSG with tail averaging / suffix averaging
\ifdefined \showtavg
{
\addplot [
color=purple,
densely dashed,
style=thick,
mark=diamond,
mark repeat=20,
]
table[x index=0,y index=3, header=true, col sep=comma]
{\experimentprefix/optimalSG.txt};
\ifnum \showlegend=1
{
\addlegendentry{SSG-tavg}
}
\fi
}
\fi

% ---------------------------------------------------------------------------
% SSG with weighted averaging

% confidence interval
\addplot[fill=purple,draw=none,forget plot,opacity=0.2] table[x index=0,y
index=2, header=true, col sep=comma]
{\experimentprefix/pegasos-wavg_confidence.txt};

% "median" curve itself
\addplot [
color=purple,
densely dotted,
style=thick,
mark=*,
mark repeat=20,
]
table[x index=0,y index=3, header=true, col sep=comma]
{\experimentprefix/pegasos-wavg.txt};
\ifnum \showlegend=1
{
\addlegendentry{SSG-wavg}
}
\fi
% ---------------------------------------------------------------------------

% ---------------------------------------------------------------------------
% pegasos-avg

% confidence interval
\addplot[fill=black,draw=none,forget plot,opacity=0.2] table[x index=0,y
index=2, header=true, col sep=comma]
{\experimentprefix/pegasos-avg_confidence.txt};

\addplot [
color=black,
solid,
style=thick,
mark=pentagon,
mark repeat=20,
]
table[x index=0,y index=3, header=true, col sep=comma]
{\experimentprefix/pegasos-avg.txt};
\ifnum \showlegend=1
{
\addlegendentry{SSG-avg}
}
\fi
% ---------------------------------------------------------------------------

% ---------------------------------------------------------------------------
% online exponentiated gradient
\ifdefined \showeg
{

% confidence interval
\addplot[fill=gray,draw=none,forget plot,opacity=0.2] table[x index=0,y
index=2, header=true, col sep=comma]
{\experimentprefix/OEG_confidence.txt};

\addplot [
color=gray,
solid,
style=thick,
mark=pentagon*,
mark repeat=20,
]
table[x index=0,y index=3, header=true, col sep=comma]
{\experimentprefix/OEG.txt};
\ifnum \showlegend=1
{
\addlegendentry{online-EG}
}
\fi
}
\fi
% ---------------------------------------------------------------------------

\end{axis}
\end{tikzpicture}
\normalsize
        \caption{$\lambda=0.01$.}
    \end{subfigure}
    \begin{subfigure}[t]{0.32\linewidth}
        \centering
        \def\xlabel{effective passes}
        \def\xmin{1}
        \def\xmax{149}
        \def\ymin{0.01}
        \def\ymax{100}
        \def\xmode{normal}
        \def\showlegend{0}
        \def\showeg{1}
        \def\showtavg{1}
        \def\legendpos{north east}
        \def\experimentprefix{include/data/dataset=ocr2_lambda=0.001000}
        \small
\begin{tikzpicture}[scale=0.63]

\begin{axis}[
xlabel=\xlabel,
ylabel=primal suboptimality for problem \eqref{eq:svmstruct_nslack_primal},
xmin=\xmin,
xmax=\xmax,
ymin=\ymin,
ymax=\ymax,
enlargelimits=false, area style,
ymode=log,
xmode=\xmode,
line legend,
legend pos=\legendpos,
]

% ---------------------------------------------------------------------------
% Stochastic Frank-Wolfe
\ifdefined \showsfwwithoutline
{

% confidence interval
\addplot[fill=red,draw=none,forget plot,opacity=0.3] table[x index=0,y
index=1, header=true, col sep=comma]
{\experimentprefix/product_confidence.txt};

% "median" curve itself
\addplot [
color=red,
solid,
style=thick,
mark=square,
mark repeat=20,
]
table[x index=0,y index=2, header=true, col sep=comma]
{\experimentprefix/product.txt};
\ifnum \showlegend=1
{
\addlegendentry{BCFW-fix}
}
\fi

}
\fi
% ---------------------------------------------------------------------------

% ---------------------------------------------------------------------------
% Stochastic Frank-Wolfe with line-search

% confidence interval
\addplot[fill=blue,draw=none,forget plot,opacity=0.3] table[x index=0,y
index=1, header=true, col sep=comma]
{\experimentprefix/product-LS_confidence.txt};

% "median" curve itself
\addplot [
color=blue,
solid,
style=thick,
mark=triangle,
mark repeat=20,
]
table[x index=0,y index=2, header=true, col sep=comma]
{\experimentprefix/product-LS.txt};
\ifnum \showlegend=1
{
\addlegendentry{BCFW}
}
\fi
% ---------------------------------------------------------------------------

% ---------------------------------------------------------------------------
% BCFW with tail averaging / suffix averaging (BCFW-tavg)
\ifdefined \showtavg
{
\addplot [
color=cyan,
densely dashed,
style=thick,
mark=triangle*,
mark repeat=20,
mark options=solid
]
table[x index=0,y index=2, header=true, col sep=comma]
{\experimentprefix/product-LS-opt.txt};
\ifnum \showlegend=1
{
\addlegendentry{BCFW-tavg}
}
\fi
}
\fi
% ---------------------------------------------------------------------------
% BCFW with weighted averaging (BCFW-wavg)

% confidence interval
\addplot[fill=cyan,draw=none,forget plot,opacity=0.3] table[x index=0,y
index=1, header=true, col sep=comma]
{\experimentprefix/product-LS-wavg_confidence.txt};

% "median" curve itself
\addplot [
color=cyan,
densely dotted,
style=thick,
mark=square,
mark repeat=20,
mark options=solid
]
table[x index=0,y index=2, header=true, col sep=comma]
{\experimentprefix/product-LS-wavg.txt};
\ifnum \showlegend=1
{
\addlegendentry{BCFW-wavg}
}
\fi
% ---------------------------------------------------------------------------

% ---------------------------------------------------------------------------
% SSG / pegasos

% confidence interval
\addplot[fill=green,draw=none,forget plot,opacity=0.3] table[x index=0,y
index=1, header=true, col sep=comma]
{\experimentprefix/pegasos_confidence.txt};

\addplot [
color=green,
solid,
style=thick,
mark=o,
mark repeat=20,
]
table[x index=0,y index=2, header=true, col sep=comma]
{\experimentprefix/pegasos.txt};
\ifnum \showlegend=1
{
\addlegendentry{SSG}
}
\fi
% ---------------------------------------------------------------------------

% ---------------------------------------------------------------------------
% SSG with tail averaging / suffix averaging
\ifdefined \showtavg
{
%% confidence interval
%\addplot[fill=purple,draw=none,forget plot,opacity=0.3] table[x index=0,y
%index=1, header=true, col sep=comma]
%{\experimentprefix/optimalSG_confidence.txt};

\addplot [
color=purple,
densely dashed,
style=thick,
mark=diamond,
mark repeat=20,
]
table[x index=0,y index=2, header=true, col sep=comma]
{\experimentprefix/optimalSG.txt};
\ifnum \showlegend=1
{
\addlegendentry{SSG-tavg}
}
\fi
}
\fi

% ---------------------------------------------------------------------------
% SSG with weighted averaging

% confidence interval
\addplot[fill=purple,draw=none,forget plot,opacity=0.2] table[x index=0,y
index=1, header=true, col sep=comma]
{\experimentprefix/pegasos-wavg_confidence.txt};

% "median" curve itself
\addplot [
color=purple,
densely dotted,
style=thick,
mark=*,
mark repeat=20,
]
table[x index=0,y index=2, header=true, col sep=comma]
{\experimentprefix/pegasos-wavg.txt};
\ifnum \showlegend=1
{
\addlegendentry{SSG-wavg}
}
\fi
% ---------------------------------------------------------------------------

% ---------------------------------------------------------------------------
% pegasos-avg

% confidence interval
\addplot[fill=black,draw=none,forget plot,opacity=0.3] table[x index=0,y
index=1, header=true, col sep=comma]
{\experimentprefix/pegasos-avg_confidence.txt};

\addplot [
color=black,
solid,
style=thick,
mark=pentagon,
mark repeat=20,
]
table[x index=0,y index=2, header=true, col sep=comma]
{\experimentprefix/pegasos-avg.txt};
\ifnum \showlegend=1
{
\addlegendentry{SSG-avg}
}
\fi
% ---------------------------------------------------------------------------

% ---------------------------------------------------------------------------
% online exponentiated gradient
\ifdefined \showeg
{

% confidence interval
\addplot[fill=gray,draw=none,forget plot,opacity=0.2] table[x index=0,y
index=1, header=true, col sep=comma]
{\experimentprefix/OEG_confidence.txt};

\addplot [
color=gray,
solid,
style=thick,
mark=pentagon*,
mark repeat=20,
]
table[x index=0,y index=2, header=true, col sep=comma]
{\experimentprefix/OEG.txt};
\ifnum \showlegend=1
{
\addlegendentry{online-EG}
}
\fi
}
\fi
% ---------------------------------------------------------------------------

\end{axis}

\end{tikzpicture}
\normalsize
        \def\xlabel{effective passes}
        \def\showlegend{0}
        \def\showtavg{1}
        \def\xmin{1}
        \def\xmax{149}
        \def\ymin{0.11}
        \def\ymax{0.23}
        \def\xmode{normal}
        \def\experimentprefix{include/data/dataset=ocr2_lambda=0.001000}
        \small
\begin{tikzpicture}[scale=0.63]

\pgfplotsset{y tick label style={ 
         scaled ticks=false, 
         /pgf/number format/fixed zerofill, 
         /pgf/number format/fixed, 
         /pgf/number format/precision=3, 
     }
}

\begin{axis}[
xlabel=\xlabel,
ylabel=test error,
xmin=\xmin,
xmax=\xmax,
ymin=\ymin,
ymax=\ymax,
xmode=\xmode,
enlargelimits=false, area style,
line legend,
]

% ---------------------------------------------------------------------------
% BCFW-fix - Stochastic Frank-Wolfe
\ifdefined \showsfwwithoutline
{

% confidence interval
\addplot[fill=red,draw=none,forget plot,opacity=0.3] table[x index=0,y
index=2, header=true, col sep=comma]
{\experimentprefix/product_confidence.txt};

% "median" curve itself
\addplot [
color=red,
solid,
style=thick,
mark=square,
mark repeat=20,
]
table[x index=0,y index=3, header=true, col sep=comma]
{\experimentprefix/product.txt};
\ifnum \showlegend=1
{
\addlegendentry{BCFW-fix}
}
\fi

}
\fi
% ---------------------------------------------------------------------------

% ---------------------------------------------------------------------------
% BCFW - Stochastic Frank-Wolfe with line-search

% confidence interval
\addplot[fill=blue,draw=none,forget plot,opacity=0.3] table[x index=0,y
index=2, header=true, col sep=comma]
{\experimentprefix/product-LS_confidence.txt};

% "median" curve itself
\addplot [
color=blue,
solid,
style=thick,
mark=none,
mark=triangle,
mark repeat=20,
]
table[x index=0,y index=3, header=true, col sep=comma]
{\experimentprefix/product-LS.txt};
\ifnum \showlegend=1
{
\addlegendentry{BCFW}
}
\fi
% ---------------------------------------------------------------------------

% ---------------------------------------------------------------------------
% BCFW with tail averaging / suffix averaging (BCFW-tavg)
\ifdefined \showtavg
{

\addplot [
color=cyan,
densely dashed,
style=thick,
mark=triangle*,
mark repeat=20,
mark options=solid
]
table[x index=0,y index=3, header=true, col sep=comma]
{\experimentprefix/product-LS-opt.txt};
\ifnum \showlegend=1
{
\addlegendentry{BCFW-tavg}
}
\fi
}
\fi
% ---------------------------------------------------------------------------
% BCFW with weighted averaging (BCFW-wavg)

% confidence interval
\addplot[fill=cyan,draw=none,forget plot,opacity=0.3] table[x index=0,y
index=2, header=true, col sep=comma]
{\experimentprefix/product-LS-wavg_confidence.txt};

% "median" curve itself
\addplot [
color=cyan,
densely dotted,
style=thick,
mark=square,
mark repeat=20,
mark options=solid
]
table[x index=0,y index=3, header=true, col sep=comma]
{\experimentprefix/product-LS-wavg.txt};
\ifnum \showlegend=1
{
\addlegendentry{BCFW-wavg}
}
\fi
% ---------------------------------------------------------------------------

% ---------------------------------------------------------------------------
% SSG / pegasos

% confidence interval
\addplot[fill=green,draw=none,forget plot,opacity=0.3] table[x index=0,y
index=2, header=true, col sep=comma]
{\experimentprefix/pegasos_confidence.txt};

\addplot [
color=green,
solid,
style=thick,
mark=o,
mark repeat=20,
]
table[x index=0,y index=3, header=true, col sep=comma]
{\experimentprefix/pegasos.txt};
\ifnum \showlegend=1
{
\addlegendentry{SSG}
}
\fi
% ---------------------------------------------------------------------------

% ---------------------------------------------------------------------------
% SSG with tail averaging / suffix averaging
\ifdefined \showtavg
{
\addplot [
color=purple,
densely dashed,
style=thick,
mark=diamond,
mark repeat=20,
]
table[x index=0,y index=3, header=true, col sep=comma]
{\experimentprefix/optimalSG.txt};
\ifnum \showlegend=1
{
\addlegendentry{SSG-tavg}
}
\fi
}
\fi

% ---------------------------------------------------------------------------
% SSG with weighted averaging

% confidence interval
\addplot[fill=purple,draw=none,forget plot,opacity=0.2] table[x index=0,y
index=2, header=true, col sep=comma]
{\experimentprefix/pegasos-wavg_confidence.txt};

% "median" curve itself
\addplot [
color=purple,
densely dotted,
style=thick,
mark=*,
mark repeat=20,
]
table[x index=0,y index=3, header=true, col sep=comma]
{\experimentprefix/pegasos-wavg.txt};
\ifnum \showlegend=1
{
\addlegendentry{SSG-wavg}
}
\fi
% ---------------------------------------------------------------------------

% ---------------------------------------------------------------------------
% pegasos-avg

% confidence interval
\addplot[fill=black,draw=none,forget plot,opacity=0.2] table[x index=0,y
index=2, header=true, col sep=comma]
{\experimentprefix/pegasos-avg_confidence.txt};

\addplot [
color=black,
solid,
style=thick,
mark=pentagon,
mark repeat=20,
]
table[x index=0,y index=3, header=true, col sep=comma]
{\experimentprefix/pegasos-avg.txt};
\ifnum \showlegend=1
{
\addlegendentry{SSG-avg}
}
\fi
% ---------------------------------------------------------------------------

% ---------------------------------------------------------------------------
% online exponentiated gradient
\ifdefined \showeg
{

% confidence interval
\addplot[fill=gray,draw=none,forget plot,opacity=0.2] table[x index=0,y
index=2, header=true, col sep=comma]
{\experimentprefix/OEG_confidence.txt};

\addplot [
color=gray,
solid,
style=thick,
mark=pentagon*,
mark repeat=20,
]
table[x index=0,y index=3, header=true, col sep=comma]
{\experimentprefix/OEG.txt};
\ifnum \showlegend=1
{
\addlegendentry{online-EG}
}
\fi
}
\fi
% ---------------------------------------------------------------------------

\end{axis}
\end{tikzpicture}
\normalsize
        \caption{$\lambda=0.001$.}
    \end{subfigure}
    \begin{subfigure}[t]{0.32\linewidth}
        \centering
        \def\xlabel{effective passes}
        \def\xmin{1}
        \def\xmax{149}
        \def\ymin{0.01}
        \def\ymax{100}
        \def\xmode{normal}
        \def\showlegend{0}
        \def\showeg{1}
        \def\showtavg{1}
        \def\legendpos{north east}
        \def\experimentprefix{include/data/dataset=ocr2_lambda=0.000160}
        \small
\begin{tikzpicture}[scale=0.63]

\begin{axis}[
xlabel=\xlabel,
ylabel=primal suboptimality for problem \eqref{eq:svmstruct_nslack_primal},
xmin=\xmin,
xmax=\xmax,
ymin=\ymin,
ymax=\ymax,
enlargelimits=false, area style,
ymode=log,
xmode=\xmode,
line legend,
legend pos=\legendpos,
]

% ---------------------------------------------------------------------------
% Stochastic Frank-Wolfe
\ifdefined \showsfwwithoutline
{

% confidence interval
\addplot[fill=red,draw=none,forget plot,opacity=0.3] table[x index=0,y
index=1, header=true, col sep=comma]
{\experimentprefix/product_confidence.txt};

% "median" curve itself
\addplot [
color=red,
solid,
style=thick,
mark=square,
mark repeat=20,
]
table[x index=0,y index=2, header=true, col sep=comma]
{\experimentprefix/product.txt};
\ifnum \showlegend=1
{
\addlegendentry{BCFW-fix}
}
\fi

}
\fi
% ---------------------------------------------------------------------------

% ---------------------------------------------------------------------------
% Stochastic Frank-Wolfe with line-search

% confidence interval
\addplot[fill=blue,draw=none,forget plot,opacity=0.3] table[x index=0,y
index=1, header=true, col sep=comma]
{\experimentprefix/product-LS_confidence.txt};

% "median" curve itself
\addplot [
color=blue,
solid,
style=thick,
mark=triangle,
mark repeat=20,
]
table[x index=0,y index=2, header=true, col sep=comma]
{\experimentprefix/product-LS.txt};
\ifnum \showlegend=1
{
\addlegendentry{BCFW}
}
\fi
% ---------------------------------------------------------------------------

% ---------------------------------------------------------------------------
% BCFW with tail averaging / suffix averaging (BCFW-tavg)
\ifdefined \showtavg
{
\addplot [
color=cyan,
densely dashed,
style=thick,
mark=triangle*,
mark repeat=20,
mark options=solid
]
table[x index=0,y index=2, header=true, col sep=comma]
{\experimentprefix/product-LS-opt.txt};
\ifnum \showlegend=1
{
\addlegendentry{BCFW-tavg}
}
\fi
}
\fi
% ---------------------------------------------------------------------------
% BCFW with weighted averaging (BCFW-wavg)

% confidence interval
\addplot[fill=cyan,draw=none,forget plot,opacity=0.3] table[x index=0,y
index=1, header=true, col sep=comma]
{\experimentprefix/product-LS-wavg_confidence.txt};

% "median" curve itself
\addplot [
color=cyan,
densely dotted,
style=thick,
mark=square,
mark repeat=20,
mark options=solid
]
table[x index=0,y index=2, header=true, col sep=comma]
{\experimentprefix/product-LS-wavg.txt};
\ifnum \showlegend=1
{
\addlegendentry{BCFW-wavg}
}
\fi
% ---------------------------------------------------------------------------

% ---------------------------------------------------------------------------
% SSG / pegasos

% confidence interval
\addplot[fill=green,draw=none,forget plot,opacity=0.3] table[x index=0,y
index=1, header=true, col sep=comma]
{\experimentprefix/pegasos_confidence.txt};

\addplot [
color=green,
solid,
style=thick,
mark=o,
mark repeat=20,
]
table[x index=0,y index=2, header=true, col sep=comma]
{\experimentprefix/pegasos.txt};
\ifnum \showlegend=1
{
\addlegendentry{SSG}
}
\fi
% ---------------------------------------------------------------------------

% ---------------------------------------------------------------------------
% SSG with tail averaging / suffix averaging
\ifdefined \showtavg
{
%% confidence interval
%\addplot[fill=purple,draw=none,forget plot,opacity=0.3] table[x index=0,y
%index=1, header=true, col sep=comma]
%{\experimentprefix/optimalSG_confidence.txt};

\addplot [
color=purple,
densely dashed,
style=thick,
mark=diamond,
mark repeat=20,
]
table[x index=0,y index=2, header=true, col sep=comma]
{\experimentprefix/optimalSG.txt};
\ifnum \showlegend=1
{
\addlegendentry{SSG-tavg}
}
\fi
}
\fi

% ---------------------------------------------------------------------------
% SSG with weighted averaging

% confidence interval
\addplot[fill=purple,draw=none,forget plot,opacity=0.2] table[x index=0,y
index=1, header=true, col sep=comma]
{\experimentprefix/pegasos-wavg_confidence.txt};

% "median" curve itself
\addplot [
color=purple,
densely dotted,
style=thick,
mark=*,
mark repeat=20,
]
table[x index=0,y index=2, header=true, col sep=comma]
{\experimentprefix/pegasos-wavg.txt};
\ifnum \showlegend=1
{
\addlegendentry{SSG-wavg}
}
\fi
% ---------------------------------------------------------------------------

% ---------------------------------------------------------------------------
% pegasos-avg

% confidence interval
\addplot[fill=black,draw=none,forget plot,opacity=0.3] table[x index=0,y
index=1, header=true, col sep=comma]
{\experimentprefix/pegasos-avg_confidence.txt};

\addplot [
color=black,
solid,
style=thick,
mark=pentagon,
mark repeat=20,
]
table[x index=0,y index=2, header=true, col sep=comma]
{\experimentprefix/pegasos-avg.txt};
\ifnum \showlegend=1
{
\addlegendentry{SSG-avg}
}
\fi
% ---------------------------------------------------------------------------

% ---------------------------------------------------------------------------
% online exponentiated gradient
\ifdefined \showeg
{

% confidence interval
\addplot[fill=gray,draw=none,forget plot,opacity=0.2] table[x index=0,y
index=1, header=true, col sep=comma]
{\experimentprefix/OEG_confidence.txt};

\addplot [
color=gray,
solid,
style=thick,
mark=pentagon*,
mark repeat=20,
]
table[x index=0,y index=2, header=true, col sep=comma]
{\experimentprefix/OEG.txt};
\ifnum \showlegend=1
{
\addlegendentry{online-EG}
}
\fi
}
\fi
% ---------------------------------------------------------------------------

\end{axis}

\end{tikzpicture}
\normalsize
        \def\xlabel{effective passes}
        \def\showlegend{0}
        \def\showtavg{1}
        \def\xmin{1}
        \def\xmax{149}
        \def\ymin{0.11}
        \def\ymax{0.23}
        \def\xmode{normal}
        \def\experimentprefix{include/data/dataset=ocr2_lambda=0.000160}
        \small
\begin{tikzpicture}[scale=0.63]

\pgfplotsset{y tick label style={ 
         scaled ticks=false, 
         /pgf/number format/fixed zerofill, 
         /pgf/number format/fixed, 
         /pgf/number format/precision=3, 
     }
}

\begin{axis}[
xlabel=\xlabel,
ylabel=test error,
xmin=\xmin,
xmax=\xmax,
ymin=\ymin,
ymax=\ymax,
xmode=\xmode,
enlargelimits=false, area style,
line legend,
]

% ---------------------------------------------------------------------------
% BCFW-fix - Stochastic Frank-Wolfe
\ifdefined \showsfwwithoutline
{

% confidence interval
\addplot[fill=red,draw=none,forget plot,opacity=0.3] table[x index=0,y
index=2, header=true, col sep=comma]
{\experimentprefix/product_confidence.txt};

% "median" curve itself
\addplot [
color=red,
solid,
style=thick,
mark=square,
mark repeat=20,
]
table[x index=0,y index=3, header=true, col sep=comma]
{\experimentprefix/product.txt};
\ifnum \showlegend=1
{
\addlegendentry{BCFW-fix}
}
\fi

}
\fi
% ---------------------------------------------------------------------------

% ---------------------------------------------------------------------------
% BCFW - Stochastic Frank-Wolfe with line-search

% confidence interval
\addplot[fill=blue,draw=none,forget plot,opacity=0.3] table[x index=0,y
index=2, header=true, col sep=comma]
{\experimentprefix/product-LS_confidence.txt};

% "median" curve itself
\addplot [
color=blue,
solid,
style=thick,
mark=none,
mark=triangle,
mark repeat=20,
]
table[x index=0,y index=3, header=true, col sep=comma]
{\experimentprefix/product-LS.txt};
\ifnum \showlegend=1
{
\addlegendentry{BCFW}
}
\fi
% ---------------------------------------------------------------------------

% ---------------------------------------------------------------------------
% BCFW with tail averaging / suffix averaging (BCFW-tavg)
\ifdefined \showtavg
{

\addplot [
color=cyan,
densely dashed,
style=thick,
mark=triangle*,
mark repeat=20,
mark options=solid
]
table[x index=0,y index=3, header=true, col sep=comma]
{\experimentprefix/product-LS-opt.txt};
\ifnum \showlegend=1
{
\addlegendentry{BCFW-tavg}
}
\fi
}
\fi
% ---------------------------------------------------------------------------
% BCFW with weighted averaging (BCFW-wavg)

% confidence interval
\addplot[fill=cyan,draw=none,forget plot,opacity=0.3] table[x index=0,y
index=2, header=true, col sep=comma]
{\experimentprefix/product-LS-wavg_confidence.txt};

% "median" curve itself
\addplot [
color=cyan,
densely dotted,
style=thick,
mark=square,
mark repeat=20,
mark options=solid
]
table[x index=0,y index=3, header=true, col sep=comma]
{\experimentprefix/product-LS-wavg.txt};
\ifnum \showlegend=1
{
\addlegendentry{BCFW-wavg}
}
\fi
% ---------------------------------------------------------------------------

% ---------------------------------------------------------------------------
% SSG / pegasos

% confidence interval
\addplot[fill=green,draw=none,forget plot,opacity=0.3] table[x index=0,y
index=2, header=true, col sep=comma]
{\experimentprefix/pegasos_confidence.txt};

\addplot [
color=green,
solid,
style=thick,
mark=o,
mark repeat=20,
]
table[x index=0,y index=3, header=true, col sep=comma]
{\experimentprefix/pegasos.txt};
\ifnum \showlegend=1
{
\addlegendentry{SSG}
}
\fi
% ---------------------------------------------------------------------------

% ---------------------------------------------------------------------------
% SSG with tail averaging / suffix averaging
\ifdefined \showtavg
{
\addplot [
color=purple,
densely dashed,
style=thick,
mark=diamond,
mark repeat=20,
]
table[x index=0,y index=3, header=true, col sep=comma]
{\experimentprefix/optimalSG.txt};
\ifnum \showlegend=1
{
\addlegendentry{SSG-tavg}
}
\fi
}
\fi

% ---------------------------------------------------------------------------
% SSG with weighted averaging

% confidence interval
\addplot[fill=purple,draw=none,forget plot,opacity=0.2] table[x index=0,y
index=2, header=true, col sep=comma]
{\experimentprefix/pegasos-wavg_confidence.txt};

% "median" curve itself
\addplot [
color=purple,
densely dotted,
style=thick,
mark=*,
mark repeat=20,
]
table[x index=0,y index=3, header=true, col sep=comma]
{\experimentprefix/pegasos-wavg.txt};
\ifnum \showlegend=1
{
\addlegendentry{SSG-wavg}
}
\fi
% ---------------------------------------------------------------------------

% ---------------------------------------------------------------------------
% pegasos-avg

% confidence interval
\addplot[fill=black,draw=none,forget plot,opacity=0.2] table[x index=0,y
index=2, header=true, col sep=comma]
{\experimentprefix/pegasos-avg_confidence.txt};

\addplot [
color=black,
solid,
style=thick,
mark=pentagon,
mark repeat=20,
]
table[x index=0,y index=3, header=true, col sep=comma]
{\experimentprefix/pegasos-avg.txt};
\ifnum \showlegend=1
{
\addlegendentry{SSG-avg}
}
\fi
% ---------------------------------------------------------------------------

% ---------------------------------------------------------------------------
% online exponentiated gradient
\ifdefined \showeg
{

% confidence interval
\addplot[fill=gray,draw=none,forget plot,opacity=0.2] table[x index=0,y
index=2, header=true, col sep=comma]
{\experimentprefix/OEG_confidence.txt};

\addplot [
color=gray,
solid,
style=thick,
mark=pentagon*,
mark repeat=20,
]
table[x index=0,y index=3, header=true, col sep=comma]
{\experimentprefix/OEG.txt};
\ifnum \showlegend=1
{
\addlegendentry{online-EG}
}
\fi
}
\fi
% ---------------------------------------------------------------------------

\end{axis}
\end{tikzpicture}
\normalsize
        \caption{$\lambda=1/n=0.00016$.}
    \end{subfigure}
    \caption{Convergence (top) and test error (bottom) of the stochastic
    solvers on the OCR dataset. While the block-coordinate Frank-Wolfe
    algorithm generally achieves the best objective, the averaging versions of the stochastic algorithms achieve a lower test error. While this interesting observation should be subject to further investigation, this could probably be due to the fact that these methods implicitly
    perform model averaging, which seems to lead to improved generalization performance. One can see some kind of `overfitting' for example in the $\lambda = 0.001$ case, where \emph{BCFW-wavg} reaches early a low test error which then starts to increase (and after running it for thousands of iterations, it does seem to converge to a parameter with higher test error than seen in the early iterations).
    }
    \label{fig:ocr2_results}
\end{figure}

\begin{figure}[htb]
    \begin{subfigure}[t]{0.32\linewidth}
        \centering
        \def\xlabel{effective passes}
        \def\xmin{0.1}
        \def\xmax{49}
        \def\ymin{0.001}
        \def\ymax{100}
        \def\xmode{log}
        \def\showlegend{1}
        \def\showtavg{1}
        \def\legendpos{north east}
        \def\experimentprefix{include/data/dataset=conll_lambda=0.010000}
        \small
\begin{tikzpicture}[scale=0.63]

\begin{axis}[
xlabel=\xlabel,
ylabel=primal suboptimality for problem \eqref{eq:svmstruct_nslack_primal},
xmin=\xmin,
xmax=\xmax,
ymin=\ymin,
ymax=\ymax,
enlargelimits=false, area style,
ymode=log,
xmode=\xmode,
line legend,
legend pos=\legendpos,
]

% ---------------------------------------------------------------------------
% Stochastic Frank-Wolfe
\ifdefined \showsfwwithoutline
{

% confidence interval
\addplot[fill=red,draw=none,forget plot,opacity=0.3] table[x index=0,y
index=1, header=true, col sep=comma]
{\experimentprefix/product_confidence.txt};

% "median" curve itself
\addplot [
color=red,
solid,
style=thick,
mark=square,
mark repeat=20,
]
table[x index=0,y index=2, header=true, col sep=comma]
{\experimentprefix/product.txt};
\ifnum \showlegend=1
{
\addlegendentry{BCFW-fix}
}
\fi

}
\fi
% ---------------------------------------------------------------------------

% ---------------------------------------------------------------------------
% Stochastic Frank-Wolfe with line-search

% confidence interval
\addplot[fill=blue,draw=none,forget plot,opacity=0.3] table[x index=0,y
index=1, header=true, col sep=comma]
{\experimentprefix/product-LS_confidence.txt};

% "median" curve itself
\addplot [
color=blue,
solid,
style=thick,
mark=triangle,
mark repeat=20,
]
table[x index=0,y index=2, header=true, col sep=comma]
{\experimentprefix/product-LS.txt};
\ifnum \showlegend=1
{
\addlegendentry{BCFW}
}
\fi
% ---------------------------------------------------------------------------

% ---------------------------------------------------------------------------
% BCFW with tail averaging / suffix averaging (BCFW-tavg)
\ifdefined \showtavg
{
\addplot [
color=cyan,
densely dashed,
style=thick,
mark=triangle*,
mark repeat=20,
mark options=solid
]
table[x index=0,y index=2, header=true, col sep=comma]
{\experimentprefix/product-LS-opt.txt};
\ifnum \showlegend=1
{
\addlegendentry{BCFW-tavg}
}
\fi
}
\fi
% ---------------------------------------------------------------------------
% BCFW with weighted averaging (BCFW-wavg)

% confidence interval
\addplot[fill=cyan,draw=none,forget plot,opacity=0.3] table[x index=0,y
index=1, header=true, col sep=comma]
{\experimentprefix/product-LS-wavg_confidence.txt};

% "median" curve itself
\addplot [
color=cyan,
densely dotted,
style=thick,
mark=square,
mark repeat=20,
mark options=solid
]
table[x index=0,y index=2, header=true, col sep=comma]
{\experimentprefix/product-LS-wavg.txt};
\ifnum \showlegend=1
{
\addlegendentry{BCFW-wavg}
}
\fi
% ---------------------------------------------------------------------------

% ---------------------------------------------------------------------------
% SSG / pegasos

% confidence interval
\addplot[fill=green,draw=none,forget plot,opacity=0.3] table[x index=0,y
index=1, header=true, col sep=comma]
{\experimentprefix/pegasos_confidence.txt};

\addplot [
color=green,
solid,
style=thick,
mark=o,
mark repeat=20,
]
table[x index=0,y index=2, header=true, col sep=comma]
{\experimentprefix/pegasos.txt};
\ifnum \showlegend=1
{
\addlegendentry{SSG}
}
\fi
% ---------------------------------------------------------------------------

% ---------------------------------------------------------------------------
% SSG with tail averaging / suffix averaging
\ifdefined \showtavg
{
%% confidence interval
%\addplot[fill=purple,draw=none,forget plot,opacity=0.3] table[x index=0,y
%index=1, header=true, col sep=comma]
%{\experimentprefix/optimalSG_confidence.txt};

\addplot [
color=purple,
densely dashed,
style=thick,
mark=diamond,
mark repeat=20,
]
table[x index=0,y index=2, header=true, col sep=comma]
{\experimentprefix/optimalSG.txt};
\ifnum \showlegend=1
{
\addlegendentry{SSG-tavg}
}
\fi
}
\fi

% ---------------------------------------------------------------------------
% SSG with weighted averaging

% confidence interval
\addplot[fill=purple,draw=none,forget plot,opacity=0.2] table[x index=0,y
index=1, header=true, col sep=comma]
{\experimentprefix/pegasos-wavg_confidence.txt};

% "median" curve itself
\addplot [
color=purple,
densely dotted,
style=thick,
mark=*,
mark repeat=20,
]
table[x index=0,y index=2, header=true, col sep=comma]
{\experimentprefix/pegasos-wavg.txt};
\ifnum \showlegend=1
{
\addlegendentry{SSG-wavg}
}
\fi
% ---------------------------------------------------------------------------

% ---------------------------------------------------------------------------
% pegasos-avg

% confidence interval
\addplot[fill=black,draw=none,forget plot,opacity=0.3] table[x index=0,y
index=1, header=true, col sep=comma]
{\experimentprefix/pegasos-avg_confidence.txt};

\addplot [
color=black,
solid,
style=thick,
mark=pentagon,
mark repeat=20,
]
table[x index=0,y index=2, header=true, col sep=comma]
{\experimentprefix/pegasos-avg.txt};
\ifnum \showlegend=1
{
\addlegendentry{SSG-avg}
}
\fi
% ---------------------------------------------------------------------------

% ---------------------------------------------------------------------------
% online exponentiated gradient
\ifdefined \showeg
{

% confidence interval
\addplot[fill=gray,draw=none,forget plot,opacity=0.2] table[x index=0,y
index=1, header=true, col sep=comma]
{\experimentprefix/OEG_confidence.txt};

\addplot [
color=gray,
solid,
style=thick,
mark=pentagon*,
mark repeat=20,
]
table[x index=0,y index=2, header=true, col sep=comma]
{\experimentprefix/OEG.txt};
\ifnum \showlegend=1
{
\addlegendentry{online-EG}
}
\fi
}
\fi
% ---------------------------------------------------------------------------

\end{axis}

\end{tikzpicture}
\normalsize
        \def\xlabel{effective passes}
        \def\showlegend{0}
        \def\showtavg{1}
        \def\xmin{0.1}
        \def\xmax{49}
        \def\ymin{0.04}
        \def\ymax{0.07}
        \def\xmode{log}
        \def\experimentprefix{include/data/dataset=conll_lambda=0.010000}
        \small
\begin{tikzpicture}[scale=0.63]

\pgfplotsset{y tick label style={ 
         scaled ticks=false, 
         /pgf/number format/fixed zerofill, 
         /pgf/number format/fixed, 
         /pgf/number format/precision=3, 
     }
}

\begin{axis}[
xlabel=\xlabel,
ylabel=test error,
xmin=\xmin,
xmax=\xmax,
ymin=\ymin,
ymax=\ymax,
xmode=\xmode,
enlargelimits=false, area style,
line legend,
]

% ---------------------------------------------------------------------------
% BCFW-fix - Stochastic Frank-Wolfe
\ifdefined \showsfwwithoutline
{

% confidence interval
\addplot[fill=red,draw=none,forget plot,opacity=0.3] table[x index=0,y
index=2, header=true, col sep=comma]
{\experimentprefix/product_confidence.txt};

% "median" curve itself
\addplot [
color=red,
solid,
style=thick,
mark=square,
mark repeat=20,
]
table[x index=0,y index=3, header=true, col sep=comma]
{\experimentprefix/product.txt};
\ifnum \showlegend=1
{
\addlegendentry{BCFW-fix}
}
\fi

}
\fi
% ---------------------------------------------------------------------------

% ---------------------------------------------------------------------------
% BCFW - Stochastic Frank-Wolfe with line-search

% confidence interval
\addplot[fill=blue,draw=none,forget plot,opacity=0.3] table[x index=0,y
index=2, header=true, col sep=comma]
{\experimentprefix/product-LS_confidence.txt};

% "median" curve itself
\addplot [
color=blue,
solid,
style=thick,
mark=none,
mark=triangle,
mark repeat=20,
]
table[x index=0,y index=3, header=true, col sep=comma]
{\experimentprefix/product-LS.txt};
\ifnum \showlegend=1
{
\addlegendentry{BCFW}
}
\fi
% ---------------------------------------------------------------------------

% ---------------------------------------------------------------------------
% BCFW with tail averaging / suffix averaging (BCFW-tavg)
\ifdefined \showtavg
{

\addplot [
color=cyan,
densely dashed,
style=thick,
mark=triangle*,
mark repeat=20,
mark options=solid
]
table[x index=0,y index=3, header=true, col sep=comma]
{\experimentprefix/product-LS-opt.txt};
\ifnum \showlegend=1
{
\addlegendentry{BCFW-tavg}
}
\fi
}
\fi
% ---------------------------------------------------------------------------
% BCFW with weighted averaging (BCFW-wavg)

% confidence interval
\addplot[fill=cyan,draw=none,forget plot,opacity=0.3] table[x index=0,y
index=2, header=true, col sep=comma]
{\experimentprefix/product-LS-wavg_confidence.txt};

% "median" curve itself
\addplot [
color=cyan,
densely dotted,
style=thick,
mark=square,
mark repeat=20,
mark options=solid
]
table[x index=0,y index=3, header=true, col sep=comma]
{\experimentprefix/product-LS-wavg.txt};
\ifnum \showlegend=1
{
\addlegendentry{BCFW-wavg}
}
\fi
% ---------------------------------------------------------------------------

% ---------------------------------------------------------------------------
% SSG / pegasos

% confidence interval
\addplot[fill=green,draw=none,forget plot,opacity=0.3] table[x index=0,y
index=2, header=true, col sep=comma]
{\experimentprefix/pegasos_confidence.txt};

\addplot [
color=green,
solid,
style=thick,
mark=o,
mark repeat=20,
]
table[x index=0,y index=3, header=true, col sep=comma]
{\experimentprefix/pegasos.txt};
\ifnum \showlegend=1
{
\addlegendentry{SSG}
}
\fi
% ---------------------------------------------------------------------------

% ---------------------------------------------------------------------------
% SSG with tail averaging / suffix averaging
\ifdefined \showtavg
{
\addplot [
color=purple,
densely dashed,
style=thick,
mark=diamond,
mark repeat=20,
]
table[x index=0,y index=3, header=true, col sep=comma]
{\experimentprefix/optimalSG.txt};
\ifnum \showlegend=1
{
\addlegendentry{SSG-tavg}
}
\fi
}
\fi

% ---------------------------------------------------------------------------
% SSG with weighted averaging

% confidence interval
\addplot[fill=purple,draw=none,forget plot,opacity=0.2] table[x index=0,y
index=2, header=true, col sep=comma]
{\experimentprefix/pegasos-wavg_confidence.txt};

% "median" curve itself
\addplot [
color=purple,
densely dotted,
style=thick,
mark=*,
mark repeat=20,
]
table[x index=0,y index=3, header=true, col sep=comma]
{\experimentprefix/pegasos-wavg.txt};
\ifnum \showlegend=1
{
\addlegendentry{SSG-wavg}
}
\fi
% ---------------------------------------------------------------------------

% ---------------------------------------------------------------------------
% pegasos-avg

% confidence interval
\addplot[fill=black,draw=none,forget plot,opacity=0.2] table[x index=0,y
index=2, header=true, col sep=comma]
{\experimentprefix/pegasos-avg_confidence.txt};

\addplot [
color=black,
solid,
style=thick,
mark=pentagon,
mark repeat=20,
]
table[x index=0,y index=3, header=true, col sep=comma]
{\experimentprefix/pegasos-avg.txt};
\ifnum \showlegend=1
{
\addlegendentry{SSG-avg}
}
\fi
% ---------------------------------------------------------------------------

% ---------------------------------------------------------------------------
% online exponentiated gradient
\ifdefined \showeg
{

% confidence interval
\addplot[fill=gray,draw=none,forget plot,opacity=0.2] table[x index=0,y
index=2, header=true, col sep=comma]
{\experimentprefix/OEG_confidence.txt};

\addplot [
color=gray,
solid,
style=thick,
mark=pentagon*,
mark repeat=20,
]
table[x index=0,y index=3, header=true, col sep=comma]
{\experimentprefix/OEG.txt};
\ifnum \showlegend=1
{
\addlegendentry{online-EG}
}
\fi
}
\fi
% ---------------------------------------------------------------------------

\end{axis}
\end{tikzpicture}
\normalsize
        \caption{$\lambda=0.01$.}
    \end{subfigure}
    \begin{subfigure}[t]{0.32\linewidth}
        \centering
        \def\xlabel{effective passes}
        \def\xmin{0.1}
        \def\xmax{49}
        \def\ymin{0.001}
        \def\ymax{100}
        \def\xmode{log}
        \def\showlegend{0}
        \def\showtavg{1}
        \def\legendpos{north east}
        \def\experimentprefix{include/data/dataset=conll_lambda=0.001000}
        \small
\begin{tikzpicture}[scale=0.63]

\begin{axis}[
xlabel=\xlabel,
ylabel=primal suboptimality for problem \eqref{eq:svmstruct_nslack_primal},
xmin=\xmin,
xmax=\xmax,
ymin=\ymin,
ymax=\ymax,
enlargelimits=false, area style,
ymode=log,
xmode=\xmode,
line legend,
legend pos=\legendpos,
]

% ---------------------------------------------------------------------------
% Stochastic Frank-Wolfe
\ifdefined \showsfwwithoutline
{

% confidence interval
\addplot[fill=red,draw=none,forget plot,opacity=0.3] table[x index=0,y
index=1, header=true, col sep=comma]
{\experimentprefix/product_confidence.txt};

% "median" curve itself
\addplot [
color=red,
solid,
style=thick,
mark=square,
mark repeat=20,
]
table[x index=0,y index=2, header=true, col sep=comma]
{\experimentprefix/product.txt};
\ifnum \showlegend=1
{
\addlegendentry{BCFW-fix}
}
\fi

}
\fi
% ---------------------------------------------------------------------------

% ---------------------------------------------------------------------------
% Stochastic Frank-Wolfe with line-search

% confidence interval
\addplot[fill=blue,draw=none,forget plot,opacity=0.3] table[x index=0,y
index=1, header=true, col sep=comma]
{\experimentprefix/product-LS_confidence.txt};

% "median" curve itself
\addplot [
color=blue,
solid,
style=thick,
mark=triangle,
mark repeat=20,
]
table[x index=0,y index=2, header=true, col sep=comma]
{\experimentprefix/product-LS.txt};
\ifnum \showlegend=1
{
\addlegendentry{BCFW}
}
\fi
% ---------------------------------------------------------------------------

% ---------------------------------------------------------------------------
% BCFW with tail averaging / suffix averaging (BCFW-tavg)
\ifdefined \showtavg
{
\addplot [
color=cyan,
densely dashed,
style=thick,
mark=triangle*,
mark repeat=20,
mark options=solid
]
table[x index=0,y index=2, header=true, col sep=comma]
{\experimentprefix/product-LS-opt.txt};
\ifnum \showlegend=1
{
\addlegendentry{BCFW-tavg}
}
\fi
}
\fi
% ---------------------------------------------------------------------------
% BCFW with weighted averaging (BCFW-wavg)

% confidence interval
\addplot[fill=cyan,draw=none,forget plot,opacity=0.3] table[x index=0,y
index=1, header=true, col sep=comma]
{\experimentprefix/product-LS-wavg_confidence.txt};

% "median" curve itself
\addplot [
color=cyan,
densely dotted,
style=thick,
mark=square,
mark repeat=20,
mark options=solid
]
table[x index=0,y index=2, header=true, col sep=comma]
{\experimentprefix/product-LS-wavg.txt};
\ifnum \showlegend=1
{
\addlegendentry{BCFW-wavg}
}
\fi
% ---------------------------------------------------------------------------

% ---------------------------------------------------------------------------
% SSG / pegasos

% confidence interval
\addplot[fill=green,draw=none,forget plot,opacity=0.3] table[x index=0,y
index=1, header=true, col sep=comma]
{\experimentprefix/pegasos_confidence.txt};

\addplot [
color=green,
solid,
style=thick,
mark=o,
mark repeat=20,
]
table[x index=0,y index=2, header=true, col sep=comma]
{\experimentprefix/pegasos.txt};
\ifnum \showlegend=1
{
\addlegendentry{SSG}
}
\fi
% ---------------------------------------------------------------------------

% ---------------------------------------------------------------------------
% SSG with tail averaging / suffix averaging
\ifdefined \showtavg
{
%% confidence interval
%\addplot[fill=purple,draw=none,forget plot,opacity=0.3] table[x index=0,y
%index=1, header=true, col sep=comma]
%{\experimentprefix/optimalSG_confidence.txt};

\addplot [
color=purple,
densely dashed,
style=thick,
mark=diamond,
mark repeat=20,
]
table[x index=0,y index=2, header=true, col sep=comma]
{\experimentprefix/optimalSG.txt};
\ifnum \showlegend=1
{
\addlegendentry{SSG-tavg}
}
\fi
}
\fi

% ---------------------------------------------------------------------------
% SSG with weighted averaging

% confidence interval
\addplot[fill=purple,draw=none,forget plot,opacity=0.2] table[x index=0,y
index=1, header=true, col sep=comma]
{\experimentprefix/pegasos-wavg_confidence.txt};

% "median" curve itself
\addplot [
color=purple,
densely dotted,
style=thick,
mark=*,
mark repeat=20,
]
table[x index=0,y index=2, header=true, col sep=comma]
{\experimentprefix/pegasos-wavg.txt};
\ifnum \showlegend=1
{
\addlegendentry{SSG-wavg}
}
\fi
% ---------------------------------------------------------------------------

% ---------------------------------------------------------------------------
% pegasos-avg

% confidence interval
\addplot[fill=black,draw=none,forget plot,opacity=0.3] table[x index=0,y
index=1, header=true, col sep=comma]
{\experimentprefix/pegasos-avg_confidence.txt};

\addplot [
color=black,
solid,
style=thick,
mark=pentagon,
mark repeat=20,
]
table[x index=0,y index=2, header=true, col sep=comma]
{\experimentprefix/pegasos-avg.txt};
\ifnum \showlegend=1
{
\addlegendentry{SSG-avg}
}
\fi
% ---------------------------------------------------------------------------

% ---------------------------------------------------------------------------
% online exponentiated gradient
\ifdefined \showeg
{

% confidence interval
\addplot[fill=gray,draw=none,forget plot,opacity=0.2] table[x index=0,y
index=1, header=true, col sep=comma]
{\experimentprefix/OEG_confidence.txt};

\addplot [
color=gray,
solid,
style=thick,
mark=pentagon*,
mark repeat=20,
]
table[x index=0,y index=2, header=true, col sep=comma]
{\experimentprefix/OEG.txt};
\ifnum \showlegend=1
{
\addlegendentry{online-EG}
}
\fi
}
\fi
% ---------------------------------------------------------------------------

\end{axis}

\end{tikzpicture}
\normalsize
        \def\xlabel{effective passes}
        \def\showlegend{0}
        \def\showtavg{1}
        \def\xmin{0.1}
        \def\xmax{49}
        \def\ymin{0.04}
        \def\ymax{0.07}
        \def\xmode{log}
        \def\experimentprefix{include/data/dataset=conll_lambda=0.001000}
        \small
\begin{tikzpicture}[scale=0.63]

\pgfplotsset{y tick label style={ 
         scaled ticks=false, 
         /pgf/number format/fixed zerofill, 
         /pgf/number format/fixed, 
         /pgf/number format/precision=3, 
     }
}

\begin{axis}[
xlabel=\xlabel,
ylabel=test error,
xmin=\xmin,
xmax=\xmax,
ymin=\ymin,
ymax=\ymax,
xmode=\xmode,
enlargelimits=false, area style,
line legend,
]

% ---------------------------------------------------------------------------
% BCFW-fix - Stochastic Frank-Wolfe
\ifdefined \showsfwwithoutline
{

% confidence interval
\addplot[fill=red,draw=none,forget plot,opacity=0.3] table[x index=0,y
index=2, header=true, col sep=comma]
{\experimentprefix/product_confidence.txt};

% "median" curve itself
\addplot [
color=red,
solid,
style=thick,
mark=square,
mark repeat=20,
]
table[x index=0,y index=3, header=true, col sep=comma]
{\experimentprefix/product.txt};
\ifnum \showlegend=1
{
\addlegendentry{BCFW-fix}
}
\fi

}
\fi
% ---------------------------------------------------------------------------

% ---------------------------------------------------------------------------
% BCFW - Stochastic Frank-Wolfe with line-search

% confidence interval
\addplot[fill=blue,draw=none,forget plot,opacity=0.3] table[x index=0,y
index=2, header=true, col sep=comma]
{\experimentprefix/product-LS_confidence.txt};

% "median" curve itself
\addplot [
color=blue,
solid,
style=thick,
mark=none,
mark=triangle,
mark repeat=20,
]
table[x index=0,y index=3, header=true, col sep=comma]
{\experimentprefix/product-LS.txt};
\ifnum \showlegend=1
{
\addlegendentry{BCFW}
}
\fi
% ---------------------------------------------------------------------------

% ---------------------------------------------------------------------------
% BCFW with tail averaging / suffix averaging (BCFW-tavg)
\ifdefined \showtavg
{

\addplot [
color=cyan,
densely dashed,
style=thick,
mark=triangle*,
mark repeat=20,
mark options=solid
]
table[x index=0,y index=3, header=true, col sep=comma]
{\experimentprefix/product-LS-opt.txt};
\ifnum \showlegend=1
{
\addlegendentry{BCFW-tavg}
}
\fi
}
\fi
% ---------------------------------------------------------------------------
% BCFW with weighted averaging (BCFW-wavg)

% confidence interval
\addplot[fill=cyan,draw=none,forget plot,opacity=0.3] table[x index=0,y
index=2, header=true, col sep=comma]
{\experimentprefix/product-LS-wavg_confidence.txt};

% "median" curve itself
\addplot [
color=cyan,
densely dotted,
style=thick,
mark=square,
mark repeat=20,
mark options=solid
]
table[x index=0,y index=3, header=true, col sep=comma]
{\experimentprefix/product-LS-wavg.txt};
\ifnum \showlegend=1
{
\addlegendentry{BCFW-wavg}
}
\fi
% ---------------------------------------------------------------------------

% ---------------------------------------------------------------------------
% SSG / pegasos

% confidence interval
\addplot[fill=green,draw=none,forget plot,opacity=0.3] table[x index=0,y
index=2, header=true, col sep=comma]
{\experimentprefix/pegasos_confidence.txt};

\addplot [
color=green,
solid,
style=thick,
mark=o,
mark repeat=20,
]
table[x index=0,y index=3, header=true, col sep=comma]
{\experimentprefix/pegasos.txt};
\ifnum \showlegend=1
{
\addlegendentry{SSG}
}
\fi
% ---------------------------------------------------------------------------

% ---------------------------------------------------------------------------
% SSG with tail averaging / suffix averaging
\ifdefined \showtavg
{
\addplot [
color=purple,
densely dashed,
style=thick,
mark=diamond,
mark repeat=20,
]
table[x index=0,y index=3, header=true, col sep=comma]
{\experimentprefix/optimalSG.txt};
\ifnum \showlegend=1
{
\addlegendentry{SSG-tavg}
}
\fi
}
\fi

% ---------------------------------------------------------------------------
% SSG with weighted averaging

% confidence interval
\addplot[fill=purple,draw=none,forget plot,opacity=0.2] table[x index=0,y
index=2, header=true, col sep=comma]
{\experimentprefix/pegasos-wavg_confidence.txt};

% "median" curve itself
\addplot [
color=purple,
densely dotted,
style=thick,
mark=*,
mark repeat=20,
]
table[x index=0,y index=3, header=true, col sep=comma]
{\experimentprefix/pegasos-wavg.txt};
\ifnum \showlegend=1
{
\addlegendentry{SSG-wavg}
}
\fi
% ---------------------------------------------------------------------------

% ---------------------------------------------------------------------------
% pegasos-avg

% confidence interval
\addplot[fill=black,draw=none,forget plot,opacity=0.2] table[x index=0,y
index=2, header=true, col sep=comma]
{\experimentprefix/pegasos-avg_confidence.txt};

\addplot [
color=black,
solid,
style=thick,
mark=pentagon,
mark repeat=20,
]
table[x index=0,y index=3, header=true, col sep=comma]
{\experimentprefix/pegasos-avg.txt};
\ifnum \showlegend=1
{
\addlegendentry{SSG-avg}
}
\fi
% ---------------------------------------------------------------------------

% ---------------------------------------------------------------------------
% online exponentiated gradient
\ifdefined \showeg
{

% confidence interval
\addplot[fill=gray,draw=none,forget plot,opacity=0.2] table[x index=0,y
index=2, header=true, col sep=comma]
{\experimentprefix/OEG_confidence.txt};

\addplot [
color=gray,
solid,
style=thick,
mark=pentagon*,
mark repeat=20,
]
table[x index=0,y index=3, header=true, col sep=comma]
{\experimentprefix/OEG.txt};
\ifnum \showlegend=1
{
\addlegendentry{online-EG}
}
\fi
}
\fi
% ---------------------------------------------------------------------------

\end{axis}
\end{tikzpicture}
\normalsize
        \caption{$\lambda=0.001$.}
    \end{subfigure}
    \begin{subfigure}[t]{0.32\linewidth}
        \centering
        \def\xlabel{effective passes}
        \def\xmin{0.1}
        \def\xmax{49}
        \def\ymin{0.01}
        \def\ymax{100}
        \def\xmode{log}
        \def\showlegend{0}
        \def\showtavg{1}
        \def\legendpos{north east}
        \def\experimentprefix{include/data/dataset=conll_lambda=0.000112}
        \small
\begin{tikzpicture}[scale=0.63]

\begin{axis}[
xlabel=\xlabel,
ylabel=primal suboptimality for problem \eqref{eq:svmstruct_nslack_primal},
xmin=\xmin,
xmax=\xmax,
ymin=\ymin,
ymax=\ymax,
enlargelimits=false, area style,
ymode=log,
xmode=\xmode,
line legend,
legend pos=\legendpos,
]

% ---------------------------------------------------------------------------
% Stochastic Frank-Wolfe
\ifdefined \showsfwwithoutline
{

% confidence interval
\addplot[fill=red,draw=none,forget plot,opacity=0.3] table[x index=0,y
index=1, header=true, col sep=comma]
{\experimentprefix/product_confidence.txt};

% "median" curve itself
\addplot [
color=red,
solid,
style=thick,
mark=square,
mark repeat=20,
]
table[x index=0,y index=2, header=true, col sep=comma]
{\experimentprefix/product.txt};
\ifnum \showlegend=1
{
\addlegendentry{BCFW-fix}
}
\fi

}
\fi
% ---------------------------------------------------------------------------

% ---------------------------------------------------------------------------
% Stochastic Frank-Wolfe with line-search

% confidence interval
\addplot[fill=blue,draw=none,forget plot,opacity=0.3] table[x index=0,y
index=1, header=true, col sep=comma]
{\experimentprefix/product-LS_confidence.txt};

% "median" curve itself
\addplot [
color=blue,
solid,
style=thick,
mark=triangle,
mark repeat=20,
]
table[x index=0,y index=2, header=true, col sep=comma]
{\experimentprefix/product-LS.txt};
\ifnum \showlegend=1
{
\addlegendentry{BCFW}
}
\fi
% ---------------------------------------------------------------------------

% ---------------------------------------------------------------------------
% BCFW with tail averaging / suffix averaging (BCFW-tavg)
\ifdefined \showtavg
{
\addplot [
color=cyan,
densely dashed,
style=thick,
mark=triangle*,
mark repeat=20,
mark options=solid
]
table[x index=0,y index=2, header=true, col sep=comma]
{\experimentprefix/product-LS-opt.txt};
\ifnum \showlegend=1
{
\addlegendentry{BCFW-tavg}
}
\fi
}
\fi
% ---------------------------------------------------------------------------
% BCFW with weighted averaging (BCFW-wavg)

% confidence interval
\addplot[fill=cyan,draw=none,forget plot,opacity=0.3] table[x index=0,y
index=1, header=true, col sep=comma]
{\experimentprefix/product-LS-wavg_confidence.txt};

% "median" curve itself
\addplot [
color=cyan,
densely dotted,
style=thick,
mark=square,
mark repeat=20,
mark options=solid
]
table[x index=0,y index=2, header=true, col sep=comma]
{\experimentprefix/product-LS-wavg.txt};
\ifnum \showlegend=1
{
\addlegendentry{BCFW-wavg}
}
\fi
% ---------------------------------------------------------------------------

% ---------------------------------------------------------------------------
% SSG / pegasos

% confidence interval
\addplot[fill=green,draw=none,forget plot,opacity=0.3] table[x index=0,y
index=1, header=true, col sep=comma]
{\experimentprefix/pegasos_confidence.txt};

\addplot [
color=green,
solid,
style=thick,
mark=o,
mark repeat=20,
]
table[x index=0,y index=2, header=true, col sep=comma]
{\experimentprefix/pegasos.txt};
\ifnum \showlegend=1
{
\addlegendentry{SSG}
}
\fi
% ---------------------------------------------------------------------------

% ---------------------------------------------------------------------------
% SSG with tail averaging / suffix averaging
\ifdefined \showtavg
{
%% confidence interval
%\addplot[fill=purple,draw=none,forget plot,opacity=0.3] table[x index=0,y
%index=1, header=true, col sep=comma]
%{\experimentprefix/optimalSG_confidence.txt};

\addplot [
color=purple,
densely dashed,
style=thick,
mark=diamond,
mark repeat=20,
]
table[x index=0,y index=2, header=true, col sep=comma]
{\experimentprefix/optimalSG.txt};
\ifnum \showlegend=1
{
\addlegendentry{SSG-tavg}
}
\fi
}
\fi

% ---------------------------------------------------------------------------
% SSG with weighted averaging

% confidence interval
\addplot[fill=purple,draw=none,forget plot,opacity=0.2] table[x index=0,y
index=1, header=true, col sep=comma]
{\experimentprefix/pegasos-wavg_confidence.txt};

% "median" curve itself
\addplot [
color=purple,
densely dotted,
style=thick,
mark=*,
mark repeat=20,
]
table[x index=0,y index=2, header=true, col sep=comma]
{\experimentprefix/pegasos-wavg.txt};
\ifnum \showlegend=1
{
\addlegendentry{SSG-wavg}
}
\fi
% ---------------------------------------------------------------------------

% ---------------------------------------------------------------------------
% pegasos-avg

% confidence interval
\addplot[fill=black,draw=none,forget plot,opacity=0.3] table[x index=0,y
index=1, header=true, col sep=comma]
{\experimentprefix/pegasos-avg_confidence.txt};

\addplot [
color=black,
solid,
style=thick,
mark=pentagon,
mark repeat=20,
]
table[x index=0,y index=2, header=true, col sep=comma]
{\experimentprefix/pegasos-avg.txt};
\ifnum \showlegend=1
{
\addlegendentry{SSG-avg}
}
\fi
% ---------------------------------------------------------------------------

% ---------------------------------------------------------------------------
% online exponentiated gradient
\ifdefined \showeg
{

% confidence interval
\addplot[fill=gray,draw=none,forget plot,opacity=0.2] table[x index=0,y
index=1, header=true, col sep=comma]
{\experimentprefix/OEG_confidence.txt};

\addplot [
color=gray,
solid,
style=thick,
mark=pentagon*,
mark repeat=20,
]
table[x index=0,y index=2, header=true, col sep=comma]
{\experimentprefix/OEG.txt};
\ifnum \showlegend=1
{
\addlegendentry{online-EG}
}
\fi
}
\fi
% ---------------------------------------------------------------------------

\end{axis}

\end{tikzpicture}
\normalsize
        \def\xlabel{effective passes}
        \def\showlegend{0}
        \def\showtavg{1}
        \def\xmin{0.1}
        \def\xmax{49}
        \def\ymin{0.04}
        \def\ymax{0.07}
        \def\xmode{log}
        \def\experimentprefix{include/data/dataset=conll_lambda=0.000112}
        \small
\begin{tikzpicture}[scale=0.63]

\pgfplotsset{y tick label style={ 
         scaled ticks=false, 
         /pgf/number format/fixed zerofill, 
         /pgf/number format/fixed, 
         /pgf/number format/precision=3, 
     }
}

\begin{axis}[
xlabel=\xlabel,
ylabel=test error,
xmin=\xmin,
xmax=\xmax,
ymin=\ymin,
ymax=\ymax,
xmode=\xmode,
enlargelimits=false, area style,
line legend,
]

% ---------------------------------------------------------------------------
% BCFW-fix - Stochastic Frank-Wolfe
\ifdefined \showsfwwithoutline
{

% confidence interval
\addplot[fill=red,draw=none,forget plot,opacity=0.3] table[x index=0,y
index=2, header=true, col sep=comma]
{\experimentprefix/product_confidence.txt};

% "median" curve itself
\addplot [
color=red,
solid,
style=thick,
mark=square,
mark repeat=20,
]
table[x index=0,y index=3, header=true, col sep=comma]
{\experimentprefix/product.txt};
\ifnum \showlegend=1
{
\addlegendentry{BCFW-fix}
}
\fi

}
\fi
% ---------------------------------------------------------------------------

% ---------------------------------------------------------------------------
% BCFW - Stochastic Frank-Wolfe with line-search

% confidence interval
\addplot[fill=blue,draw=none,forget plot,opacity=0.3] table[x index=0,y
index=2, header=true, col sep=comma]
{\experimentprefix/product-LS_confidence.txt};

% "median" curve itself
\addplot [
color=blue,
solid,
style=thick,
mark=none,
mark=triangle,
mark repeat=20,
]
table[x index=0,y index=3, header=true, col sep=comma]
{\experimentprefix/product-LS.txt};
\ifnum \showlegend=1
{
\addlegendentry{BCFW}
}
\fi
% ---------------------------------------------------------------------------

% ---------------------------------------------------------------------------
% BCFW with tail averaging / suffix averaging (BCFW-tavg)
\ifdefined \showtavg
{

\addplot [
color=cyan,
densely dashed,
style=thick,
mark=triangle*,
mark repeat=20,
mark options=solid
]
table[x index=0,y index=3, header=true, col sep=comma]
{\experimentprefix/product-LS-opt.txt};
\ifnum \showlegend=1
{
\addlegendentry{BCFW-tavg}
}
\fi
}
\fi
% ---------------------------------------------------------------------------
% BCFW with weighted averaging (BCFW-wavg)

% confidence interval
\addplot[fill=cyan,draw=none,forget plot,opacity=0.3] table[x index=0,y
index=2, header=true, col sep=comma]
{\experimentprefix/product-LS-wavg_confidence.txt};

% "median" curve itself
\addplot [
color=cyan,
densely dotted,
style=thick,
mark=square,
mark repeat=20,
mark options=solid
]
table[x index=0,y index=3, header=true, col sep=comma]
{\experimentprefix/product-LS-wavg.txt};
\ifnum \showlegend=1
{
\addlegendentry{BCFW-wavg}
}
\fi
% ---------------------------------------------------------------------------

% ---------------------------------------------------------------------------
% SSG / pegasos

% confidence interval
\addplot[fill=green,draw=none,forget plot,opacity=0.3] table[x index=0,y
index=2, header=true, col sep=comma]
{\experimentprefix/pegasos_confidence.txt};

\addplot [
color=green,
solid,
style=thick,
mark=o,
mark repeat=20,
]
table[x index=0,y index=3, header=true, col sep=comma]
{\experimentprefix/pegasos.txt};
\ifnum \showlegend=1
{
\addlegendentry{SSG}
}
\fi
% ---------------------------------------------------------------------------

% ---------------------------------------------------------------------------
% SSG with tail averaging / suffix averaging
\ifdefined \showtavg
{
\addplot [
color=purple,
densely dashed,
style=thick,
mark=diamond,
mark repeat=20,
]
table[x index=0,y index=3, header=true, col sep=comma]
{\experimentprefix/optimalSG.txt};
\ifnum \showlegend=1
{
\addlegendentry{SSG-tavg}
}
\fi
}
\fi

% ---------------------------------------------------------------------------
% SSG with weighted averaging

% confidence interval
\addplot[fill=purple,draw=none,forget plot,opacity=0.2] table[x index=0,y
index=2, header=true, col sep=comma]
{\experimentprefix/pegasos-wavg_confidence.txt};

% "median" curve itself
\addplot [
color=purple,
densely dotted,
style=thick,
mark=*,
mark repeat=20,
]
table[x index=0,y index=3, header=true, col sep=comma]
{\experimentprefix/pegasos-wavg.txt};
\ifnum \showlegend=1
{
\addlegendentry{SSG-wavg}
}
\fi
% ---------------------------------------------------------------------------

% ---------------------------------------------------------------------------
% pegasos-avg

% confidence interval
\addplot[fill=black,draw=none,forget plot,opacity=0.2] table[x index=0,y
index=2, header=true, col sep=comma]
{\experimentprefix/pegasos-avg_confidence.txt};

\addplot [
color=black,
solid,
style=thick,
mark=pentagon,
mark repeat=20,
]
table[x index=0,y index=3, header=true, col sep=comma]
{\experimentprefix/pegasos-avg.txt};
\ifnum \showlegend=1
{
\addlegendentry{SSG-avg}
}
\fi
% ---------------------------------------------------------------------------

% ---------------------------------------------------------------------------
% online exponentiated gradient
\ifdefined \showeg
{

% confidence interval
\addplot[fill=gray,draw=none,forget plot,opacity=0.2] table[x index=0,y
index=2, header=true, col sep=comma]
{\experimentprefix/OEG_confidence.txt};

\addplot [
color=gray,
solid,
style=thick,
mark=pentagon*,
mark repeat=20,
]
table[x index=0,y index=3, header=true, col sep=comma]
{\experimentprefix/OEG.txt};
\ifnum \showlegend=1
{
\addlegendentry{online-EG}
}
\fi
}
\fi
% ---------------------------------------------------------------------------

\end{axis}
\end{tikzpicture}
\normalsize
        \caption{$\lambda=1/n=0.000112$.}
    \end{subfigure}
    \caption{Convergence (top) and test error (bottom) of the stochastic
    solvers on the CoNLL dataset, using a logarithmic x-axis to focus on the early iterations. %
    }
    \label{fig:conll_results}
\end{figure}
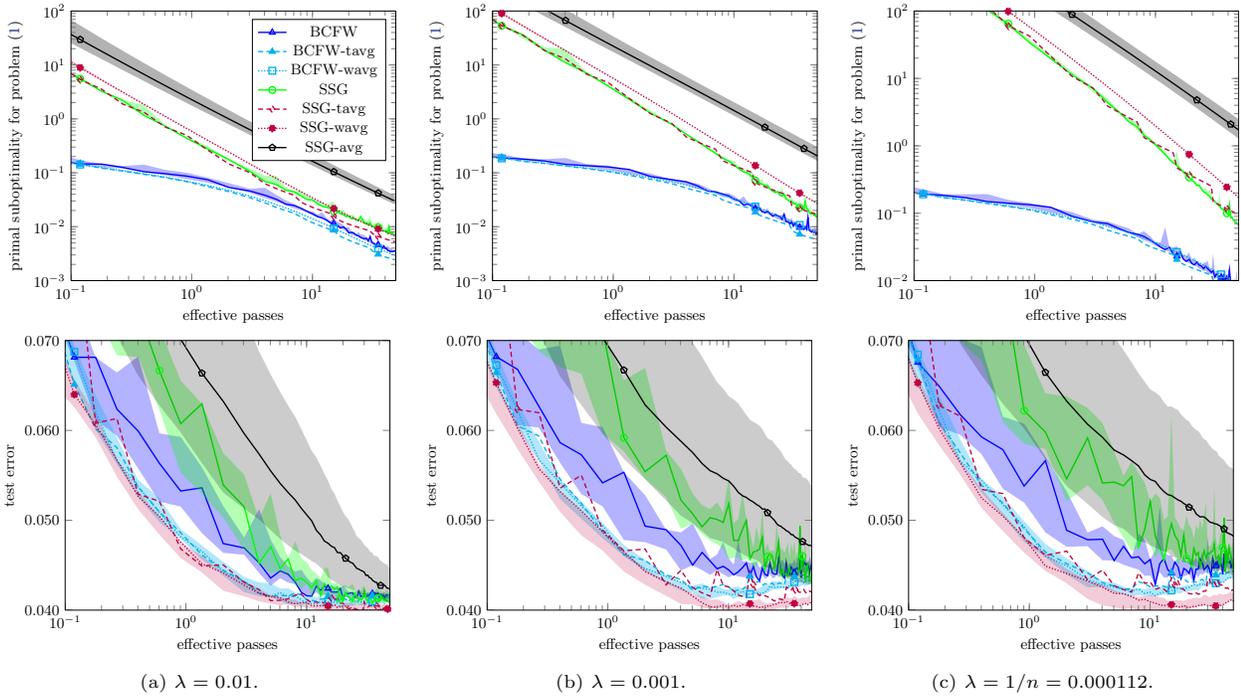

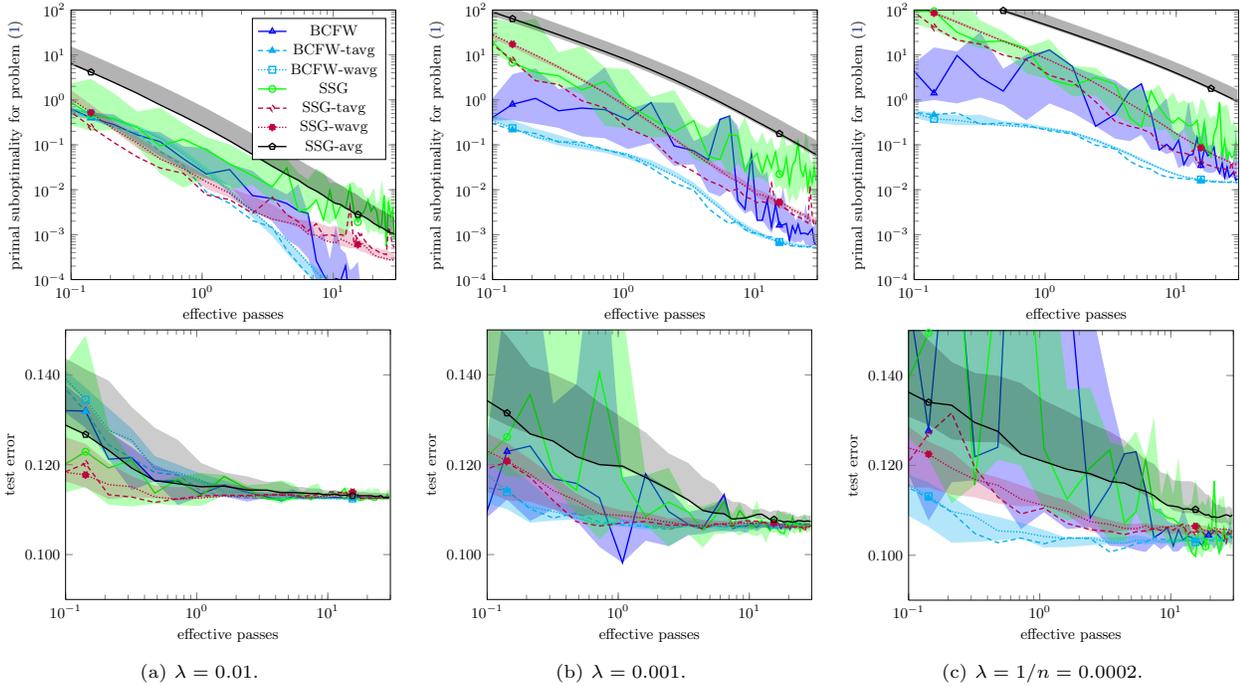
\begin{figure}[htb]
    \begin{subfigure}[t]{0.32\linewidth}
        \centering
        \def\xlabel{effective passes}
        \def\xmin{0.1}
        \def\xmax{30}
        \def\ymin{0.0001}
        \def\ymax{100}
        \def\xmode{log}
        \def\showlegend{1}
        \def\showtavg{1}
        \def\legendpos{north east}
        \def\experimentprefix{include/data/dataset=matching2_lambda=0.010000}
        \small
\begin{tikzpicture}[scale=0.63]

\begin{axis}[
xlabel=\xlabel,
ylabel=primal suboptimality for problem \eqref{eq:svmstruct_nslack_primal},
xmin=\xmin,
xmax=\xmax,
ymin=\ymin,
ymax=\ymax,
enlargelimits=false, area style,
ymode=log,
xmode=\xmode,
line legend,
legend pos=\legendpos,
]

% ---------------------------------------------------------------------------
% Stochastic Frank-Wolfe
\ifdefined \showsfwwithoutline
{

% confidence interval
\addplot[fill=red,draw=none,forget plot,opacity=0.3] table[x index=0,y
index=1, header=true, col sep=comma]
{\experimentprefix/product_confidence.txt};

% "median" curve itself
\addplot [
color=red,
solid,
style=thick,
mark=square,
mark repeat=20,
]
table[x index=0,y index=2, header=true, col sep=comma]
{\experimentprefix/product.txt};
\ifnum \showlegend=1
{
\addlegendentry{BCFW-fix}
}
\fi

}
\fi
% ---------------------------------------------------------------------------

% ---------------------------------------------------------------------------
% Stochastic Frank-Wolfe with line-search

% confidence interval
\addplot[fill=blue,draw=none,forget plot,opacity=0.3] table[x index=0,y
index=1, header=true, col sep=comma]
{\experimentprefix/product-LS_confidence.txt};

% "median" curve itself
\addplot [
color=blue,
solid,
style=thick,
mark=triangle,
mark repeat=20,
]
table[x index=0,y index=2, header=true, col sep=comma]
{\experimentprefix/product-LS.txt};
\ifnum \showlegend=1
{
\addlegendentry{BCFW}
}
\fi
% ---------------------------------------------------------------------------

% ---------------------------------------------------------------------------
% BCFW with tail averaging / suffix averaging (BCFW-tavg)
\ifdefined \showtavg
{
\addplot [
color=cyan,
densely dashed,
style=thick,
mark=triangle*,
mark repeat=20,
mark options=solid
]
table[x index=0,y index=2, header=true, col sep=comma]
{\experimentprefix/product-LS-opt.txt};
\ifnum \showlegend=1
{
\addlegendentry{BCFW-tavg}
}
\fi
}
\fi
% ---------------------------------------------------------------------------
% BCFW with weighted averaging (BCFW-wavg)

% confidence interval
\addplot[fill=cyan,draw=none,forget plot,opacity=0.3] table[x index=0,y
index=1, header=true, col sep=comma]
{\experimentprefix/product-LS-wavg_confidence.txt};

% "median" curve itself
\addplot [
color=cyan,
densely dotted,
style=thick,
mark=square,
mark repeat=20,
mark options=solid
]
table[x index=0,y index=2, header=true, col sep=comma]
{\experimentprefix/product-LS-wavg.txt};
\ifnum \showlegend=1
{
\addlegendentry{BCFW-wavg}
}
\fi
% ---------------------------------------------------------------------------

% ---------------------------------------------------------------------------
% SSG / pegasos

% confidence interval
\addplot[fill=green,draw=none,forget plot,opacity=0.3] table[x index=0,y
index=1, header=true, col sep=comma]
{\experimentprefix/pegasos_confidence.txt};

\addplot [
color=green,
solid,
style=thick,
mark=o,
mark repeat=20,
]
table[x index=0,y index=2, header=true, col sep=comma]
{\experimentprefix/pegasos.txt};
\ifnum \showlegend=1
{
\addlegendentry{SSG}
}
\fi
% ---------------------------------------------------------------------------

% ---------------------------------------------------------------------------
% SSG with tail averaging / suffix averaging
\ifdefined \showtavg
{
%% confidence interval
%\addplot[fill=purple,draw=none,forget plot,opacity=0.3] table[x index=0,y
%index=1, header=true, col sep=comma]
%{\experimentprefix/optimalSG_confidence.txt};

\addplot [
color=purple,
densely dashed,
style=thick,
mark=diamond,
mark repeat=20,
]
table[x index=0,y index=2, header=true, col sep=comma]
{\experimentprefix/optimalSG.txt};
\ifnum \showlegend=1
{
\addlegendentry{SSG-tavg}
}
\fi
}
\fi

% ---------------------------------------------------------------------------
% SSG with weighted averaging

% confidence interval
\addplot[fill=purple,draw=none,forget plot,opacity=0.2] table[x index=0,y
index=1, header=true, col sep=comma]
{\experimentprefix/pegasos-wavg_confidence.txt};

% "median" curve itself
\addplot [
color=purple,
densely dotted,
style=thick,
mark=*,
mark repeat=20,
]
table[x index=0,y index=2, header=true, col sep=comma]
{\experimentprefix/pegasos-wavg.txt};
\ifnum \showlegend=1
{
\addlegendentry{SSG-wavg}
}
\fi
% ---------------------------------------------------------------------------

% ---------------------------------------------------------------------------
% pegasos-avg

% confidence interval
\addplot[fill=black,draw=none,forget plot,opacity=0.3] table[x index=0,y
index=1, header=true, col sep=comma]
{\experimentprefix/pegasos-avg_confidence.txt};

\addplot [
color=black,
solid,
style=thick,
mark=pentagon,
mark repeat=20,
]
table[x index=0,y index=2, header=true, col sep=comma]
{\experimentprefix/pegasos-avg.txt};
\ifnum \showlegend=1
{
\addlegendentry{SSG-avg}
}
\fi
% ---------------------------------------------------------------------------

% ---------------------------------------------------------------------------
% online exponentiated gradient
\ifdefined \showeg
{

% confidence interval
\addplot[fill=gray,draw=none,forget plot,opacity=0.2] table[x index=0,y
index=1, header=true, col sep=comma]
{\experimentprefix/OEG_confidence.txt};

\addplot [
color=gray,
solid,
style=thick,
mark=pentagon*,
mark repeat=20,
]
table[x index=0,y index=2, header=true, col sep=comma]
{\experimentprefix/OEG.txt};
\ifnum \showlegend=1
{
\addlegendentry{online-EG}
}
\fi
}
\fi
% ---------------------------------------------------------------------------

\end{axis}

\end{tikzpicture}
\normalsize
        \def\xlabel{effective passes}
        \def\showlegend{0}
        \def\showtavg{1}
        \def\xmin{0.1}
        \def\xmax{30}
        \def\ymin{0.09}
        \def\ymax{0.15}
        \def\xmode{log}
        \def\experimentprefix{include/data/dataset=matching2_lambda=0.010000}
        \small
\begin{tikzpicture}[scale=0.63]

\pgfplotsset{y tick label style={ 
         scaled ticks=false, 
         /pgf/number format/fixed zerofill, 
         /pgf/number format/fixed, 
         /pgf/number format/precision=3, 
     }
}

\begin{axis}[
xlabel=\xlabel,
ylabel=test error,
xmin=\xmin,
xmax=\xmax,
ymin=\ymin,
ymax=\ymax,
xmode=\xmode,
enlargelimits=false, area style,
line legend,
]

% ---------------------------------------------------------------------------
% BCFW-fix - Stochastic Frank-Wolfe
\ifdefined \showsfwwithoutline
{

% confidence interval
\addplot[fill=red,draw=none,forget plot,opacity=0.3] table[x index=0,y
index=2, header=true, col sep=comma]
{\experimentprefix/product_confidence.txt};

% "median" curve itself
\addplot [
color=red,
solid,
style=thick,
mark=square,
mark repeat=20,
]
table[x index=0,y index=3, header=true, col sep=comma]
{\experimentprefix/product.txt};
\ifnum \showlegend=1
{
\addlegendentry{BCFW-fix}
}
\fi

}
\fi
% ---------------------------------------------------------------------------

% ---------------------------------------------------------------------------
% BCFW - Stochastic Frank-Wolfe with line-search

% confidence interval
\addplot[fill=blue,draw=none,forget plot,opacity=0.3] table[x index=0,y
index=2, header=true, col sep=comma]
{\experimentprefix/product-LS_confidence.txt};

% "median" curve itself
\addplot [
color=blue,
solid,
style=thick,
mark=none,
mark=triangle,
mark repeat=20,
]
table[x index=0,y index=3, header=true, col sep=comma]
{\experimentprefix/product-LS.txt};
\ifnum \showlegend=1
{
\addlegendentry{BCFW}
}
\fi
% ---------------------------------------------------------------------------

% ---------------------------------------------------------------------------
% BCFW with tail averaging / suffix averaging (BCFW-tavg)
\ifdefined \showtavg
{

\addplot [
color=cyan,
densely dashed,
style=thick,
mark=triangle*,
mark repeat=20,
mark options=solid
]
table[x index=0,y index=3, header=true, col sep=comma]
{\experimentprefix/product-LS-opt.txt};
\ifnum \showlegend=1
{
\addlegendentry{BCFW-tavg}
}
\fi
}
\fi
% ---------------------------------------------------------------------------
% BCFW with weighted averaging (BCFW-wavg)

% confidence interval
\addplot[fill=cyan,draw=none,forget plot,opacity=0.3] table[x index=0,y
index=2, header=true, col sep=comma]
{\experimentprefix/product-LS-wavg_confidence.txt};

% "median" curve itself
\addplot [
color=cyan,
densely dotted,
style=thick,
mark=square,
mark repeat=20,
mark options=solid
]
table[x index=0,y index=3, header=true, col sep=comma]
{\experimentprefix/product-LS-wavg.txt};
\ifnum \showlegend=1
{
\addlegendentry{BCFW-wavg}
}
\fi
% ---------------------------------------------------------------------------

% ---------------------------------------------------------------------------
% SSG / pegasos

% confidence interval
\addplot[fill=green,draw=none,forget plot,opacity=0.3] table[x index=0,y
index=2, header=true, col sep=comma]
{\experimentprefix/pegasos_confidence.txt};

\addplot [
color=green,
solid,
style=thick,
mark=o,
mark repeat=20,
]
table[x index=0,y index=3, header=true, col sep=comma]
{\experimentprefix/pegasos.txt};
\ifnum \showlegend=1
{
\addlegendentry{SSG}
}
\fi
% ---------------------------------------------------------------------------

% ---------------------------------------------------------------------------
% SSG with tail averaging / suffix averaging
\ifdefined \showtavg
{
\addplot [
color=purple,
densely dashed,
style=thick,
mark=diamond,
mark repeat=20,
]
table[x index=0,y index=3, header=true, col sep=comma]
{\experimentprefix/optimalSG.txt};
\ifnum \showlegend=1
{
\addlegendentry{SSG-tavg}
}
\fi
}
\fi

% ---------------------------------------------------------------------------
% SSG with weighted averaging

% confidence interval
\addplot[fill=purple,draw=none,forget plot,opacity=0.2] table[x index=0,y
index=2, header=true, col sep=comma]
{\experimentprefix/pegasos-wavg_confidence.txt};

% "median" curve itself
\addplot [
color=purple,
densely dotted,
style=thick,
mark=*,
mark repeat=20,
]
table[x index=0,y index=3, header=true, col sep=comma]
{\experimentprefix/pegasos-wavg.txt};
\ifnum \showlegend=1
{
\addlegendentry{SSG-wavg}
}
\fi
% ---------------------------------------------------------------------------

% ---------------------------------------------------------------------------
% pegasos-avg

% confidence interval
\addplot[fill=black,draw=none,forget plot,opacity=0.2] table[x index=0,y
index=2, header=true, col sep=comma]
{\experimentprefix/pegasos-avg_confidence.txt};

\addplot [
color=black,
solid,
style=thick,
mark=pentagon,
mark repeat=20,
]
table[x index=0,y index=3, header=true, col sep=comma]
{\experimentprefix/pegasos-avg.txt};
\ifnum \showlegend=1
{
\addlegendentry{SSG-avg}
}
\fi
% ---------------------------------------------------------------------------

% ---------------------------------------------------------------------------
% online exponentiated gradient
\ifdefined \showeg
{

% confidence interval
\addplot[fill=gray,draw=none,forget plot,opacity=0.2] table[x index=0,y
index=2, header=true, col sep=comma]
{\experimentprefix/OEG_confidence.txt};

\addplot [
color=gray,
solid,
style=thick,
mark=pentagon*,
mark repeat=20,
]
table[x index=0,y index=3, header=true, col sep=comma]
{\experimentprefix/OEG.txt};
\ifnum \showlegend=1
{
\addlegendentry{online-EG}
}
\fi
}
\fi
% ---------------------------------------------------------------------------

\end{axis}
\end{tikzpicture}
\normalsize
        \caption{$\lambda=0.01$.}
    \end{subfigure}
    \begin{subfigure}[t]{0.32\linewidth}
        \centering
        \def\xlabel{effective passes}
        \def\xmin{0.1}
        \def\xmax{30}
        \def\ymin{0.0001}
        \def\ymax{100}
        \def\xmode{log}
        \def\showlegend{0}
        \def\showtavg{1}
        \def\legendpos{north east}
        \def\experimentprefix{include/data/dataset=matching2_lambda=0.001000}
        \small
\begin{tikzpicture}[scale=0.63]

\begin{axis}[
xlabel=\xlabel,
ylabel=primal suboptimality for problem \eqref{eq:svmstruct_nslack_primal},
xmin=\xmin,
xmax=\xmax,
ymin=\ymin,
ymax=\ymax,
enlargelimits=false, area style,
ymode=log,
xmode=\xmode,
line legend,
legend pos=\legendpos,
]

% ---------------------------------------------------------------------------
% Stochastic Frank-Wolfe
\ifdefined \showsfwwithoutline
{

% confidence interval
\addplot[fill=red,draw=none,forget plot,opacity=0.3] table[x index=0,y
index=1, header=true, col sep=comma]
{\experimentprefix/product_confidence.txt};

% "median" curve itself
\addplot [
color=red,
solid,
style=thick,
mark=square,
mark repeat=20,
]
table[x index=0,y index=2, header=true, col sep=comma]
{\experimentprefix/product.txt};
\ifnum \showlegend=1
{
\addlegendentry{BCFW-fix}
}
\fi

}
\fi
% ---------------------------------------------------------------------------

% ---------------------------------------------------------------------------
% Stochastic Frank-Wolfe with line-search

% confidence interval
\addplot[fill=blue,draw=none,forget plot,opacity=0.3] table[x index=0,y
index=1, header=true, col sep=comma]
{\experimentprefix/product-LS_confidence.txt};

% "median" curve itself
\addplot [
color=blue,
solid,
style=thick,
mark=triangle,
mark repeat=20,
]
table[x index=0,y index=2, header=true, col sep=comma]
{\experimentprefix/product-LS.txt};
\ifnum \showlegend=1
{
\addlegendentry{BCFW}
}
\fi
% ---------------------------------------------------------------------------

% ---------------------------------------------------------------------------
% BCFW with tail averaging / suffix averaging (BCFW-tavg)
\ifdefined \showtavg
{
\addplot [
color=cyan,
densely dashed,
style=thick,
mark=triangle*,
mark repeat=20,
mark options=solid
]
table[x index=0,y index=2, header=true, col sep=comma]
{\experimentprefix/product-LS-opt.txt};
\ifnum \showlegend=1
{
\addlegendentry{BCFW-tavg}
}
\fi
}
\fi
% ---------------------------------------------------------------------------
% BCFW with weighted averaging (BCFW-wavg)

% confidence interval
\addplot[fill=cyan,draw=none,forget plot,opacity=0.3] table[x index=0,y
index=1, header=true, col sep=comma]
{\experimentprefix/product-LS-wavg_confidence.txt};

% "median" curve itself
\addplot [
color=cyan,
densely dotted,
style=thick,
mark=square,
mark repeat=20,
mark options=solid
]
table[x index=0,y index=2, header=true, col sep=comma]
{\experimentprefix/product-LS-wavg.txt};
\ifnum \showlegend=1
{
\addlegendentry{BCFW-wavg}
}
\fi
% ---------------------------------------------------------------------------

% ---------------------------------------------------------------------------
% SSG / pegasos

% confidence interval
\addplot[fill=green,draw=none,forget plot,opacity=0.3] table[x index=0,y
index=1, header=true, col sep=comma]
{\experimentprefix/pegasos_confidence.txt};

\addplot [
color=green,
solid,
style=thick,
mark=o,
mark repeat=20,
]
table[x index=0,y index=2, header=true, col sep=comma]
{\experimentprefix/pegasos.txt};
\ifnum \showlegend=1
{
\addlegendentry{SSG}
}
\fi
% ---------------------------------------------------------------------------

% ---------------------------------------------------------------------------
% SSG with tail averaging / suffix averaging
\ifdefined \showtavg
{
%% confidence interval
%\addplot[fill=purple,draw=none,forget plot,opacity=0.3] table[x index=0,y
%index=1, header=true, col sep=comma]
%{\experimentprefix/optimalSG_confidence.txt};

\addplot [
color=purple,
densely dashed,
style=thick,
mark=diamond,
mark repeat=20,
]
table[x index=0,y index=2, header=true, col sep=comma]
{\experimentprefix/optimalSG.txt};
\ifnum \showlegend=1
{
\addlegendentry{SSG-tavg}
}
\fi
}
\fi

% ---------------------------------------------------------------------------
% SSG with weighted averaging

% confidence interval
\addplot[fill=purple,draw=none,forget plot,opacity=0.2] table[x index=0,y
index=1, header=true, col sep=comma]
{\experimentprefix/pegasos-wavg_confidence.txt};

% "median" curve itself
\addplot [
color=purple,
densely dotted,
style=thick,
mark=*,
mark repeat=20,
]
table[x index=0,y index=2, header=true, col sep=comma]
{\experimentprefix/pegasos-wavg.txt};
\ifnum \showlegend=1
{
\addlegendentry{SSG-wavg}
}
\fi
% ---------------------------------------------------------------------------

% ---------------------------------------------------------------------------
% pegasos-avg

% confidence interval
\addplot[fill=black,draw=none,forget plot,opacity=0.3] table[x index=0,y
index=1, header=true, col sep=comma]
{\experimentprefix/pegasos-avg_confidence.txt};

\addplot [
color=black,
solid,
style=thick,
mark=pentagon,
mark repeat=20,
]
table[x index=0,y index=2, header=true, col sep=comma]
{\experimentprefix/pegasos-avg.txt};
\ifnum \showlegend=1
{
\addlegendentry{SSG-avg}
}
\fi
% ---------------------------------------------------------------------------

% ---------------------------------------------------------------------------
% online exponentiated gradient
\ifdefined \showeg
{

% confidence interval
\addplot[fill=gray,draw=none,forget plot,opacity=0.2] table[x index=0,y
index=1, header=true, col sep=comma]
{\experimentprefix/OEG_confidence.txt};

\addplot [
color=gray,
solid,
style=thick,
mark=pentagon*,
mark repeat=20,
]
table[x index=0,y index=2, header=true, col sep=comma]
{\experimentprefix/OEG.txt};
\ifnum \showlegend=1
{
\addlegendentry{online-EG}
}
\fi
}
\fi
% ---------------------------------------------------------------------------

\end{axis}

\end{tikzpicture}
\normalsize
        \def\xlabel{effective passes}
        \def\showlegend{0}
        \def\showtavg{1}
        \def\xmin{0.1}
        \def\xmax{30}
        \def\ymin{0.09}
        \def\ymax{0.15}
        \def\xmode{log}
        \def\experimentprefix{include/data/dataset=matching2_lambda=0.001000}
        \small
\begin{tikzpicture}[scale=0.63]

\pgfplotsset{y tick label style={ 
         scaled ticks=false, 
         /pgf/number format/fixed zerofill, 
         /pgf/number format/fixed, 
         /pgf/number format/precision=3, 
     }
}

\begin{axis}[
xlabel=\xlabel,
ylabel=test error,
xmin=\xmin,
xmax=\xmax,
ymin=\ymin,
ymax=\ymax,
xmode=\xmode,
enlargelimits=false, area style,
line legend,
]

% ---------------------------------------------------------------------------
% BCFW-fix - Stochastic Frank-Wolfe
\ifdefined \showsfwwithoutline
{

% confidence interval
\addplot[fill=red,draw=none,forget plot,opacity=0.3] table[x index=0,y
index=2, header=true, col sep=comma]
{\experimentprefix/product_confidence.txt};

% "median" curve itself
\addplot [
color=red,
solid,
style=thick,
mark=square,
mark repeat=20,
]
table[x index=0,y index=3, header=true, col sep=comma]
{\experimentprefix/product.txt};
\ifnum \showlegend=1
{
\addlegendentry{BCFW-fix}
}
\fi

}
\fi
% ---------------------------------------------------------------------------

% ---------------------------------------------------------------------------
% BCFW - Stochastic Frank-Wolfe with line-search

% confidence interval
\addplot[fill=blue,draw=none,forget plot,opacity=0.3] table[x index=0,y
index=2, header=true, col sep=comma]
{\experimentprefix/product-LS_confidence.txt};

% "median" curve itself
\addplot [
color=blue,
solid,
style=thick,
mark=none,
mark=triangle,
mark repeat=20,
]
table[x index=0,y index=3, header=true, col sep=comma]
{\experimentprefix/product-LS.txt};
\ifnum \showlegend=1
{
\addlegendentry{BCFW}
}
\fi
% ---------------------------------------------------------------------------

% ---------------------------------------------------------------------------
% BCFW with tail averaging / suffix averaging (BCFW-tavg)
\ifdefined \showtavg
{

\addplot [
color=cyan,
densely dashed,
style=thick,
mark=triangle*,
mark repeat=20,
mark options=solid
]
table[x index=0,y index=3, header=true, col sep=comma]
{\experimentprefix/product-LS-opt.txt};
\ifnum \showlegend=1
{
\addlegendentry{BCFW-tavg}
}
\fi
}
\fi
% ---------------------------------------------------------------------------
% BCFW with weighted averaging (BCFW-wavg)

% confidence interval
\addplot[fill=cyan,draw=none,forget plot,opacity=0.3] table[x index=0,y
index=2, header=true, col sep=comma]
{\experimentprefix/product-LS-wavg_confidence.txt};

% "median" curve itself
\addplot [
color=cyan,
densely dotted,
style=thick,
mark=square,
mark repeat=20,
mark options=solid
]
table[x index=0,y index=3, header=true, col sep=comma]
{\experimentprefix/product-LS-wavg.txt};
\ifnum \showlegend=1
{
\addlegendentry{BCFW-wavg}
}
\fi
% ---------------------------------------------------------------------------

% ---------------------------------------------------------------------------
% SSG / pegasos

% confidence interval
\addplot[fill=green,draw=none,forget plot,opacity=0.3] table[x index=0,y
index=2, header=true, col sep=comma]
{\experimentprefix/pegasos_confidence.txt};

\addplot [
color=green,
solid,
style=thick,
mark=o,
mark repeat=20,
]
table[x index=0,y index=3, header=true, col sep=comma]
{\experimentprefix/pegasos.txt};
\ifnum \showlegend=1
{
\addlegendentry{SSG}
}
\fi
% ---------------------------------------------------------------------------

% ---------------------------------------------------------------------------
% SSG with tail averaging / suffix averaging
\ifdefined \showtavg
{
\addplot [
color=purple,
densely dashed,
style=thick,
mark=diamond,
mark repeat=20,
]
table[x index=0,y index=3, header=true, col sep=comma]
{\experimentprefix/optimalSG.txt};
\ifnum \showlegend=1
{
\addlegendentry{SSG-tavg}
}
\fi
}
\fi

% ---------------------------------------------------------------------------
% SSG with weighted averaging

% confidence interval
\addplot[fill=purple,draw=none,forget plot,opacity=0.2] table[x index=0,y
index=2, header=true, col sep=comma]
{\experimentprefix/pegasos-wavg_confidence.txt};

% "median" curve itself
\addplot [
color=purple,
densely dotted,
style=thick,
mark=*,
mark repeat=20,
]
table[x index=0,y index=3, header=true, col sep=comma]
{\experimentprefix/pegasos-wavg.txt};
\ifnum \showlegend=1
{
\addlegendentry{SSG-wavg}
}
\fi
% ---------------------------------------------------------------------------

% ---------------------------------------------------------------------------
% pegasos-avg

% confidence interval
\addplot[fill=black,draw=none,forget plot,opacity=0.2] table[x index=0,y
index=2, header=true, col sep=comma]
{\experimentprefix/pegasos-avg_confidence.txt};

\addplot [
color=black,
solid,
style=thick,
mark=pentagon,
mark repeat=20,
]
table[x index=0,y index=3, header=true, col sep=comma]
{\experimentprefix/pegasos-avg.txt};
\ifnum \showlegend=1
{
\addlegendentry{SSG-avg}
}
\fi
% ---------------------------------------------------------------------------

% ---------------------------------------------------------------------------
% online exponentiated gradient
\ifdefined \showeg
{

% confidence interval
\addplot[fill=gray,draw=none,forget plot,opacity=0.2] table[x index=0,y
index=2, header=true, col sep=comma]
{\experimentprefix/OEG_confidence.txt};

\addplot [
color=gray,
solid,
style=thick,
mark=pentagon*,
mark repeat=20,
]
table[x index=0,y index=3, header=true, col sep=comma]
{\experimentprefix/OEG.txt};
\ifnum \showlegend=1
{
\addlegendentry{online-EG}
}
\fi
}
\fi
% ---------------------------------------------------------------------------

\end{axis}
\end{tikzpicture}
\normalsize
        \caption{$\lambda=0.001$.}
    \end{subfigure}
    \begin{subfigure}[t]{0.32\linewidth}
        \centering
        \def\xlabel{effective passes}
        \def\xmin{0.1}
        \def\xmax{30}
        \def\ymin{0.0001}
        \def\ymax{100}
        \def\xmode{log}
        \def\showlegend{0}
        \def\showtavg{1}
        \def\legendpos{north east}
        \def\experimentprefix{include/data/dataset=matching2_lambda=0.000200}
        \small
\begin{tikzpicture}[scale=0.63]

\begin{axis}[
xlabel=\xlabel,
ylabel=primal suboptimality for problem \eqref{eq:svmstruct_nslack_primal},
xmin=\xmin,
xmax=\xmax,
ymin=\ymin,
ymax=\ymax,
enlargelimits=false, area style,
ymode=log,
xmode=\xmode,
line legend,
legend pos=\legendpos,
]

% ---------------------------------------------------------------------------
% Stochastic Frank-Wolfe
\ifdefined \showsfwwithoutline
{

% confidence interval
\addplot[fill=red,draw=none,forget plot,opacity=0.3] table[x index=0,y
index=1, header=true, col sep=comma]
{\experimentprefix/product_confidence.txt};

% "median" curve itself
\addplot [
color=red,
solid,
style=thick,
mark=square,
mark repeat=20,
]
table[x index=0,y index=2, header=true, col sep=comma]
{\experimentprefix/product.txt};
\ifnum \showlegend=1
{
\addlegendentry{BCFW-fix}
}
\fi

}
\fi
% ---------------------------------------------------------------------------

% ---------------------------------------------------------------------------
% Stochastic Frank-Wolfe with line-search

% confidence interval
\addplot[fill=blue,draw=none,forget plot,opacity=0.3] table[x index=0,y
index=1, header=true, col sep=comma]
{\experimentprefix/product-LS_confidence.txt};

% "median" curve itself
\addplot [
color=blue,
solid,
style=thick,
mark=triangle,
mark repeat=20,
]
table[x index=0,y index=2, header=true, col sep=comma]
{\experimentprefix/product-LS.txt};
\ifnum \showlegend=1
{
\addlegendentry{BCFW}
}
\fi
% ---------------------------------------------------------------------------

% ---------------------------------------------------------------------------
% BCFW with tail averaging / suffix averaging (BCFW-tavg)
\ifdefined \showtavg
{
\addplot [
color=cyan,
densely dashed,
style=thick,
mark=triangle*,
mark repeat=20,
mark options=solid
]
table[x index=0,y index=2, header=true, col sep=comma]
{\experimentprefix/product-LS-opt.txt};
\ifnum \showlegend=1
{
\addlegendentry{BCFW-tavg}
}
\fi
}
\fi
% ---------------------------------------------------------------------------
% BCFW with weighted averaging (BCFW-wavg)

% confidence interval
\addplot[fill=cyan,draw=none,forget plot,opacity=0.3] table[x index=0,y
index=1, header=true, col sep=comma]
{\experimentprefix/product-LS-wavg_confidence.txt};

% "median" curve itself
\addplot [
color=cyan,
densely dotted,
style=thick,
mark=square,
mark repeat=20,
mark options=solid
]
table[x index=0,y index=2, header=true, col sep=comma]
{\experimentprefix/product-LS-wavg.txt};
\ifnum \showlegend=1
{
\addlegendentry{BCFW-wavg}
}
\fi
% ---------------------------------------------------------------------------

% ---------------------------------------------------------------------------
% SSG / pegasos

% confidence interval
\addplot[fill=green,draw=none,forget plot,opacity=0.3] table[x index=0,y
index=1, header=true, col sep=comma]
{\experimentprefix/pegasos_confidence.txt};

\addplot [
color=green,
solid,
style=thick,
mark=o,
mark repeat=20,
]
table[x index=0,y index=2, header=true, col sep=comma]
{\experimentprefix/pegasos.txt};
\ifnum \showlegend=1
{
\addlegendentry{SSG}
}
\fi
% ---------------------------------------------------------------------------

% ---------------------------------------------------------------------------
% SSG with tail averaging / suffix averaging
\ifdefined \showtavg
{
%% confidence interval
%\addplot[fill=purple,draw=none,forget plot,opacity=0.3] table[x index=0,y
%index=1, header=true, col sep=comma]
%{\experimentprefix/optimalSG_confidence.txt};

\addplot [
color=purple,
densely dashed,
style=thick,
mark=diamond,
mark repeat=20,
]
table[x index=0,y index=2, header=true, col sep=comma]
{\experimentprefix/optimalSG.txt};
\ifnum \showlegend=1
{
\addlegendentry{SSG-tavg}
}
\fi
}
\fi

% ---------------------------------------------------------------------------
% SSG with weighted averaging

% confidence interval
\addplot[fill=purple,draw=none,forget plot,opacity=0.2] table[x index=0,y
index=1, header=true, col sep=comma]
{\experimentprefix/pegasos-wavg_confidence.txt};

% "median" curve itself
\addplot [
color=purple,
densely dotted,
style=thick,
mark=*,
mark repeat=20,
]
table[x index=0,y index=2, header=true, col sep=comma]
{\experimentprefix/pegasos-wavg.txt};
\ifnum \showlegend=1
{
\addlegendentry{SSG-wavg}
}
\fi
% ---------------------------------------------------------------------------

% ---------------------------------------------------------------------------
% pegasos-avg

% confidence interval
\addplot[fill=black,draw=none,forget plot,opacity=0.3] table[x index=0,y
index=1, header=true, col sep=comma]
{\experimentprefix/pegasos-avg_confidence.txt};

\addplot [
color=black,
solid,
style=thick,
mark=pentagon,
mark repeat=20,
]
table[x index=0,y index=2, header=true, col sep=comma]
{\experimentprefix/pegasos-avg.txt};
\ifnum \showlegend=1
{
\addlegendentry{SSG-avg}
}
\fi
% ---------------------------------------------------------------------------

% ---------------------------------------------------------------------------
% online exponentiated gradient
\ifdefined \showeg
{

% confidence interval
\addplot[fill=gray,draw=none,forget plot,opacity=0.2] table[x index=0,y
index=1, header=true, col sep=comma]
{\experimentprefix/OEG_confidence.txt};

\addplot [
color=gray,
solid,
style=thick,
mark=pentagon*,
mark repeat=20,
]
table[x index=0,y index=2, header=true, col sep=comma]
{\experimentprefix/OEG.txt};
\ifnum \showlegend=1
{
\addlegendentry{online-EG}
}
\fi
}
\fi
% ---------------------------------------------------------------------------

\end{axis}

\end{tikzpicture}
\normalsize
        \def\xlabel{effective passes}
        \def\showlegend{0}
        \def\showtavg{1}
        \def\xmin{0.1}
        \def\xmax{30}
        \def\ymin{0.09}
        \def\ymax{0.15}
        \def\xmode{log}
        \def\experimentprefix{include/data/dataset=matching2_lambda=0.000200}
        \small
\begin{tikzpicture}[scale=0.63]

\pgfplotsset{y tick label style={ 
         scaled ticks=false, 
         /pgf/number format/fixed zerofill, 
         /pgf/number format/fixed, 
         /pgf/number format/precision=3, 
     }
}

\begin{axis}[
xlabel=\xlabel,
ylabel=test error,
xmin=\xmin,
xmax=\xmax,
ymin=\ymin,
ymax=\ymax,
xmode=\xmode,
enlargelimits=false, area style,
line legend,
]

% ---------------------------------------------------------------------------
% BCFW-fix - Stochastic Frank-Wolfe
\ifdefined \showsfwwithoutline
{

% confidence interval
\addplot[fill=red,draw=none,forget plot,opacity=0.3] table[x index=0,y
index=2, header=true, col sep=comma]
{\experimentprefix/product_confidence.txt};

% "median" curve itself
\addplot [
color=red,
solid,
style=thick,
mark=square,
mark repeat=20,
]
table[x index=0,y index=3, header=true, col sep=comma]
{\experimentprefix/product.txt};
\ifnum \showlegend=1
{
\addlegendentry{BCFW-fix}
}
\fi

}
\fi
% ---------------------------------------------------------------------------

% ---------------------------------------------------------------------------
% BCFW - Stochastic Frank-Wolfe with line-search

% confidence interval
\addplot[fill=blue,draw=none,forget plot,opacity=0.3] table[x index=0,y
index=2, header=true, col sep=comma]
{\experimentprefix/product-LS_confidence.txt};

% "median" curve itself
\addplot [
color=blue,
solid,
style=thick,
mark=none,
mark=triangle,
mark repeat=20,
]
table[x index=0,y index=3, header=true, col sep=comma]
{\experimentprefix/product-LS.txt};
\ifnum \showlegend=1
{
\addlegendentry{BCFW}
}
\fi
% ---------------------------------------------------------------------------

% ---------------------------------------------------------------------------
% BCFW with tail averaging / suffix averaging (BCFW-tavg)
\ifdefined \showtavg
{

\addplot [
color=cyan,
densely dashed,
style=thick,
mark=triangle*,
mark repeat=20,
mark options=solid
]
table[x index=0,y index=3, header=true, col sep=comma]
{\experimentprefix/product-LS-opt.txt};
\ifnum \showlegend=1
{
\addlegendentry{BCFW-tavg}
}
\fi
}
\fi
% ---------------------------------------------------------------------------
% BCFW with weighted averaging (BCFW-wavg)

% confidence interval
\addplot[fill=cyan,draw=none,forget plot,opacity=0.3] table[x index=0,y
index=2, header=true, col sep=comma]
{\experimentprefix/product-LS-wavg_confidence.txt};

% "median" curve itself
\addplot [
color=cyan,
densely dotted,
style=thick,
mark=square,
mark repeat=20,
mark options=solid
]
table[x index=0,y index=3, header=true, col sep=comma]
{\experimentprefix/product-LS-wavg.txt};
\ifnum \showlegend=1
{
\addlegendentry{BCFW-wavg}
}
\fi
% ---------------------------------------------------------------------------

% ---------------------------------------------------------------------------
% SSG / pegasos

% confidence interval
\addplot[fill=green,draw=none,forget plot,opacity=0.3] table[x index=0,y
index=2, header=true, col sep=comma]
{\experimentprefix/pegasos_confidence.txt};

\addplot [
color=green,
solid,
style=thick,
mark=o,
mark repeat=20,
]
table[x index=0,y index=3, header=true, col sep=comma]
{\experimentprefix/pegasos.txt};
\ifnum \showlegend=1
{
\addlegendentry{SSG}
}
\fi
% ---------------------------------------------------------------------------

% ---------------------------------------------------------------------------
% SSG with tail averaging / suffix averaging
\ifdefined \showtavg
{
\addplot [
color=purple,
densely dashed,
style=thick,
mark=diamond,
mark repeat=20,
]
table[x index=0,y index=3, header=true, col sep=comma]
{\experimentprefix/optimalSG.txt};
\ifnum \showlegend=1
{
\addlegendentry{SSG-tavg}
}
\fi
}
\fi

% ---------------------------------------------------------------------------
% SSG with weighted averaging

% confidence interval
\addplot[fill=purple,draw=none,forget plot,opacity=0.2] table[x index=0,y
index=2, header=true, col sep=comma]
{\experimentprefix/pegasos-wavg_confidence.txt};

% "median" curve itself
\addplot [
color=purple,
densely dotted,
style=thick,
mark=*,
mark repeat=20,
]
table[x index=0,y index=3, header=true, col sep=comma]
{\experimentprefix/pegasos-wavg.txt};
\ifnum \showlegend=1
{
\addlegendentry{SSG-wavg}
}
\fi
% ---------------------------------------------------------------------------

% ---------------------------------------------------------------------------
% pegasos-avg

% confidence interval
\addplot[fill=black,draw=none,forget plot,opacity=0.2] table[x index=0,y
index=2, header=true, col sep=comma]
{\experimentprefix/pegasos-avg_confidence.txt};

\addplot [
color=black,
solid,
style=thick,
mark=pentagon,
mark repeat=20,
]
table[x index=0,y index=3, header=true, col sep=comma]
{\experimentprefix/pegasos-avg.txt};
\ifnum \showlegend=1
{
\addlegendentry{SSG-avg}
}
\fi
% ---------------------------------------------------------------------------

% ---------------------------------------------------------------------------
% online exponentiated gradient
\ifdefined \showeg
{

% confidence interval
\addplot[fill=gray,draw=none,forget plot,opacity=0.2] table[x index=0,y
index=2, header=true, col sep=comma]
{\experimentprefix/OEG_confidence.txt};

\addplot [
color=gray,
solid,
style=thick,
mark=pentagon*,
mark repeat=20,
]
table[x index=0,y index=3, header=true, col sep=comma]
{\experimentprefix/OEG.txt};
\ifnum \showlegend=1
{
\addlegendentry{online-EG}
}
\fi
}
\fi
% ---------------------------------------------------------------------------

\end{axis}
\end{tikzpicture}
\normalsize
        \caption{$\lambda=1/n=0.0002$.}
    \end{subfigure}
    \caption{Convergence (top) and test error (bottom) of the stochastic
    solvers on the Matching dataset. %
    }
    \label{fig:matching_results}
\end{figure}

\clearpage
\paragraph{More Information about Implementation.}
We note that since the value of the true optimum is unknown, the primal suboptimality for each experiment was measured as the difference to the highest dual objective seen for the corresponding regularization parameter (amongst all methods). Moreover, the \emph{lower envelope} of the obtained  primal objective values was drawn in Figure~\ref{fig:results} for the batch methods (cutting plane and Frank-Wolfe), given that these methods can efficiently keep track of the best parameter seen so far.

The \emph{online-EG} method used the same adaptive step-size scheme as described in~\citet{Collins2008} and with the parameters from their code \texttt{egstra-0.2} available online.\footnote{\href{http://groups.csail.mit.edu/nlp/egstra/}{http://groups.csail.mit.edu/nlp/egstra/}} Each datapoint has their own step-size, initialized at $0.5$. Backtracking line-search is used, where the step-size is halved until the objective is decreased (or a maximum number of halvings has been reached: 2 for the first pass through the data; 5 otherwise). After each line-search, the step-size is multiplied by $1.05$. We note that each evaluation of the objective requires a new call to the (expectation) oracle, and we count these extra calls in the computation of the effective number of passes appearing on the x-axis of the plots. Unlike all the other methods which initialize $\weightv^{(0)}=\0$, \emph{online-EG} 
initially sets the dual variables $\dualvarv_{(i)}^{(0)}$ to a uniform distribution, which yields a problem-dependent initialization $\weightv^{(0)}$.

For \emph{SSG}, we used the same step-size as in the `Pegasos' version of~\citet{ShalevShwartz:2010cg}: $\gamma_k := \frac{1}{\lambda (k+1)}$.

For the \emph{cutting plane} method, we use the version 1.1 of the \texttt{svm-struct-matlab} MATLAB wrapper code from~\citetsup{vedaldi:11svmstruct} with its default options.

The test error for the OCR and CoNLL tasks is the normalized Hamming distance on the sequences. 

For the matching prediction task, we use the same setting from~\citet{Taskar06extrag}, with $5,000$ training examples and $347$ Gold test examples. During training, an asymmetric Hamming loss is used where the precision error cost is $1$ while the recall error cost is $3$. For testing, error is the `alignment error rate', as defined in~\citet{Taskar06extrag}.

\bigskip
\bibliographysup{references}
\bibliographystylesup{icml2013}

\end{document}